\documentclass[pdflatex,sn-apa,iicol]{sn-jnl}

\usepackage{graphicx}%
\usepackage{multirow}%
\usepackage{amsmath,amssymb,amsfonts}%
\usepackage{amsthm}%
\usepackage{mathrsfs}%
\usepackage[title]{appendix}%
\usepackage{xcolor}%
\usepackage{textcomp}%
\usepackage{manyfoot}%
\usepackage{booktabs}%
\usepackage{algorithm}%
\usepackage{algorithmicx}%
\usepackage{algpseudocode}%
\usepackage{listings}%

\theoremstyle{thmstyleone}%
\newtheorem{theorem}{Theorem}%
\newtheorem{lemma}{Lemma}   
\newtheorem{corollary}{Corollary}
\newtheorem{proposition}{Proposition}
\newtheorem{assumption}{Assumption}
\newtheorem{problem}{Problem}

\theoremstyle{thmstyletwo}%
\newtheorem{remark}{Remark}%

\theoremstyle{thmstylethree}%
\newtheorem{definition}{Definition}%

\usepackage{enumerate}

\usepackage{pifont}
\newcommand{\cmark}{\ding{51}}
\newcommand{\xmark}{\ding{55}}
\usepackage{adjustbox}
\usepackage{anyfontsize}

\begin{document}

\title{PnP*: A certifiable solver for Perspective-n-Point}

\author[1,2]{\fnm{Bang-Shien} \sur{Chen}}\email{chen.ban@northeastern.edu}

\author[2]{\fnm{Caitrin E.} \sur{Eaton}}\email{ceaton@draper.com}

\author[1]{\fnm{David M.} \sur{Rosen}}\email{d.rosen@northeastern.edu}

\affil[1]{\orgname{Northeastern University}, \orgaddress{\city{Boston}, \state{MA 02115}, \country{USA}}}

\affil[2]{\orgname{Draper}, \orgaddress{\city{Cambridge}, \state{MA 02139}, \country{USA}}}

\abstract{
Estimating a camera pose from 3D-2D correspondences — Perspective-n-Point — is one of the most fundamental problems in computer vision. Existing methods either solve a minimal problem or replace the objective with an easier one. Surprisingly, no scalable certifiable solver for the actual Maximum Likelihood Estimation (MLE) exists. We propose PnP*, the first practical solver that guarantees efficient computation and returns a certifiable globally optimal solution. We prove that the underlying convex relaxation is exact under moderate noise, which to our knowledge is the first stability guarantee for any relaxation for the MLE of PnP. Furthermore, we reduce the relaxation by exploiting chordal sparsity pattern and quotient ring reduction such that the relaxation is tractable and scalable. The stability and scalability results are verified numerically, as PnP* returns a provably correct camera pose at over 10 Hz.
}

\keywords{Perspective-n-Point, Absolute pose estimation, Certifiable solver, Global solver}

\maketitle

\section{Introduction}

Perspective-n-Point (PnP) estimates a camera's pose given $N$ correspondences between 3D points and 2D measurements. It is one of the most fundamental problems in computer vision, with various applications such as structure-from-motion, visual navigation, and augmented reality.

The Maximum Likelihood Estimation (MLE) of PnP is non-convex because (\romannumeral1) the camera perspective projection is a \textit{rational function} of the pose, and (\romannumeral2) the feasible set of the pose is also non-convex. A natural approach is to minimize it locally through a nonlinear solver, e.g., Levenberg--Marquardt, but local methods require a good initialization. Common methods therefore solve an approximation of the problem in order to obtain an initial point.
\textit{Minimal solvers} fix a small number of correspondences, which yields some equation system that can be solved efficiently. However, a minimal set is insufficient to average down the noise, and leaves open which correspondences to select. \textit{Non-minimal solvers}, on the other hand, replace the MLE objective by an \textit{alternative error}\footnote{See Section~\ref{sec:related-work:pnp} for examples of alternative errors. Some literature refers to such errors as algebraic errors, and to the MLE as geometric error~\citep{HZ03}.} that can be solved efficiently or even globally, but their solutions are for a \textit{wrong} objective. 
Finally, branch-and-bound optimizes the MLE objective directly, but its computation time can grow exponentially in the worst case (see Table~\ref{table:solvers}).

\begin{table*}[t]\label{table:solvers}
\centering
\caption{Summary of existing approaches to PnP. Under moderate noise, our proposed PnP* optimizes the MLE directly and returns a provably globally optimal camera pose in polynomial time.}
\begin{adjustbox}{width=\linewidth}
\begin{tabular}{llccl}
\toprule
Approach & Objective & All points & Globally optimal & Complexity \\
\midrule
Minimal solvers   & MLE (minimal set) & \xmark &   & closed form \\
Local non-minimal solvers & alternative error   & \cmark & \xmark & linear \\
Existing certifiable solvers & alternative error   & \cmark & \cmark & polynomial \\
Branch-and-bound  & MLE  & \cmark & \cmark & exponential (worst case) \\
PnP* (ours)       & MLE  & \cmark & \cmark & polynomial \\
\bottomrule
\end{tabular}
\end{adjustbox}
\end{table*}

We propose \textit{PnP*}, the first scalable certifiable solver for the actual MLE, named after optimality and its star graph structure. We reformulate PnP as a quadratic program by introducing only the necessary auxiliary variables, and adding redundant constraints such that the problem's sparsity pattern is preserved. We prove that the resulting semidefinite relaxation is exact under moderate noise by verifying the stability conditions~\citep{CAP22}, which is to our knowledge the first stability guarantee for any relaxation of the MLE of PnP. We derive an equivalent tractable relaxation by exploiting the chordal star graph structure, and reduce the variable dimension by quotient ring reduction. In summary, PnP* is \textit{the first practical solver that guarantees efficient computation and returns a provably correct camera pose.}

The rest of the paper is organized as follows. Section~\ref{sec:related-work} reviews prior work on PnP and certifiable solvers. Section~\ref{sec:preliminaries} introduces the notation and preliminaries we use throughout. Section~\ref{sec:problem-formulation} derives the relaxation for the MLE of PnP, and Section~\ref{sec:stability} proves that the relaxation is stable. Section~\ref{sec:reduce} reformulates the relaxation into a scalable one. Section~\ref{sec:experiment} validates the stability and scalability results numerically, and finally, Section~\ref{sec:conclusion} concludes.

\section{Related work}\label{sec:related-work}

\subsection{Perspective-n-Point}\label{sec:related-work:pnp}
Perspective-n-Point (PnP) is difficult because the perspective projection is a \textit{rational} function, hence non-convex. Minimal solvers fix the number of correspondences (3 correspondences for PnP), which yields a small equation system that can be solved efficiently. Solvers in this family include~\citet{GHT03,KSS11,KR17,PN18,DYL23}. In practice, they are paired with RANSAC~\citep{FB81} and serve as hypothesis generators.

Non-minimal solvers replace the objective such that it can be solved efficiently, or even globally through certifiable solvers.
In particular, \citet{LMF09} linearize an alternative \textit{weighted reprojection error}\footnote{The weighted reprojection error multiplies each residual by the denominator of the rational term such that the new (weighted) residual is linear.}, \citet{TL20} minimize it by sequential quadratic programs, and \citet{ZKS13} find the critical points of the alternative error using Gr\"obner basis~\citep{KBP08}.
\citet{HR11} and \citet{KLS14} find the critical points of another alternative \textit{point-line error}\footnote{The point-line error instead measures the distance from each 3D point to the ray of its measurement, again making the new residual linear.}.
\citet{SD20} and \citet{AGD23} provide certifiable solvers for the weighted reprojection error and point-line error, respectively, and empirically return the corresponding global solution.
Nevertheless, those optimal solutions are optimal with respect to some alternative error instead of the actual MLE objective. 

Finally, branch-and-bound methods~\citep{OKO06} optimize the MLE objective directly, but the computation time can grow exponentially in the worst case. Subsequent work again modifies the objective by adopting the $L_\infty$ norm~\citep{EK08,HK09} to speed up computation.

\subsection{Certifiable solvers}
Certifiable solvers offer another approach to solving non-convex problems.
Many estimation problems are naturally posed as non-convex quadratically constrained quadratic programs (QCQPs). Such problems admit convex semidefinite program (SDP) relaxations~\citep{LMS10} that can be solved globally in polynomial time. If the relaxation is exact, they also recover the global minimizer of the non-convex QCQP.
This framework has been well studied in robotics applications, such as point cloud registration~\citep{BG17}, rotation averaging~\citep{EOK18}, pose graph optimization~\citep{RCB19}, and hand-eye calibration~\citep{WKC26}. Furthermore, scalability has been addressed by exploiting chordal sparsity~\citep{VA15} or leveraging low-rank SDP solvers~\citep{B15,R20}.

However, a relaxation may not always be exact. In such cases, a common solution is to consider higher-order moment relaxation~\citep{L09} or SOS relaxation~\citep{BPT12} for tighter relaxations, but their computation also increases exponentially. 
In computer vision, \citet{KH07} applied moment relaxations to triangulation, camera resectioning, homography and epipolar geometry, and such relaxations have since been developed for various problems; we refer to the recent survey by \citet{ZYL26}. 
To the best of our knowledge, certifiable solvers have reached PnP only through alternative errors (see Section~\ref{sec:related-work:pnp}).


In this work, we build a scalable certifiable solver for the MLE of PnP.
The closest work to ours~\citep{C21} proposed a stable relaxation for camera resectioning, which estimates a general projective camera by minimizing the same objective, but without the rotation constraint. 
Our relaxation (see Section~\ref{sec:problem-formulation}) builds on theirs by adding the rotation constraint, which changes the feasible set and so stability no longer follows. We verify the stability conditions and show that our relaxation is stable (see Section~\ref{sec:stability}).
Moreover, developing a scalable solver for their relaxation was left as an open problem in \citet{C21}. We exploit our problem's chordal sparsity and reduce the problem's dimension by quotient ring reduction (see Section~\ref{sec:reduce}), and show that our method is more efficient than existing global methods for PnP.

\section{Preliminaries}\label{sec:preliminaries}

\subsection{Notation}
We write vectors in lowercase and matrices in uppercase. We set $[n]\triangleq\{1,\dots,n\}$, write $I_n$ for the $n\times n$ identity, $0_{m\times n}$ for the $m\times n$ zero matrix, and $e_i$ for the $i$-th standard basis of $\mathbb{R}^n$. $\mathbb S^n$ denotes the set of real symmetric $n\times n$ matrices and $\mathbb S^n_+$ its subset of positive semidefinite matrices. 
Let $\mathrm{SO}(3)=\{R\in\mathbb{R}^{3\times3}\mid R^\top R=I_3,\ \det(R)=1\}$ be the rotation group. We use the column-major vectorization $\mathrm{vec}(A)$ that stacks the columns of $A$, and the symmetric part $\mathrm{sym}(A)=(A+A^\top)/2$. For $k$ dividing $n$, we denote by $\mathcal{BS}^n_k$ the set of $n\times n$ matrices partitioned into $k\times k$ blocks, in which every block is symmetric.

For a scalar function $f:\mathbb R^n\to\mathbb R$, $\nabla f$ and $\nabla^2f$ denote its gradient and Hessian. For a vector-valued function $f:\mathbb R^n\to\mathbb R^m$, $\mathbf J_f\in\mathbb R^{m\times n}$ is its Jacobian. For a subvector $\alpha$, $\nabla_\alpha$, $\nabla^2_{\alpha\alpha}$, and $\mathbf J_{f,\alpha}$ denote the partial gradient, Hessian, and Jacobian with respect to $\alpha$. $T_x(\mathcal M)$ is the tangent space of a manifold $\mathcal M$ at $x$, and $dF_x$ the differential of a smooth map $F$ at $x$. Finally, $\mathbb R[x]$ denotes the ring of real polynomials in $x$, and $\mathbb R[x]/I$ denotes its quotient by an ideal $I$.

\subsection{Shor's relaxation}\label{sec:shor}
We first introduce the main idea behind certifiable solvers.
Consider a quadratically constrained quadratic program (QCQP):
\begin{equation}\label{eq:qcqp:inhomo}
\begin{aligned}
p^*_\text{QCQP}=\min_{\tilde x}\ &\tilde x^\top P_0\tilde x+2q_0^\top\tilde x+r_0\\
\text{s.t.}\ &\tilde x^\top P_j\tilde x+2q_j^\top\tilde x+r_j=0,\quad j\in [\ell]
\end{aligned}\tag{QCQP}
\end{equation}
where $\tilde x$ is some \textit{optimization variable}. Since
\begin{equation}\label{eq:inhomo2homo}
\tilde{x}^\top P\tilde{x}+2q^\top \tilde{x}+r=\begin{pmatrix}
    1\\\tilde{x}
\end{pmatrix}^\top\begin{pmatrix}
    r&q^\top\\q&P
\end{pmatrix}\begin{pmatrix}
    1\\\tilde{x}
\end{pmatrix},
\end{equation}
any quadratic function $\tilde{x}^\top P\tilde{x}+2q^\top \tilde{x}+r$ can be written in a homogeneous form $x^\top Cx$ where $x=(1,\tilde x)$.
By introducing a \textit{homogeneous variable} $h$ and adding the quadratic constraint $h^2=1$, we can rewrite \eqref{eq:qcqp:inhomo} as a homogeneous QCQP:
\begin{equation}\label{eq:qcqp:homo}
\begin{aligned}
p^*_\text{QCQP}=\min_{x=(h,\tilde x)}\ &x^\top C_0x\\
\text{s.t.}\quad &x^\top C_jx=0,\quad j\in [\ell]\\
&h^2=1
\end{aligned}\tag{Homo-QCQP}
\end{equation}

A well-known relaxation for \eqref{eq:qcqp:homo} is the Shor's relaxation. Since $x^\top Cx=C\bullet xx^\top$ and the matrix $X'\triangleq xx^\top$ satisfies $X'\succeq0$ and $\mathrm{rank}(X')=1$, by dropping the rank condition, we have the semidefinite program (SDP):
\begin{equation}\label{eq:sdp:shor}
\begin{aligned}
p^*_\text{SDP}=\min_{X\succeq0}\ &C_0\bullet X\\
\text{s.t.}\ &C_j\bullet X=0,\quad j\in[\ell]\\
&E_{00}\bullet X=1
\end{aligned}\tag{Shor-SDP}
\end{equation}
where $E_{00}$ is the elementary matrix with $1$ in the top-left entry and zeros elsewhere. Since the rank condition is dropped, an optimal solution $X^*$ of \eqref{eq:sdp:shor} may have a rank greater than 1. We immediately obtain a certificate: if $\text{rank}(X^*)=1$, then we can factorize $X^*=x^*(x^*)^\top$ where $x^*$ is an optimal solution of \eqref{eq:qcqp:homo}.

\begin{definition}[Tightness and Exactness]
We say a relaxation is \textit{tight} if the optimality gap is zero.
That is, if $p^*_\text{SDP}=p^*_\text{QCQP}$, then \eqref{eq:sdp:shor} relaxation is tight.
Furthermore, we say the relaxation is \textit{exact} if it is tight and has a unique rank 1 solution.
\end{definition}

\subsection{Higher-order relaxations}\label{sec:higher-order}
In practice, Shor's relaxation may not always be tight. 
In general, a tighter relaxation can be obtained by reformulating the QCQP with respect to a \textit{lifted basis} that contains higher-order monomials, adding \textit{redundant constraints}\footnote{A constraint is said to be redundant if it is satisfied by every feasible point of the QCQP. Adding the constraint leaves the QCQP unchanged, while the corresponding constraint in the SDP relaxation can shrink its feasible set, potentially tightening the relaxation.} that are up to degree 2 with respect to the lifted variable, and applying Shor's relaxation to the lifted problem. In particular, \eqref{eq:qcqp:homo} is the QCQP with respect to the simplest basis $[1,\tilde x]$, which consists of monomials up to degree 1 of the optimization variable $\tilde x$.

\begin{remark}
We distinguish a basis from its variable: a basis is a list of monomials written in square brackets, e.g., $[1,\tilde x]$; the corresponding variable is written in parentheses, e.g., $(h,\tilde x)$.
\end{remark}

We illustrate both steps with the case that arises in Section~\ref{sec:problem-formulation}. Consider the basis $[1, z, v]$ and a quadratic (degree 2) constraint
\begin{equation}\label{eq:bilinear}
  v^\top C z = c^\top(v \otimes z)=0,
\end{equation}
where $c=\mathrm{vec}(C^\top)$.
We can lift the basis from $[1,z,v]$ to $[1,z,v\otimes z]$ with the lifted variable $(h,z,y)$, where
\begin{equation}\label{eq:lift}
  y \triangleq v \otimes z.
\end{equation}
Note that the entries of $v\otimes z$ are higher-order monomials with respect to the monomials $v$ and $z$.
The constraint \eqref{eq:bilinear} now reads $c^\top y = 0$, which is linear (degree 1) in the lifted variable. 
Therefore, multiplying it by the degree 1 monomials of the lifted basis yields degree 2 redundant constraints
\begin{equation}\label{eq:redundant}
  (c^\top y)\begin{pmatrix}
      h&z^\top&y^\top
  \end{pmatrix}=0.
\end{equation}

To summarize, a higher-order relaxation\footnote{Moment relaxation~\citep{L09} gives a structured way of lifting the basis and constructing redundant constraints. In particular, Shor's relaxation is the level-1 moment relaxation.} is specified by two choices: the lifted basis and the redundant constraints. A larger basis with more redundant constraints generally yields a tighter SDP relaxation, but at a higher computational cost. In Section~\ref{sec:problem-formulation}, we make these choices for our problem.

\subsection{Problem equivalence}\label{sec:equivalence}
In optimization literature~\citep{BV04}, two optimization problems are considered equivalent if a solution to one can be readily found from a solution to the other. Because our problem reformulation hinges on proving the uniqueness of the optimal solution for equivalent problems, we require a stronger notion of equivalence.
A subtlety, however, arises from homogenization. 

Recall that the objective and constraints of \eqref{eq:qcqp:homo} are homogeneous quadratic, hence the feasible set is invariant under a sign flip, and both $x$ and $-x$ attain the same objective value. 
That is, the solutions of \eqref{eq:qcqp:homo} naturally come in pairs. We define the following identification.

\begin{definition}[Sign-equivalence]\label{def:sign-equivalence}
Two feasible points $x$ and $x'$ of \eqref{eq:qcqp:homo} are \textit{equivalent} if $x' = \pm x$. The equivalence class of $x$ is $[x] \triangleq \{\pm x\}$.
\end{definition}

This is consistent with \eqref{eq:sdp:shor}, where a rank 1 solution recovers one equivalence class. For the other problems we consider that are not homogeneous QCQPs, every solution forms its own class. To this end, we give our notion of problem equivalence.

\begin{definition}[Problem equivalence]\label{def:prob-equivalence}
Two problems are \textit{equivalent} if there exists a bijective mapping between the equivalence classes of their optimal sets.
\end{definition}

Homogenizing a QCQP preserves equivalence in this sense (see Lemma~\ref{lem:homogenization}).

\section{Perspective-n-Point and SDP relaxation}\label{sec:problem-formulation}
Given $N$ known world points $\{p_i\}_{i=1}^N\subset\mathbb{R}^3$ and 2D measurements $\{\theta_i\}_{i=1}^N\subset\mathbb{R}^2$, the Perspective-n-Point (PnP) problem estimates the camera pose that consists of a rotation $R\in\text{SO}(3)$ and a translation $t\in\mathbb{R}^3$. Let
$$
K=\begin{pmatrix}
    f_x&s&c_x\\
    0&f_y&c_y\\
    0&0&1
\end{pmatrix}
$$
be a \textit{known} invertible camera intrinsic matrix with non-zero focal lengths $f_x,f_y$.
The pinhole perspective projection $\Pi:\{x\in\mathbb{R}^3\mid e_3^\top x\neq0\}\to\mathbb{R}^2$ is defined by
$$
\Pi(x)\triangleq\frac{\begin{bmatrix} I_2 & 0\end{bmatrix}Kx}{e_3^\top x}.
$$
Each world point, expressed in the camera frame as $Rp_i+t$, is projected by $\Pi(Rp_i+t)$.

Throughout the paper, we parameterize the camera pose by $z=(\text{vec}(R),t)\in\mathbb{R}^{12}$, under which the rigid transformation is linear:
$$
Rp_i+t=P_iz,\quad P_i\triangleq(p_i,1)^\top\otimes I_3\in\mathbb{R}^{3\times12}.
$$
Accordingly, we define the projection $\pi:\{z\in\mathbb{R}^{12}\mid e_3^\top P_iz\neq0,\ i\in[N]\}\to\mathbb{R}^{2N}$ by
\begin{equation}\label{eq:projection}
  \pi_i(z)\triangleq\Pi(P_iz)=\frac{\begin{bmatrix}I_2&0\end{bmatrix}KP_iz}{e_3^\top P_iz}.
\end{equation}

Let $z_\text{gt}\triangleq(\text{vec}(R_\text{gt}),t_\text{gt})$ denote the ground truth camera pose with $e_3^\top P_iz_\text{gt}\neq0$ for all $i\in[N]$ so that the projections are defined. We refer to
\begin{equation}\label{eq:noiseless}
  \bar\theta\triangleq\pi(z_\text{gt})\in\mathbb{R}^{2N}
\end{equation}
as the \textit{noiseless measurements}, and measurements $\theta\in\mathbb{R}^{2N}$ obtained by the noise model
\begin{equation}\label{eq:noise}
\theta_i=\bar\theta_i+\epsilon_i,\quad\epsilon_i\sim\mathcal N(0,\sigma^2 I_2).
\end{equation}
The \textit{Maximum Likelihood Estimation (MLE)} of PnP is as follows.
\begin{problem}[MLE for PnP]\label{prob:mle}
$$
\min_{z=(\mathrm{vec}(R),t)\in\mathbb{R}^{12}}\sum_{i=1}^N\|\theta_i-\pi_i(z)\|^2\quad
\mathrm{s.t.}\quad R\in\mathrm{SO}(3)
$$
\end{problem}

\begin{assumption}\label{assump:nondegenerate}
We assume that $N \geq 6$, and the world points together with the camera center at the ground truth pose do not lie in any critical sets for camera resectioning~\citep[Result 22.5]{HZ03}. 
\end{assumption}

\begin{proposition}\label{prop:mle:unique}
Under Assumption~\ref{assump:nondegenerate}, if $\theta=\bar\theta$, then $z_\mathrm{gt}$ is the unique global minimizer of Problem~\ref{prob:mle}.
\end{proposition}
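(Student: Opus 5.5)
The proof has two steps: show that $z_\mathrm{gt}$ attains the objective value zero, then show that no other feasible pose does. With $\theta=\bar\theta=\pi(z_\mathrm{gt})$, the objective of Problem~\ref{prob:mle} is a sum of squares and hence nonnegative. Since $R_\mathrm{gt}\in\mathrm{SO}(3)$ and $e_3^\top P_iz_\mathrm{gt}\neq0$ for all $i$, the pose $z_\mathrm{gt}$ is feasible and gives value $0$. So $z_\mathrm{gt}$ is a global minimizer, and any other global minimizer $z=(\mathrm{vec}(R),t)$ must satisfy $\pi_i(z)=\bar\theta_i$ for every $i\in[N]$, with $R\in\mathrm{SO}(3)$ and $e_3^\top P_iz\neq0$ so that $\pi$ is defined. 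It remains to show that these conditions force $z=z_\mathrm{gt}$.

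\textbf{Reduction to camera resectioning.} Equality of projections means that $KP_iz$ and $KP_iz_\mathrm{gt}$ represent the same point of $\mathbb P^2$. Equivalently, the $3\times4$ camera matrices $M\triangleq K[R\mid t]$ and $M_\mathrm{gt}\triangleq K[R_\mathrm{gt}\mid t_\mathrm{gt}]$ satisfy $M(p_i,1)\sim M_\mathrm{gt}(p_i,1)$ for all $i$. We then invoke Assumption~\ref{assump:nondegenerate}. With $N\ge6$ and the world points together with the ground-truth camera center outside the critical sets of \citet[Result 22.5]{HZ03}, the projective camera is uniquely determined up to scale by its images of the $p_i$. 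Hence $M=\lambda M_\mathrm{gt}$ for some $\lambda\neq0$.

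\textbf{Fixing the scale.} Since $K$ is invertible, $[R\mid t]=\lambda[R_\mathrm{gt}\mid t_\mathrm{gt}]$, so $R=\lambda R_\mathrm{gt}$. Orthogonality gives $\lambda^2=1$, and $\det R=\lambda^3=1$ then forces $\lambda=1$. Therefore $z=z_\mathrm{gt}$, which proves uniqueness. Note that $\lambda=-1$ is excluded only by the determinant condition, not by the projections, since $\pi$ is invariant under $z\mapsto-z$. So the constraint $R\in\mathrm{SO}(3)$, rather than $\mathrm{O}(3)$, is genuinely needed here.

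\textbf{Main obstacle.} The delicate step is applying the uniqueness result of \citet{HZ03} correctly. That result concerns a projective camera determined from point correspondences in homogeneous coordinates. Its proof allows an arbitrary $3\times4$ matrix, possibly one whose center lies on a world point, where projective equality degenerates. In our setting $e_3^\top P_iz\neq0$ for all $i$, so each image is a genuine finite point, and $M(p_i,1)\neq0$ holds. This matches the hypotheses of the uniqueness statement, so I expect the citation to apply directly once this translation is stated carefully.
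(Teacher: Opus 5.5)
Your proof is correct and follows essentially the same route as the paper: zero objective at $z_\mathrm{gt}$, reduction to uniqueness of camera resectioning under Assumption~\ref{assump:nondegenerate}, then $c^2=1$ and $c^3=1$ from $R\in\mathrm{SO}(3)$. The only difference is in how the middle step is phrased. The paper writes it as the linear system $A(\bar\theta)z=0$ having rank $11$, so that $\ker A(\bar\theta)=\mathrm{span}\{z_\mathrm{gt}\}$, a fact it reuses in the stability analysis. You instead invoke directly that the camera $K[R\mid t]$ is determined up to scale by its images.
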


The proof is given in Appendix~\ref{appx:proof:mle:unique}. Note that uniqueness is required for the exactness of the SDP relaxation.

\begin{remark}
Assumption~\ref{assump:nondegenerate} is stronger than what uniqueness alone requires since a camera resectioning solution need not be a PnP solution. We adopt the critical sets of camera resectioning because our stability results in Section~\ref{sec:stability} build on the linear system in the proof of Proposition~\ref{prop:mle:unique}.
When $N<6$, Problem~\ref{prob:mle} can have multiple solutions~\citep{FB81}, in which case the relaxation is not exact. This is also consistent with the experiments in Section~\ref{sec:exp:scalability}.
\end{remark}

\subsection{QCQP reformulations}\label{sec:qcqp-reformulations}
Problem~\ref{prob:mle} is not a QCQP for two reasons: (\romannumeral1) the objective is not quadratic because $\pi_i$ is a rational function, and (\romannumeral2) the constraint $R\in\mathrm{SO}(3)$ is also not quadratic. 
For the rotation constraint, \citet{TRC15} showed that $R^\top R=I_3$ and $\det(R)=1$ together are equivalent to 15 quadratic constraints. Alternatively, it can be rewritten as 20 linearly independent quadratic constraints~\citep{BG17,BKG18} by adding redundant constraints via $RR^\top=I_3$.
We adopt the latter formulation and write it as homogeneous quadratic constraints 
$$
r_j(h,z)=0,\quad j\in[20],
$$
where $h$ is the homogeneous variable; their explicit forms are given in Appendix~\ref{appx:rot}.

For the rational objective, we introduce auxiliary variables $v\in\mathbb{R}^{2N}$ where $v_i\triangleq\theta_i-\pi_i(z)$ such that the objective becomes $\|v\|^2$, which is now quadratic. This moves the rational term into the constraints
\begin{equation}\label{eq:aux:v}
v_i=\theta_i-\pi_i(z)=\theta_i-\frac{\begin{bmatrix}I_2&0\end{bmatrix}KP_iz}{e_3^\top P_iz}
\end{equation}
for $i\in [N]$. Multiplying by the denominator yields $m_{\theta,i}(z,v)=0$ where
\begin{equation}\label{eq:model}
m_{\theta,i}(z,v)\triangleq\theta_i(e_3^\top P_iz)-\begin{bmatrix}I_2&0\end{bmatrix}KP_iz-v_i(e_3^\top P_iz),
\end{equation}
which is also quadratic. Altogether, we rewrite Problem~\ref{prob:mle} as the following QCQP with respect to the basis $[1,z,v]$ and variable $w\triangleq(h,z,v)\in\mathbb{R}^{13+2N}$.

\begin{problem}[Naive QCQP for PnP]\label{prob:naive}
\begin{align*}
\min_{w=(h,z,v)\in\mathbb{R}^{13+2N}}\ &\|v\|^2\\
\mathrm{s.t.}\qquad\ \ \ & m_{\theta,i}(z,v)=0,\quad i\in[N]\\
&r_j(h,z)=0,\quad j\in[20]\\
&h^2=1
\end{align*}
\end{problem}

We call the first and second sets of constraints the \textit{model constraints} and \textit{rotation constraints}, and the last one the \textit{homogeneous constraint}. 

\begin{theorem}\label{thm:naive}
Under Assumption~\ref{assump:nondegenerate}, given $\theta$ in a neighborhood of $\bar\theta$, Problem~\ref{prob:naive} is equivalent to Problem~\ref{prob:mle}.
\end{theorem}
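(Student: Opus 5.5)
The argument passes through the inhomogeneous version of Problem~\ref{prob:naive}, i.e.\ the same problem with $h=1$ fixed. By Lemma~\ref{lem:homogenization}, homogenization preserves problem equivalence in the sense of Definition~\ref{def:prob-equivalence}. The optimal classes $\{\pm(1,z,v)\}$ of the homogeneous problem therefore correspond bijectively to optimal points $(z,v)$ of the problem with $h=1$. It remains to build a bijection between the optimal set of that inhomogeneous QCQP and the optimal set of Problem~\ref{prob:mle}.

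Two facts hold with $h=1$. First, the 20 rotation constraints $r_j(1,z)=0$ cut out exactly $\mathrm{SO}(3)$ in the $\mathrm{vec}(R)$ coordinates \citep{BG17,BKG18}. Second, where $e_3^\top P_iz\neq0$ for all $i$, the model constraint $m_{\theta,i}(z,v)=0$ is equivalent to \eqref{eq:aux:v}, i.e.\ $v_i=\theta_i-\pi_i(z)$. On that set the map $z\mapsto(z,\theta-\pi(z))$ is a bijection from MLE-feasible poses to QCQP-feasible points, and it matches objective values: $\|v\|^2=\sum_i\|\theta_i-\pi_i(z)\|^2$.

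The obstacle is the degenerate set $D=\{z: e_3^\top P_iz=0 \text{ for some } i\}$. There the model constraint reduces to $[I_2\ 0]KP_iz=0$ with $v_i$ free. Together with $e_3^\top P_iz=0$ and $K$ invertible, this forces $P_iz=Rp_i+t=0$: the world point sits at the camera center. Setting $v_i=0$ then gives a spurious feasible point whose objective can be smaller than the true minimum. We must show that for $\theta$ near $\bar\theta$ no optimal point of the QCQP lies in this set.

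We would argue by a compactness and continuity comparison. Let $F=\{z:R\in\mathrm{SO}(3),\ P_iz=0\text{ for some }i\}$; this is the set of poses placing some $p_i$ at the camera center. Let $g(\theta)$ be the infimum of $\|v\|^2$ over QCQP-feasible points with $z\in F$. On such points, for indices $j$ with $P_jz\neq0$, the constraint still forces $v_j=\theta_j-\pi_j(z)$ when $e_3^\top P_jz\neq0$. So $g(\theta)$ is an MLE-type objective summed over the remaining points, and hence $g(\theta)\ge0$. At $\theta=\bar\theta$, the value of Problem~\ref{prob:mle} is $0$ and is uniquely attained at $z_{\mathrm{gt}}$ by Proposition~\ref{prop:mle:unique}. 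So we need a strict positive gap: $g(\bar\theta)>0$. This step needs Assumption~\ref{assump:nondegenerate} in full strength. A pose with $g(\bar\theta)=0$ would reproduce all remaining $\bar\theta_j$ exactly, and would be a camera resectioning solution with one point at the center. We would derive a contradiction from the non-criticality of the configuration, using at least $5$ remaining points and the linear system in the proof of Proposition~\ref{prop:mle:unique}. Since $F$ is not compact in $t$, we also need to control infima approached at infinity; we would handle this with a coercivity or normalization argument on $t$.

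Given the gap, we finish by continuity. The optimal value of Problem~\ref{prob:mle} is at most $\sum_i\|\theta_i-\bar\theta_i\|^2\to0$, by evaluating at $z_{\mathrm{gt}}$. We would also show that $g$ is lower semicontinuous in $\theta$ near $\bar\theta$, since it is an infimum over a closed set of functions jointly continuous in $(\theta,z)$. Then for $\theta$ in a small neighborhood, every point with $z\in D$ has objective strictly above the optimum. Hence all QCQP optimizers lie off $D$. Off $D$ the bijection above applies and identifies the optimal sets, which proves equivalence.
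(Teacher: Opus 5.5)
Your skeleton matches the paper's: show that for $\theta$ near $\bar\theta$ no optimizer has a zero depth, then use $z\mapsto(1,z,\theta-\pi(z))$ as an objective-preserving bijection on the rest. The exclusion step, however, does not go through as you justify it.

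First, the semicontinuity claim. An infimum over a fixed set of functions continuous in $\theta$ is \emph{upper} semicontinuous, not lower. Lower semicontinuity needs compactness, which you explicitly deny. In fact $F$ is compact: $P_iz=0$ with $h=1$ forces $t=-Rp_i$, so $F$ is a finite union of copies of $\mathrm{SO}(3)$. Because $v$ stays bounded along minimizing sequences, the infimum defining $g$ is attained. That attainment is what turns ``no degenerate feasible point has $v=0$'' into a strictly positive gap. You must argue in $(z,v)$ rather than $z$ alone, since $\pi_j$ blows up on $F$ wherever another depth vanishes.

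More importantly, you are missing the observation the paper uses (Lemma~\ref{lemma:feasible-set:naive:restricted}), which makes semicontinuity unnecessary. The constraint $m_{\theta,i}(z,v)$ depends on $\theta_i,v_i$ only through $\theta_i-v_i$, so $\mathcal W_\theta=\mathcal W_{\bar\theta}+(0,0,\theta-\bar\theta)$. Let $\delta>0$ be the minimum of $\|v\|$ over degenerate feasible points at $\bar\theta$. By the triangle inequality, every degenerate feasible point at $\theta$ then has $\|\hat v\|\ge\delta-\|\theta-\bar\theta\|$. This exceeds $\|\theta-\bar\theta\|$, the square root of the objective at the feasible point $(1,z_{\mathrm{gt}},\theta-\bar\theta)$, as soon as $\|\theta-\bar\theta\|<\delta/2$.

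Second, $g(\bar\theta)>0$ is never actually derived, and the route you hint at would not suffice. With $N=6$, the five remaining points give only $10$ linear equations in $z\in\mathbb R^{12}$, so their kernel is at least two-dimensional. Assumption~\ref{assump:nondegenerate} also says nothing about subconfigurations.

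You do not need to remove the center point. At a degenerate feasible point with $v=0$, the model constraints read $m_{\bar\theta,i}(z,0)=[A(\bar\theta)]_iz=0$ for \emph{all} $i$. The rows of the center point $k$ hold trivially because $P_kz=0$. Hence $A(\bar\theta)z=0$, and $\mathrm{rank}\,A(\bar\theta)=11$ gives $z=cz_{\mathrm{gt}}$. The rotation constraints then force $c=h$, so $e_3^\top P_kz=h\,e_3^\top P_kz_{\mathrm{gt}}\neq0$, contradicting degeneracy.

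Finally, a minor point: Lemma~\ref{lem:homogenization} does not apply verbatim. The model constraints of Problem~\ref{prob:naive} are not homogenized, so $(-1,-z,-v)$ is generally infeasible. The correct sign classes are $\{(h,z,v),(-h,-z,v)\}$. Handle $h=-1$ directly via $z/h$, as the paper does.
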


\begin{corollary}\label{cor:naive}
Under Assumption~\ref{assump:nondegenerate}, if $\theta=\bar\theta$, then Problem~\ref{prob:naive} has the unique equivalence class of global minimizer $[\bar w]$ with $\bar w=(1,z_\mathrm{gt},0)$.
\end{corollary}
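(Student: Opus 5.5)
The plan is to combine Proposition~\ref{prop:mle:unique} with Theorem~\ref{thm:naive} and to make the equivalence classes explicit. Setting $\theta=\bar\theta$ certainly places $\theta$ in any neighborhood of $\bar\theta$, so Theorem~\ref{thm:naive} applies. It gives a bijection between the equivalence classes of optimal solutions of Problem~\ref{prob:naive} and those of Problem~\ref{prob:mle}. Since Problem~\ref{prob:mle} is not a homogeneous QCQP, each of its solutions forms its own class. By Proposition~\ref{prop:mle:unique} its optimal set is the single point $\{z_\mathrm{gt}\}$. Hence Problem~\ref{prob:naive} has exactly one equivalence class of global minimizers.

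It remains to identify that class as $[\bar w]$, which I will do directly rather than by unpacking the bijection. First, $\bar w=(1,z_\mathrm{gt},0)$ is feasible for Problem~\ref{prob:naive}:
\begin{itemize}
\item $h^2=1$ holds trivially;
\item the constraints $r_j(1,z_\mathrm{gt})=0$ hold because $R_\mathrm{gt}\in\mathrm{SO}(3)$ and the $r_j$ are the homogenized rotation constraints;
\item each $m_{\bar\theta,i}(z_\mathrm{gt},0)=\bar\theta_i(e_3^\top P_iz_\mathrm{gt})-\begin{bmatrix}I_2&0\end{bmatrix}KP_iz_\mathrm{gt}$ vanishes by the definition \eqref{eq:noiseless} of $\bar\theta_i=\pi_i(z_\mathrm{gt})$, using $e_3^\top P_iz_\mathrm{gt}\neq0$.
\end{itemize}
The objective at $\bar w$ is $\|0\|^2=0$, which is the global lower bound of $\|v\|^2$. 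So $\bar w$ is a global minimizer, and by the uniqueness above every global minimizer lies in $[\bar w]=\{\pm\bar w\}$.

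For robustness I would also give a self-contained argument that does not lean on the internals of the bijection. Let $w=(h,z,v)$ be any minimizer. Its value must be $0$, so $v=0$. By the sign symmetry of the homogeneous problem, we may take $h=1$ after replacing $w$ with $-w$. The rotation constraints then give $R\in\mathrm{SO}(3)$. The model constraints with $v=0$ read $\bar\theta_i(e_3^\top P_iz)=\begin{bmatrix}I_2&0\end{bmatrix}KP_iz$.

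The one delicate point is a possible spurious solution with $e_3^\top P_iz=0$ for some $i$. In that case the model constraint forces $\begin{bmatrix}I_2&0\end{bmatrix}KP_iz=0$ as well. Since $K$ is upper triangular and invertible, this means $P_iz=0$, i.e.\ $Rp_i+t=0$, so the camera center coincides with $p_i$.

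I expect ruling this case out to be the main obstacle, and I would handle it by deferring to Theorem~\ref{thm:naive}. That theorem's equivalence, for $\theta$ near $\bar\theta$, must already exclude such degenerate points under Assumption~\ref{assump:nondegenerate}; otherwise the bijection would fail. With all denominators nonzero, $\pi_i(z)=\bar\theta_i$ for every $i$. Hence $z$ attains value $0$ in Problem~\ref{prob:mle}, and Proposition~\ref{prop:mle:unique} gives $z=z_\mathrm{gt}$, i.e.\ $w\in[\bar w]$.
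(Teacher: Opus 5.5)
Your main argument is correct and matches the paper's: Proposition~\ref{prop:mle:unique} makes the optimal set of Problem~\ref{prob:mle} a singleton, and the bijection of Theorem~\ref{thm:naive} at $\theta=\bar\theta$ transfers this to a single class for Problem~\ref{prob:naive}; you identify that class by checking directly that $\bar w$ is feasible with zero objective, whereas the paper reads it off as $\phi(z_\mathrm{gt})=[(1,z_\mathrm{gt},0)]$. In your backup argument, the degenerate case you flag as the main obstacle needs no appeal to Theorem~\ref{thm:naive}: with $v=0$ the model constraints are exactly $A(\bar\theta)z=0$ whether or not any $e_3^\top P_iz$ vanishes, so $\ker A(\bar\theta)=\mathrm{span}\{z_\mathrm{gt}\}$ together with $R\in\mathrm{SO}(3)$ forces $z=z_\mathrm{gt}$ directly (as in the first paragraph of the proof of Lemma~\ref{lemma:feasible-set:naive:restricted}), which makes that argument genuinely self-contained.
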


The proofs of Theorem~\ref{thm:naive} and Corollary~\ref{cor:naive} are given in Appendix~\ref{appx:proof:naive} and Appendix~\ref{appx:proof:naive:unique}, respectively.

\begin{remark}\label{remark:depth}
Problem~\ref{prob:mle} implicitly requires $e_3^\top P_iz\neq0$ for all $i\in[N]$, since $\pi$ is defined only there, but Problem~\ref{prob:naive} can admit feasible points with $e_3^\top P_iz=0$. This implies $\begin{bmatrix}I_2&0\end{bmatrix}KP_iz=0$ by \eqref{eq:model}, i.e., the world point coincides with the estimated camera center. Assumption~\ref{assump:nondegenerate} excludes such feasible points from being optimal for $\theta$ in a neighborhood of $\bar\theta$ (see Lemma~\ref{lemma:feasible-set:naive:restricted}).
\end{remark}

Problem~\ref{prob:naive} is now a QCQP and one can apply SDP relaxations. However, its \eqref{eq:sdp:shor} relaxation is \textit{non-informative}~\citep[Lemma 3.1]{C21}, meaning the relaxation will always have an optimal value of zero regardless of the measurement. 
This is because the model constraints are bilinear in $v$ and $z$, while the objective involves only $\|v\|^2$. In the SDP relaxation, every product of entries of $w$ becomes an independent variable, so the model constraint restricts only the product of $v$ and $z$, while the objective involves the product of $v$ with itself. The relaxation is free to decouple them and set the objective value to zero for any $\theta$ (see Proposition~\ref{prop:non-informative}).

To address this, we introduce new variables $y\triangleq v\otimes z$ as the product of $v$ and $z$, and replace the objective $\|v\|^2$ by $\|y\|^2$.
Following Section~\ref{sec:higher-order}, we lift the basis $[1,z,v]$ to $[1,z,v\otimes z]$ as in~\eqref{eq:lift}, with variable $x\triangleq(h,z,y)\in\mathbb{R}^{13+24N}$.
By \eqref{eq:bilinear}, the model constraints~\eqref{eq:model} become linear (degree 1) in $x$:
\begin{equation}\label{eq:model:lifted}
g_{\theta,i}(x)\triangleq\theta_i(e_3^\top P_iz)-\begin{bmatrix}I_2&0\end{bmatrix}KP_iz-(I_2\otimes(e_3^\top P_i))y_i.
\end{equation}
As in~\eqref{eq:redundant}, multiplying $g_{\theta,i}$ by (degree 1) monomials of the lifted basis yields quadratic (degree 2) redundant constraints. In particular, we take those with monomials of point $i$, collected in the subvariable 
\begin{equation}\label{eq:subvariable}
    x^{(i)}\triangleq(h,z,y_i)\in\mathbb{R}^{37}.
\end{equation}
This choice preserves a sparsity pattern which we will exploit in Section~\ref{sec:reduce}. Finally, we replace the homogeneous constraint $h^2=1$ by  $\|z\|^2=1$ such that the new objective $\|y\|^2=\|v\otimes z\|^2=\|v\|^2\|z\|^2=\|v\|^2$ reduces to the objective of Problem~\ref{prob:naive}. Since $(z,y_i)(z,y_i)^\top\in\mathcal{BS}^{36}_{12}$ and $z\neq0$ ensures $y_i=v_i\otimes z$ for some $v_i\in\mathbb{R}^2$~\citep[Section~3.3]{C21}, we have the following QCQP.

\begin{problem}[Lifted QCQP for PnP]\label{prob:lifted}
\begin{align*}
\min_{x=(h,z,y)\in\mathbb{R}^{13+24N}}\ &\|y\|^2\\
\mathrm{s.t.}\quad\quad\quad &g_{\theta,i}(x)(x^{(i)})^\top=0,\quad i\in[N]\\
&r_j(h,z)=0,\quad j\in[20]\\
&\|z\|^2=1\\
&u_iu_i^\top\in\mathcal{BS}^{36}_{12},\quad i\in[N]
\end{align*} 
where $u_i=(z,y_i)$.
\end{problem}

We refer to the first set of constraints as the \textit{lifted model constraints}, the third one as the \textit{normalization constraint}, and the last one as the \textit{block symmetry constraint}.

\begin{theorem}\label{thm:lifted}
Under Assumption~\ref{assump:nondegenerate}, given $\theta$ in a neighborhood of $\bar\theta$, Problem~\ref{prob:lifted} is equivalent to Problem~\ref{prob:naive}.
\end{theorem}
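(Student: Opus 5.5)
We prove Theorem~\ref{thm:lifted} by constructing an explicit bijection between the equivalence classes of optimal solutions of Problem~\ref{prob:naive} and Problem~\ref{prob:lifted}, together with a matching of optimal values. The natural map is $\Phi:(h,z,v)\mapsto(h,z,v\otimes z)$. Note that Problem~\ref{prob:lifted} is homogeneous quadratic in $x$: the lifted model constraints $g_{\theta,i}(x)(x^{(i)})^\top$ are products of two linear forms, and the rotation constraints $r_j(h,z)$ are homogeneous. So its solutions also come in sign pairs, and $\Phi(-w)=-\Phi(w)$. Hence $\Phi$ descends to a map on classes $[w]\mapsto[\Phi(w)]$.

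\emph{Step 1 (forward direction).} Let $w=(h,z,v)$ be feasible for Problem~\ref{prob:naive}, and set $x=\Phi(w)$. The rotation constraints are unchanged. Since $\|z\|^2=\|\mathrm{vec}(R)\|^2=\mathrm{tr}(R^\top R)=3$ on the rotation variety, we rescale: we use $x=\Phi(w)/\sqrt3$, or equivalently absorb the normalization into $h$. For this we check the explicit forms in Appendix~\ref{appx:rot}. Any mismatch between $h^2=1$ and $\|z\|^2=1$ is a constant rescaling that the homogeneous constraints tolerate, and it does not affect the bijection. By \eqref{eq:bilinear}, $m_{\theta,i}(z,v)=g_{\theta,i}(x)$ up to the factor $h$, so $g_{\theta,i}(x)=0$ and the redundant products vanish. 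Block symmetry holds because $(z,v_i\otimes z)(z,v_i\otimes z)^\top$ has blocks $v_{ik}v_{il}zz^\top$, $v_{ik}zz^\top$, and $zz^\top$, which are all symmetric. Finally, $\|y\|^2=\|v\|^2\|z\|^2$, which gives objective equality after normalization.

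\emph{Step 2 (backward direction).} Let $x=(h,z,y)$ be feasible for Problem~\ref{prob:lifted}. Since $\|z\|=1$, $z\neq0$. Block symmetry of $u_iu_i^\top$ says that $y_{ik}z^\top$ is symmetric for each $12$-block $y_{ik}$. This forces $y_{ik}=v_{ik}z$, so $y_i=v_i\otimes z$ with $v_i$ unique; this is the cited fact from \citet[Section~3.3]{C21}. The entry $hh$ of the lifted model constraint gives $h\,g_{\theta,i}(x)=0$. To conclude $g_{\theta,i}(x)=0$ we need $h\neq0$, or we can use the $z$-block of the redundant products, which gives $g_{\theta,i}(x)z^\top=0$ and hence $g_{\theta,i}(x)=0$ since $z\neq0$. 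The rotation constraints, which include $h^2$ terms such as $R^\top R=h^2I_3$, together with $\|z\|=1$, force $h^2=1/3$ (or $1$ after the normalization above) and $R/h\in\mathrm{SO}(3)$. Then $(h,z,v)$, suitably rescaled, is feasible for Problem~\ref{prob:naive} with the same objective value, and $\Phi$ maps it back to $x$. Uniqueness of $v$ given $(z,y)$ makes $\Phi$ injective on classes.

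\emph{Step 3 (optimal sets).} Steps 1 and 2 show that $\Phi$ is a value-preserving bijection between the feasible sets modulo sign. Therefore the optimal values coincide and optimal classes correspond bijectively, which is Definition~\ref{def:prob-equivalence}.

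The neighborhood hypothesis enters only through Theorem~\ref{thm:naive} and Remark~\ref{remark:depth}, if at all. Degenerate feasible points with $e_3^\top P_iz=0$ are mapped identically by $\Phi$, so the correspondence holds regardless. I will nonetheless invoke Lemma~\ref{lemma:feasible-set:naive:restricted} to keep the statement consistent with the setting of Theorem~\ref{thm:naive}.

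I expect the main obstacle to be the bookkeeping of the homogeneous variable. Replacing $h^2=1$ by $\|z\|^2=1$ requires showing that the homogenized rotation constraints pin $h^2$ to a fixed multiple of $\|z\|^2$, so that $h\neq0$ and the scaling is consistent with the objective identity $\|y\|^2=\|v\|^2$. I would first write out $R^\top R=h^2I_3$ from Appendix~\ref{appx:rot}, take its trace to obtain $\|z_R\|^2=3h^2$, and note that the translation part also enters $\|z\|^2$. If that breaks the clean identity, I would instead argue that the objective comparison holds up to a fixed monotone reparametrization between the two problems, which still preserves the optimal sets.
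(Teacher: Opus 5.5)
Your proposal has a genuine gap in Step~2. The rotation constraints together with $\|z\|^2=1$ do \emph{not} pin $h$. Summing the column-norm constraints gives $\|z_R\|^2=3h^2$, so $\|z\|^2=1$ only yields $3h^2+\|t\|^2=1$, and $h=0$ is allowed.

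Concretely, take $h=0$ and $z=(0,t)$ with $\|t\|=1$ and $t_3\neq0$. Set $q\triangleq[I_2\ 0]Kt/t_3$, $v_i\triangleq\theta_i-q$ and $y_i\triangleq v_i\otimes z$. Every rotation constraint is a homogeneous quadratic in $(h,\mathrm{vec}(R))=0$, so it holds. Also $g_{\theta,i}(x)=t_3(\theta_i-q-v_i)=0$, and block symmetry holds. So this $x$ is feasible for Problem~\ref{prob:lifted} (cf.\ Remark~\ref{remark:h}), but it has no preimage in Problem~\ref{prob:naive}, where $h^2=1$.

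Hence the feasible sets are \emph{not} in bijection modulo sign, and your Step~3 fails. Your map only reaches classes with $h\neq0$. Without further argument, points with $h=0$ could give Problem~\ref{prob:lifted} a strictly smaller optimal value or extra optimal classes.

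The missing idea is exactly where the neighborhood hypothesis enters. It does not come through Theorem~\ref{thm:naive} or the depth condition $e_3^\top P_iz\neq0$, which is harmless here. You must show that for $\theta$ near $\bar\theta$ no optimal point of Problem~\ref{prob:lifted} has $h=0$. The paper does this in three steps:
\begin{enumerate}[(i)]
\item Every $h=0$ feasible point has objective $\|\theta-(q,\dots,q)\|^2$ for some $q\in\mathbb R^2$.
\item Under Assumption~\ref{assump:nondegenerate}, $\delta\triangleq\min_q\|\bar\theta-(q,\dots,q)\|>0$. If all $\bar\theta_i$ equaled some $q$, then $(0,t)$ with $t\propto K^{-1}(q,1)$ would lie in $\ker A(\bar\theta)=\mathrm{span}\{z_{\mathrm{gt}}\}$ despite having zero rotation part.
\item The lifted ground truth $(1,z_{\mathrm{gt}},(\theta-\bar\theta)\otimes z_{\mathrm{gt}})/\|z_{\mathrm{gt}}\|$ is feasible with objective $\|\theta-\bar\theta\|^2$.
\end{enumerate}
The triangle inequality then excludes $h=0$ optima once $\|\theta-\bar\theta\|<\delta/2$.

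A second, fixable problem is the normalization. A constant factor such as $1/\sqrt3$ does not work: when $h^2=1$ you have $\|z\|^2=3+\|t\|^2$, so the image is not even feasible. Your fallback of a ``fixed monotone reparametrization'' also fails. With a constant scale, the objective becomes $\|v\|^2$ times a $t$-dependent factor, which does not preserve minimizers.

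The scaling must be pointwise. Forward, use $[(h,z,v)]\mapsto[(h,z,v\otimes z)/\|z\|]$, which gives objective $\|v\otimes z\|^2/\|z\|^2=\|v\|^2$ exactly. Backward, use $(h,z,y)\mapsto(h/|h|,z/|h|,v)$, which again needs $h\neq0$.

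With these two repairs, the rest of your argument matches the paper's proof. That includes block symmetry giving $y_i=v_i\otimes z$, the $z$-column of the redundant products giving $g_{\theta,i}(x)=0$, and injectivity from uniqueness of $v$.
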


\begin{corollary}\label{cor:lifted}
Under Assumption~\ref{assump:nondegenerate}, if $\theta=\bar\theta$, then Problem~\ref{prob:lifted} has the unique equivalence class of global minimizer $[\bar x]$ with $\bar x=(\bar h,\bar z,0)$, and $\bar z/\bar h$ is the unique global minimizer of Problem~\ref{prob:mle}.
\end{corollary}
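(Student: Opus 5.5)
The proof will mirror the route from Theorem~\ref{thm:naive} to Corollary~\ref{cor:naive}: combine the equivalence in Theorem~\ref{thm:lifted} with the uniqueness in Corollary~\ref{cor:naive}, and then make the bijection between optimal equivalence classes explicit.

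\emph{Step 1: the optimal value is zero.} With $\theta=\bar\theta$, the point $\bar x=(\bar h,\bar z,0)$ is feasible for Problem~\ref{prob:lifted}, where $\bar z=z_\mathrm{gt}/\|z_\mathrm{gt}\|$ and $\bar h=1/\|z_\mathrm{gt}\|$. To check this:
\begin{itemize}
\item the lifted model constraints vanish, because $g_{\bar\theta,i}(\bar x)=m_{\bar\theta,i}(\bar z,0)\,\bar h$-homogeneously, and this is zero by \eqref{eq:noiseless};
\item the rotation constraints $r_j$ are homogeneous quadratics in $(h,z)$, so they are preserved under scaling $(1,z_\mathrm{gt})$ by $1/\|z_\mathrm{gt}\|$;
\item $\|\bar z\|=1$ holds by construction;
\item block symmetry holds trivially because $y=0$.
\end{itemize}
Since the objective $\|y\|^2$ is nonnegative, $\bar x$ is globally optimal with value $0$.

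\emph{Step 2: every minimizer is a sign flip of $\bar x$.} Let $x^*=(h^*,z^*,y^*)$ be any global minimizer. Then $y^*=0$, and the remaining constraints read
\begin{itemize}
\item $g_{\bar\theta,i}(x^*)\,(x^{*(i)})^\top=0$;
\item $r_j(h^*,z^*)=0$;
\item $\|z^*\|=1$.
\end{itemize}

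\emph{Step 2a: $h^*\neq0$.} The rotation constraints include the homogenized identities $R^\top R=h^2I_3$ (Appendix~\ref{appx:rot}). If $h^*=0$, these force the rotation block of $z^*$ to vanish. The model constraints with $y=0$ then give $\theta_i e_3^\top t^* = [I_2\ 0]Kt^*$ for all $i$. Under Assumption~\ref{assump:nondegenerate}, with $N\ge6$ and generic $\bar\theta_i$ not all equal, this forces $t^*=0$, which contradicts $\|z^*\|=1$.

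\emph{Step 2b: reduce to the MLE.} Now rescale $z^*/h^*$. Dividing $g_{\bar\theta,i}(x^*)=0$ by $h^*$ gives $m_{\bar\theta,i}(z^*/h^*,0)=0$, and the rotation constraints give $R(z^*/h^*)\in\mathrm{SO}(3)$. So $(1,z^*/h^*,0)$ is feasible for Problem~\ref{prob:naive} with value $0$, hence globally optimal. Corollary~\ref{cor:naive} then gives $z^*/h^*=z_\mathrm{gt}$.

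\emph{Step 2c: pin down the scale.} Together with $\|z^*\|=1$, this yields $h^*=\pm1/\|z_\mathrm{gt}\|$. Hence $x^*=\pm\bar x$, i.e.\ $[x^*]=[\bar x]$.

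\emph{Step 3: conclude.} Finally, $\bar z/\bar h=z_\mathrm{gt}$, which is the unique global minimizer of Problem~\ref{prob:mle} by Proposition~\ref{prop:mle:unique}.

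The \textbf{main obstacle} is Step 2a: excluding $h^*=0$. The normalization is on $z$, not $h$, so a "point at infinity" could in principle satisfy every constraint. I will first try the homogenized orthogonality constraints to kill $R$. I will then use the depth argument of Remark~\ref{remark:depth} and Lemma~\ref{lemma:feasible-set:naive:restricted} to exclude a nonzero $t$ with all measurements degenerate. If that is insufficient, I will appeal to the uniqueness of the critical-set linear system from the proof of Proposition~\ref{prop:mle:unique}.
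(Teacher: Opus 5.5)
Your proposal is correct and follows the paper's route. The paper combines Corollary~\ref{cor:naive} with the bijection $\phi$ from the proof of Theorem~\ref{thm:lifted}; your Steps 2a--2c unpack that bijection at $\theta=\bar\theta$, with Step 2a playing the role of Lemma~\ref{lemma:feasible-set:lifted:restricted}.

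The one soft spot is the appeal to ``generic'' $\bar\theta_i$ in Step 2a: that hypothesis is not given, and it is not needed. Since $y^*=0$ and $x^{*(i)}\neq0$, the lifted model constraints give $A(\bar\theta)z^*=0$. So if $h^*=0$, then $z^*=(0,t^*)\in\ker(A(\bar\theta))=\mathrm{span}\{z_\mathrm{gt}\}$, which forces $z^*=0$ because $R_\mathrm{gt}\neq0$. This is the last fallback you name; Remark~\ref{remark:depth} and Lemma~\ref{lemma:feasible-set:naive:restricted} deal with zero depths, not with $h=0$.
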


The proofs of Theorem~\ref{thm:lifted} and Corollary~\ref{cor:lifted} are again given in Appendix~\ref{appx:proof:lifted} and Appendix~\ref{appx:proof:lifted:unique}, respectively. 

\begin{remark}\label{remark:h}
Problem~\ref{prob:naive} requires $h\neq0$, but Problem~\ref{prob:lifted} can admit feasible points with $h=0$. Then by the rotation and normalization constraints, $z=(0,t)$ with $\|t\|=1$, which implies all projections of the world points by $z$ are the same pixel point by \eqref{eq:model:lifted}. Assumption~\ref{assump:nondegenerate} excludes such feasible points from being optimal for $\theta$ in a neighborhood of $\bar\theta$ (see Lemma~\ref{lemma:feasible-set:lifted:restricted}).
\end{remark}

We next derive the \eqref{eq:sdp:shor} relaxation of Problem~\ref{prob:lifted}, and show that the relaxation is exact in a neighborhood of $\bar\theta$ in Section~\ref{sec:stability}.

\subsection{Semidefinite relaxation}\label{sec:sdp}
Since each lifted model constraint $g_{\theta,i}(x)(x^{(i)})^\top=0$ gives $2\times 37$ scalar constraints, we write them as $x^\top C_{g_{\theta,i},k}\,x=0$ for $k\in[74]$. Similarly, we write the rotation constraints as $x^\top C_{r_j}x=0$ for $j\in[20]$. 
Following Section~\ref{sec:shor}, the \eqref{eq:sdp:shor}
relaxation of Problem~\ref{prob:lifted} is
\begin{equation}\label{eq:sdp}
\begin{aligned}
\min_{X\in\mathbb{S}_+^{13+24N}}\ &\mathrm{trace}(X_{yy})\\
\text{s.t.}\quad\ &C_{g_{\theta,i},k}\bullet X=0,\quad k\in[74],\ i\in[N]\\
&C_{r_j}\bullet X=0,\quad j\in[20]\\
&\mathrm{trace}(X_{zz})=1\\
&X_{u_iu_i}\in\mathcal{BS}^{36}_{12},\quad i\in[N]
\end{aligned}\tag{SDP}
\end{equation}
where for a subvector $\alpha$ of $x$, $X_{\alpha\alpha}$ denotes the
principal submatrix of $X$ on the rows and columns indexed by $\alpha$.

\section{Stability of the SDP relaxation}\label{sec:stability}

\begin{definition}[SDP Stability]\label{def:stable}
We say \eqref{eq:sdp} is \textit{stable} near $\bar\theta$ if it is exact for every $\theta$ in a neighborhood of $\bar\theta$.
\end{definition}

We now state our main theoretical results.

\begin{theorem}\label{thm:stable}
Under Assumption~\ref{assump:nondegenerate}, \eqref{eq:sdp} is stable near
$\bar\theta$.
\end{theorem}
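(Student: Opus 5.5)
The plan is to invoke the stability theorem of \citet{CAP22} for parametric QCQPs. That result says the following. Suppose that at the noiseless parameter $\bar\theta$ the following hold:
\begin{enumerate}[(i)]
\item the QCQP has a unique global minimizer (up to sign equivalence) $\bar x$;
\item the relaxation is tight at $\bar\theta$, i.e.\ there is a Lagrange multiplier $\bar\lambda$ whose Lagrangian Hessian $H(\bar\lambda)\succeq0$ and $H(\bar\lambda)\bar x=0$;
\item the constraint set is smooth at $\bar x$, i.e.\ the Abadie constraint qualification (ACQ) holds and the feasible set is locally a manifold;
\item the corank-one condition holds: $\mathrm{rank}\,H(\bar\lambda)=n-1$, or more precisely $H(\bar\lambda)$ is positive definite on the tangent directions.
\end{enumerate}
Then the Shor relaxation is exact for every $\theta$ in a neighborhood of $\bar\theta$. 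Hypothesis (i) is exactly Corollary~\ref{cor:lifted}, which gives the unique class $[\bar x]$ with $\bar x=(\bar h,\bar z,0)$. Theorems~\ref{thm:naive} and \ref{thm:lifted} ensure that exactness of \eqref{eq:sdp} for Problem~\ref{prob:lifted} transfers to Problem~\ref{prob:mle}. The dependence on $\theta$ is only through the constraint matrices $C_{g_{\theta,i},k}$, which are affine, hence continuous, in $\theta$, so the parametric setting of \citet{CAP22} applies.

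For tightness at $\bar\theta$, I would take the objective value itself. Since $\bar y=0$ and the objective $\mathrm{trace}(X_{yy})$ is nonnegative on the PSD cone, the optimal SDP value is $0=p^*_\text{QCQP}$. Concretely, the dual certificate will use $\bar\lambda=0$ on all constraints, except that the multiplier on $\|z\|^2=1$ is $0$ as well. The Hessian is then $H=\mathrm{diag}(0,0_{12},I_{24N})\succeq0$ with $H\bar x=0$. This satisfies (ii), but its kernel is the whole $(h,z)$ block, of dimension $13$. So corank one fails for this naive certificate. The real work is therefore to exploit the freedom in the multipliers of the lifted model constraints $g_{\theta,i}(x)(x^{(i)})^\top=0$, the rotation constraints, and the block-symmetry constraints. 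Because these vanish at $\bar x$, they can be added to $H$ with suitable signs to make it positive definite on $(h,z)$ directions complementary to $\bar x$. The plan is to choose multipliers on the products $g_{\bar\theta,i}(x)\,z^\top$, which contribute cross terms between $y_i$ and $(h,z)$, and then complete a Schur complement. The resulting $(h,z)$ block becomes $\sum_i A_i^\top A_i$, where $A_i$ is the linear map $z\mapsto\bar\theta_i e_3^\top P_iz-[I_2\ 0]KP_iz$ applied jointly with $h$. That sum is exactly the normal matrix of the camera-resectioning linear system (DLT) used in the proof of Proposition~\ref{prop:mle:unique}. Under Assumption~\ref{assump:nondegenerate}, with $N\ge6$ and no critical configuration, this system has a one-dimensional kernel spanned by the ground-truth camera. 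After accounting for the normalization and homogenization, this yields $\ker H(\bar\lambda)=\mathrm{span}(\bar x)$.

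Next I would verify ACQ and smoothness at $\bar x$. I would compute the Jacobian of all constraints at $\bar x$ and show that the tangent cone of the feasible set equals the kernel of that Jacobian. The feasible set near $\bar x$ is the image of the manifold $\mathrm{SO}(3)\times\mathbb{R}^3\times\mathbb{R}^{2N}$ under $(R,t,v)\mapsto(h,z,v\otimes z)$ after normalization. I would show that the linearized constraints cut out exactly the differential of this parametrization. The rotation constraints are known to be regular (they have the correct rank $20$ relative to the $\mathrm{SO}(3)$ tangent space). The lifted model constraints and block symmetry at $y=0$ linearize to $(I_2\otimes e_3^\top P_i)\dot y_i$ together with the block-symmetric structure, which pins $\dot y_i=\dot v_i\otimes\bar z$.

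The main obstacle I expect is the corank-one construction. Two difficulties arise there: redundancy among the multipliers, and the extra rotation constraint. The rotation constraint changes the feasible set relative to \citet{C21}, so the kernel argument must be done on the tangent space of the feasible manifold rather than on all of $\mathbb{R}^{12}$. If the Schur-complement route does not directly give rank $n-1$, I would fall back on the weaker form of the theorem in \citet{CAP22}. That form requires only that $H(\bar\lambda)$ be positive definite on $T_{\bar x}$ modulo $\bar x$, and I would check it by showing that any tangent $(\dot h,\dot z,\dot y)$ with zero quadratic form satisfies $\dot y=0$ and that $\dot z$ lies in the DLT kernel, forcing $\dot z\parallel\bar z$.
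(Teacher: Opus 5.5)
\paragraph{The gap.} The step that fails is the claim that the DLT normal matrix, ``after accounting for the normalization and homogenization,'' yields a certificate with kernel $\mathrm{span}\{\bar x\}$.

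In your construction the $(h,z)$-block is $\sum_i A_i^\top A_i$ padded by a zero $h$-row and $h$-column. It therefore annihilates both $(1,0)$ and $(0,\bar z)$, so there is an extra kernel direction $\eta=(\|\bar z\|^2,-\bar h\bar z)\perp\bar{\mathbf h}$. The normalization cannot remove it: stationarity forces its multiplier $\nu$ to vanish, since $0=\bar x^\top S\bar x=2\nu$. Homogenization cannot remove it either, because $h$ has no constraint of its own.

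In fact no stationary PSD certificate built from the objective and the lifted model, block-symmetry, normalization and orthonormality constraints can have corank one. Let $\bar x'=(-\bar h,\bar z,0)$ and $X_2=\tfrac12\big(\bar x\bar x^\top+\bar x'(\bar x')^\top\big)$. This rank-2 matrix has only the nonzero blocks $\bar h^2$ and $\bar z\bar z^\top$. It has zero objective and satisfies all of these constraints, using $A(\bar\theta)\bar z=0$ and the fact that its $hz$-block vanishes. Hence $S\bullet X_2=0$, so $SX_2=0$ and $\ker S\supseteq\mathrm{span}\{\bar x,(\eta,0)\}$. Only the determinant constraints $c^{(j)}\times c^{(k)}=hc^{(i)}$ exclude $X_2$.

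The missing idea is an explicit combination of rotation constraints with \emph{zero gradient} at $\bar{\mathbf h}$ that is strictly positive at $\eta$. Vanishing at $\bar x$, as you suggest, is not enough; you need zero gradient. This is Lemma~\ref{lemma:RS:rotation}. Weighting $\|c^{(i)}\|^2-h^2$ by $-\tfrac12$ and the determinant constraints by $c^{(i)}_{\mathrm{gt}}$ gives $\nabla q_\alpha(\bar{\mathbf h})=0$ and $q_\alpha(\eta)=\tfrac32(\bar h^2+1)^2>0$. You flag the extra rotation constraint as the obstacle but diagnose it as a tangent-space issue. The real obstruction is this direction $(\eta,0)$, which lies outside $T_{\bar x}(\mathcal X_{\bar\theta})$.

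\paragraph{How your route closes, and the fallback.} With that multiplier your corank-one route does close, and it is a legitimate alternative to the paper's. The function $\|g_{\bar\theta,i}(x)\|^2$ is a linear combination of the lifted model constraints. So $S_1=\bar S+\sum_i\nabla^2\|g_{\bar\theta,i}\|^2$ is a stationary PSD certificate, and the rank-11 argument gives $\ker S_1=\mathrm{span}\{\bar x,(\eta,0)\}$. Then $S_1+tM(\alpha)$ is PSD with kernel $\mathrm{span}\{\bar x\}$ for small $t>0$.

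The paper instead keeps $\bar S=S(0)$ and applies \citet[Theorem~4.5]{CAP22}. It proves non-branch (Theorem~\ref{thm:non-branch-point}) and restricted Slater through the stronger identity $M_{\mathbf h}(\Lambda)=\mathcal V$ (Theorem~\ref{thm:restricted-Slater}). Your route needs only Lemma~\ref{lemma:RS:rotation} and the sum of squares of the model constraints, plus ACQ and smoothness as in the paper.

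Your fallback does not bypass the gap. Positive definiteness of $\bar S$ on $T_{\bar x}(\mathcal X_{\bar\theta})$ is exactly the non-branch condition, which your sketch reproduces. It is not equivalent to corank one, contrary to what your item (iv) suggests. Theorem~4.5 additionally requires restricted Slater on $\ker\bar S\cap\bar x^\perp\ni(\eta,0)$, so the same rotation multiplier is unavoidable.

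\paragraph{Two smaller points.} First, the smoothness hypothesis concerns the graph $\{(\theta,x)\mid x\in\mathcal X_\theta\}$ having dimension $2N+\dim\mathcal X_{\bar\theta}$. It must be verified separately from ACQ; the paper uses $\mathcal W'_\theta=\mathcal W'_{\bar\theta}+(0,0,\theta-\bar\theta)$.

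Second, at fixed $\bar\theta$ the set $\mathcal X_{\bar\theta}$ is $6$-dimensional near $\bar x$, because $v=\bar\theta-\pi(z)$ is determined by $z$. So $\mathrm{SO}(3)\times\mathbb R^3\times\mathbb R^{2N}$ parametrizes the graph, not $\mathcal X_{\bar\theta}$. Also, the linearized model constraint is $[A(\bar\theta)]_i\dot z-(I_2\otimes e_3^\top P_i)\dot y_i$, not only its $\dot y_i$ part.
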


The complete proof is given in Appendix~\ref{secB}, where we verify the SDP stability conditions (see Lemma~\ref{lem:Cifuentes}).
Theorem~\ref{thm:stable} guarantees that given $\theta$ in a neighborhood of $\bar\theta$, the relaxation will be exact. Together with the results in Section~\ref{sec:problem-formulation}, the solution of \eqref{eq:sdp} recovers the unique global minimizer of Problem~\ref{prob:mle}.

\begin{corollary}\label{cor:stable}
Under Assumption~\ref{assump:nondegenerate}, given $\theta$ in a neighborhood of $\bar\theta$, \eqref{eq:sdp} has the unique optimal solution $x^*(x^*)^\top$ with $x^*=(h^*,z^*,y^*)$, and $z^*/h^*$ is the unique global minimizer of Problem~\ref{prob:mle}.
\end{corollary}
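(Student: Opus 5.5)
The corollary follows by chaining Theorem~\ref{thm:stable} with the equivalences of Section~\ref{sec:problem-formulation}. The plan is to intersect the finitely many neighborhoods of $\bar\theta$ supplied by Theorem~\ref{thm:stable}, Theorem~\ref{thm:lifted}, and Theorem~\ref{thm:naive}. On the resulting neighborhood all three statements hold at once.

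First, fix $\theta$ in this neighborhood. By Theorem~\ref{thm:stable} and the definition of exactness, \eqref{eq:sdp} is tight and has a unique optimal solution, which is rank one. Write this solution as $x^*(x^*)^\top$; the factor $x^*$ is determined up to sign, consistent with Definition~\ref{def:sign-equivalence}. Since $X=x^*(x^*)^\top$ satisfies all linear constraints of \eqref{eq:sdp}, $x^*$ satisfies every constraint of Problem~\ref{prob:lifted}, and its objective value equals $p^*_\text{SDP}$. By tightness this equals the optimal value of Problem~\ref{prob:lifted}. Hence $[x^*]$ is an optimal equivalence class of Problem~\ref{prob:lifted}.

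Second, I would argue that this class is the unique one. Suppose $x'$ were another minimizer of Problem~\ref{prob:lifted} with $[x']\neq[x^*]$. Then $x'(x')^\top$ would be a second optimal solution of \eqref{eq:sdp}, distinct from $x^*(x^*)^\top$. This contradicts uniqueness.

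Third, I would transport the unique class through the bijections of Definition~\ref{def:prob-equivalence}. Theorem~\ref{thm:lifted} maps it to a unique class of Problem~\ref{prob:naive}, and Theorem~\ref{thm:naive} then maps that to a unique minimizer of Problem~\ref{prob:mle}. The point requiring care is to identify the explicit maps from the appendix proofs and check that the composite is $x^*\mapsto z^*/h^*$, which in particular requires $h^*\neq0$. The natural guess is as follows. The map from Problem~\ref{prob:naive} to Problem~\ref{prob:lifted} sends $(h,z,v)$ to a rescaling $(h,z,v\otimes z)/\|z\|$ chosen so that the normalization holds. Its inverse divides by $h$, and the map from Problem~\ref{prob:naive} to Problem~\ref{prob:mle} reads off $z$ when $h=1$. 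Composing these gives exactly $z^*/h^*$, in the same way as Corollary~\ref{cor:lifted}.

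The nonvanishing $h^*\neq0$ on the neighborhood is what Remark~\ref{remark:h} and Lemma~\ref{lemma:feasible-set:lifted:restricted} provide: minimizers with $h=0$ are excluded near $\bar\theta$. Similarly, Lemma~\ref{lemma:feasible-set:naive:restricted} ensures $e_3^\top P_iz^*\neq0$, so the recovered pose lies in the domain of $\pi$. The main obstacle is only this bookkeeping of the explicit bijections and neighborhoods; the substantive content is already in Theorem~\ref{thm:stable}.
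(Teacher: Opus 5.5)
Your proposal is correct and follows essentially the same route as the paper's proof. Exactness from Theorem~\ref{thm:stable} gives the rank-one optimum $x^*(x^*)^\top$, and tightness makes $[x^*]$ the unique minimizing class of Problem~\ref{prob:lifted}; Lemma~\ref{lemma:feasible-set:lifted:restricted} then supplies $h^*\neq0$, and the explicit bijections in the proofs of Theorems~\ref{thm:naive} and~\ref{thm:lifted} identify $z^*/h^*$ as the unique global minimizer of Problem~\ref{prob:mle}.
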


The proof is given in Appendix~\ref{appx:cor:stable}. Corollary~\ref{cor:stable} is what our solver relies on in practice: under moderate noise, we can solve the relaxation and recover the globally optimal camera pose.

\section{Reducing the SDP relaxation}\label{sec:reduce}

\subsection{Chordal sparsity}\label{sec:reduce:quotient} 
While \eqref{eq:sdp} is stable, it scales poorly with the number of points $N$. Notice, however, that the objective of Problem~\ref{prob:lifted} decomposes as $\|y\|^2=\sum_{i=1}^N\|y_i\|^2$, and that each constraint involves only a subvariable $x^{(i)}=(h,z,y_i)$ in \eqref{eq:subvariable}. That is, the problem inherits chordal sparsity~\citep{GJS84,VA15}. 

The variable dependency graph forms a star graph (see Figure~\ref{fig:star}), where $\mathbf h=(h,z)$ is the \textit{hub variable} and $y_i$ is the $i$-th \textit{spokes variable}. 
The maximal cliques correspond to $x^{(i)}\in\mathbb{R}^{37}$. Let $X^{(i)}\triangleq X_{x^{(i)}x^{(i)}}\in\mathbb S^{37}$. Instead of tracking a large variable $X\succeq0$, we track the $N$ smaller variables $X^{(i)}\succeq0$ such that $X$ is positive semidefinite completable by Grone's theorem (see Lemma~\ref{lemma:grone}). This reduces to the following SDP.
\begin{equation}\label{eq:chordal:sdp}
\begin{aligned}
\min_{X^{(i)}\in\mathbb{S}_+^{37}}\ &\sum_{i=1}^N\mathrm{trace}(X^{(i)}_{y_iy_i})\\
\text{s.t.}\quad &\tilde C_{g_{\theta,i},k}\bullet X^{(i)}=0,\quad k\in[74],\ i\in[N]\\
&\tilde C_{r_j}\bullet X^{(1)}_{\mathbf{hh}}=0,\quad j\in[20]\\
&\mathrm{trace}(X_{zz}^{(1)})=1\\
&X^{(i)}_{u_iu_i}\in\mathcal{BS}^{36}_{12},\quad i\in[N]\\
&X^{(i)}_\mathbf{hh}=X^{(1)}_\mathbf{hh},\quad i=2,\dots,N
\end{aligned}\tag{Chordal-SDP}
\end{equation}
where $\tilde C_{g_{\theta,i},k}\in\mathbb S^{37}$ and $\tilde C_{r_j}\in\mathbb S^{13}$ are the principal submatrices of $C_{g_{\theta,i},k}$ and $C_{r_j}$ in~\eqref{eq:sdp}, respectively, which contain all their nonzero entries.

\begin{figure}[t]
    \centering
    \includegraphics[width=1\linewidth]{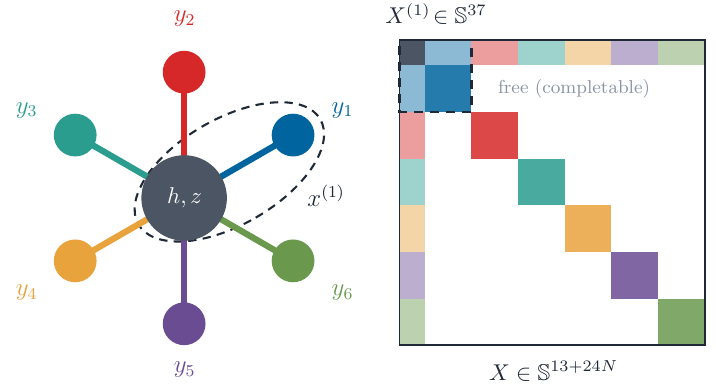}
    \caption{Chordal star graph for $N=6$. Left: the variable dependency graph with hub $(h,z)$ and spokes $y_i$, each maximal clique consists of the hub with a spoke $x^{(i)}=(h,z,y_i)$. Right: the corresponding variable, colored by the clique blocks.}
    \label{fig:star}
\end{figure}

\begin{theorem}\label{thm:chordal}
Under Assumption~\ref{assump:nondegenerate}, given $\theta$ in a neighborhood of $\bar\theta$, \eqref{eq:chordal:sdp} is equivalent to \eqref{eq:sdp}.
\end{theorem}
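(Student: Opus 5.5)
The plan is to exhibit explicit maps between the feasible sets of \eqref{eq:sdp} and \eqref{eq:chordal:sdp} that preserve objective values. Equivalence of optimal sets then follows from Corollary~\ref{cor:stable}. Note that neither SDP is homogeneous in the sign-flip sense at the matrix level, so each optimal solution is its own equivalence class in Definition~\ref{def:prob-equivalence}.

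\emph{Forward direction.} Given a feasible $X$ of \eqref{eq:sdp}, set $X^{(i)}\triangleq X_{x^{(i)}x^{(i)}}$. Principal submatrices of a PSD matrix are PSD. Each constraint matrix $C_{g_{\theta,i},k}$ and $C_{r_j}$ is supported on the rows and columns of $x^{(i)}$, respectively of $\mathbf h$, so $C\bullet X=\tilde C\bullet X^{(i)}$. The block symmetry and normalization constraints read off the same entries. The hub blocks $X^{(i)}_{\mathbf{hh}}=X_{\mathbf{hh}}$ coincide automatically. Finally, the objective $\mathrm{trace}(X_{yy})=\sum_i\mathrm{trace}(X_{y_iy_i})$ is unchanged. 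Hence the restriction map $\Phi:X\mapsto(X^{(i)})_i$ sends feasible points to feasible points with equal cost, and $p^*_{\text{Chordal}}\le p^*_{\text{SDP}}$.

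\emph{Backward direction.} Given feasible $(X^{(i)})_i$, the consistency constraints $X^{(i)}_{\mathbf{hh}}=X^{(1)}_{\mathbf{hh}}$ define a partial symmetric matrix. Its specified entries are exactly those on the star-graph pattern, with cliques $x^{(i)}$ sharing only the hub. The star graph is chordal, with every cycle lying within the hub clique, and every maximal-clique block is PSD. So by Grone's theorem (Lemma~\ref{lemma:grone}) there is a PSD completion $X$. Any completion satisfies all constraints of \eqref{eq:sdp} and has the same objective value, because constraints and objective only touch specified entries. Thus $p^*_{\text{SDP}}\le p^*_{\text{Chordal}}$, the optimal values agree, $\Phi$ maps optimal sets into optimal sets, and every optimal chordal point is the image of some optimal $X$.

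\emph{Bijectivity.} This is the main obstacle. A PSD completion is generally non-unique, so we must show that the restriction map $\Phi$ is injective on the optimal set. Here I would use Corollary~\ref{cor:stable}: for $\theta$ near $\bar\theta$, \eqref{eq:sdp} has the unique optimum $x^*(x^*)^\top$. Every completion of an optimal chordal point is optimal for \eqref{eq:sdp}, so it must equal $x^*(x^*)^\top$. Hence each optimal chordal point $(X^{(i)})_i$ has a unique completion, namely $x^*(x^*)^\top$, and $X^{(i)}=x^{(i)*}(x^{(i)*})^\top$. The optimal set of \eqref{eq:chordal:sdp} is therefore the single point $\Phi(x^*(x^*)^\top)$, and $\Phi$ is a bijection between the two singleton optimal sets.
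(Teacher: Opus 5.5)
Your proposal is correct and follows the paper's proof essentially step for step. You use the clique-restriction map, Grone's theorem (Lemma~\ref{lemma:grone}) to complete optimal chordal points, and uniqueness of the optimum of \eqref{eq:sdp} (Theorem~\ref{thm:stable}, equivalently Corollary~\ref{cor:stable}) to obtain a bijection between singleton optimal sets. You also state explicitly the equality of the two optimal values, which the paper uses only implicitly. One small point: your chordality justification (``every cycle lying within the hub clique'') is loosely worded, since cycles can pass through spoke vertices. The conclusion still holds, because any cycle not contained in a single clique $x^{(i)}$ passes through two distinct hub vertices, and these are joined by a chord.
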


The proof is given in Appendix~\ref{appx:proof:chordal}.
Instead of solving \eqref{eq:sdp}, we now solve \eqref{eq:chordal:sdp}. The variable dimension of \eqref{eq:sdp} scales $\mathcal{O}(N^2)$, while the variable dimension of \eqref{eq:chordal:sdp} scales $N\cdot\mathcal{O}(1)=\mathcal{O}(N)$. This drastically reduces the computation time, as we show in Section~\ref{sec:experiment}.

\begin{remark}
Assumption~\ref{assump:nondegenerate} is needed \textit{not} for the chordal decomposition, but for \eqref{eq:sdp} to have a unique optimal solution such that the bijection in our notion of problem equivalence (Definition~\ref{def:prob-equivalence}) exists.
For example, if \eqref{eq:sdp} has two optimal solutions $X\neq X'$ with the same clique blocks, then both correspond to the same optimal solution of \eqref{eq:chordal:sdp}, and so the optimal sets are not in bijection.
\end{remark}

\subsection{Quotient ring reduction}\label{sec:reduce:qrr}
Instead of using a basis of the polynomial ring $\mathbb R[x]$, one can select a basis of the quotient ring $\mathbb R[x]/\mathcal I$, where $\mathcal I$ is the ideal of the feasible set of Problem~\ref{prob:lifted}~\citep{BPT12,CP17}. This removes the monomials that are redundant on the feasible set, but requires computing a Gr\"obner basis of $\mathcal I$, which is effective only when a Gr\"obner basis can be obtained easily. 

We therefore restrict our focus to the \textit{linear} polynomials of $\mathcal I$ that involve only $x^{(i)}$. Denote by $\mathcal I_i$ the ideal they generate, then finding the quotient ring $\mathbb{R}[x^{(i)}]/\mathcal{I}_i$ reduces to a linear transformation.
We next construct three linear polynomials of $\mathcal I_i$, two from the model constraints \eqref{eq:model:lifted} and another redundant constraint, and show that they reduce each block $X^{(i)}$ by 3 dimensions\footnote{We observe this dimension reduction numerically by applying the sampling-based methods \citep{CP17} to~\eqref{eq:chordal:sdp}.}.

Recall that the constraint $g_{\theta,i}(x)=0$ in \eqref{eq:model:lifted} is \textit{linear} in $x^{(i)}$. Let
$$
A_i\triangleq\theta_ie_3^\top P_i-\begin{bmatrix}I_2&0\end{bmatrix}KP_i\in\mathbb R^{2\times12},
$$
which is the $i$-th block of $A(\theta)$ in~\eqref{eq:affine},  we can write $g_{\theta,i}(x)=0$ as
\begin{align}
\tilde{Q}_ix^{(i)}&=0,\label{eq:qrr:model} \\
\tilde{Q}_i&=\begin{bmatrix}0_{2\times1}&A_i&-I_2\otimes (e_3^\top P_i)\end{bmatrix} \in\mathbb{R}^{2\times37}. \notag 
\end{align}

We construct another linear redundant constraint besides~\eqref{eq:qrr:model}. Write $y_{i,k}\in\mathbb R^{12}$ for the $k$-th block of $y_i$ and $A_{i,k}$ for the $k$-th row of $A_i$ such that the two rows of~\eqref{eq:qrr:model} read
$$
A_{i,k}z-e_3^\top P_iy_{i,k}=0,\quad k\in\{1,2\}.
$$
Since $y_i=v_i\otimes z$, we have $y_{i,k}=v_{i,k}z$. Multiplying the first row by $v_{i,2}$ and the second by $v_{i,1}$, then subtracting them gives the linear constraint
\begin{align*}
&v_{i,2}\big(A_{i,1}z-e_3^\top P_iy_{i,1}\big)-v_{i,1}\big(A_{i,2}z-e_3^\top P_iy_{i,2}\big)\\
&=A_{i,1}y_{i,2}-A_{i,2}y_{i,1}=0.
\end{align*}
Together with~\eqref{eq:qrr:model}, this gives three linear polynomials
\begin{align}
Q_ix^{(i)}&=0, \label{eq:qrr:Q}\\
Q_i&=\begin{bmatrix}0_{2\times1}&A_i&-I_2\otimes(e_3^\top P_i)\\0&0_{1\times12}&\begin{bmatrix}-A_{i,2}&A_{i,1}\end{bmatrix}\end{bmatrix}\in\mathbb R^{3\times37}. \notag
\end{align}

We next claim that $Q_i$ has rank 3, so that~\eqref{eq:qrr:Q} removes exactly three dimensions. We show that the 3 rows of the last 24 columns of $Q_i$,
$$
\begin{bmatrix}-e_3^\top P_i&0\end{bmatrix},\quad\begin{bmatrix}0&-e_3^\top P_i\end{bmatrix},\quad\begin{bmatrix}-A_{i,2}&A_{i,1}\end{bmatrix},
$$
are linearly independent. The first two are linearly independent since $e_3^\top P_i\neq0$. 
Suppose that the third is a linear combination of the first two, i.e., 
$$
\begin{bmatrix}-A_{i,2}&A_{i,1}\end{bmatrix}=a\begin{bmatrix}-e_3^\top P_i&0\end{bmatrix}+b\begin{bmatrix}0&-e_3^\top P_i\end{bmatrix}.
$$ 
Then we have $A_{i,1}=-b\,e_3^\top P_i$. Since $P_iz=Rp_i+t$, the last three entries of $e_3^\top P_i$ are $e_3^\top$, whereas those of $A_{i,1}$ are the first row of $\theta_ie_3^\top-\begin{bmatrix}I_2&0\end{bmatrix}K$, i.e., $(-f_x,-s,\theta_{i,1}-c_x)$. Comparing the first entries gives $f_x=0$, which contradicts $f_x\neq0$. Hence $Q_i$ has rank 3, as claimed.

Let $\Xi_i\in\mathbb R^{37\times34}$ be a matrix whose columns form a basis for $\ker(Q_i)$, chosen such that 
\begin{equation}\label{eq:qrr:xi:hub}
\Xi_{i,\mathbf h}=\begin{bmatrix}I_{13}&0\end{bmatrix}\in\mathbb R^{13\times(37-3)},
\end{equation}
where $\Xi_{i,\alpha}$ collects the rows of $\Xi_i$ indexed by the entries of the subvector $\alpha$ of $x^{(i)}$. Such a basis exists since the last 24 columns of $Q_i$ have rank 3. Since $x^{(i)}$ lies in the column space of $\Xi_i$, we can parameterize $x^{(i)}=\Xi_i\check x^{(i)}$ with a reduced variable $\check x^{(i)}\in\mathbb R^{34}$. Consequently, $X^{(i)}=x^{(i)}(x^{(i)})^\top$ can be parameterized by the smaller matrix $\check X^{(i)}=\check x^{(i)}(\check x^{(i)})^\top\in\mathbb S_+^{34}$ as
\begin{equation}\label{eq:qrr:param}
X^{(i)}=\Xi_i\check X^{(i)}\Xi_i^\top.
\end{equation}
Note that $\Xi_i$ is chosen such that the hub block is preserved, i.e., $X^{(i)}_{\mathbf{hh}}=\check X^{(i)}_{\mathbf{hh}}$, where $\check X^{(i)}_{\mathbf{hh}}$ is the top left $13\times13$ block of $\check X^{(i)}$. Substituting~\eqref{eq:qrr:param} into~\eqref{eq:chordal:sdp}, we have the following SDP.
\begin{equation}\label{eq:reduced:sdp}
\begin{aligned}
\min_{\check X^{(i)}\in\mathbb S_+^{34}}\ &\sum_{i=1}^N\mathrm{trace}(\Xi_{i,y_i}\check X^{(i)}\Xi_{i,y_i}^\top)\\
\text{s.t.}\quad &\tilde C_{g_{\theta,i},k}\bullet(\Xi_i\check X^{(i)}\Xi_i^\top)=0,\quad \begin{cases}
     k\in[74]\\i\in[N]
\end{cases}\\
&\tilde C_{r_j}\bullet\check X^{(1)}_{\mathbf{hh}}=0,\quad j\in[20]\\
&\mathrm{trace}(\check X^{(1)}_{zz})=1\\
&\Xi_{i,u_i}\check X^{(i)}\Xi_{i,u_i}^\top\in\mathcal{BS}^{36}_{12},\quad i\in[N]\\
&\check X^{(i)}_{\mathbf{hh}}=\check X^{(1)}_{\mathbf{hh}},\quad i=2,\dots,N
\end{aligned}\tag{Reduced-SDP}
\end{equation}

\begin{theorem}\label{thm:qrr}
Under Assumption~\ref{assump:nondegenerate}, given $\theta$ in a neighborhood of $\bar\theta$, \eqref{eq:reduced:sdp} is equivalent to \eqref{eq:chordal:sdp}.
\end{theorem}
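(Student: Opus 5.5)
The plan is to use Theorems~\ref{thm:stable} and~\ref{thm:chordal}: for $\theta$ near $\bar\theta$, \eqref{eq:chordal:sdp} has a unique optimal solution, namely the clique blocks $\{x^{(i)*}(x^{(i)*})^\top\}_{i=1}^N$ of $x^*(x^*)^\top$. It therefore suffices to show two things. First, the map $\check X^{(i)}\mapsto\Xi_i\check X^{(i)}\Xi_i^\top$ sends feasible points of \eqref{eq:reduced:sdp} to feasible points of \eqref{eq:chordal:sdp} with the same objective value. Second, the optimal solution of \eqref{eq:chordal:sdp} lies in the image of this map and has a unique preimage. Together these give a bijection between the (singleton) optimal sets, which is what Definition~\ref{def:prob-equivalence} requires.

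\textbf{Forward direction.} Take $\check X^{(i)}\succeq0$ feasible for \eqref{eq:reduced:sdp} and set $X^{(i)}=\Xi_i\check X^{(i)}\Xi_i^\top$. Then $X^{(i)}\succeq0$. By \eqref{eq:qrr:xi:hub}, $X^{(i)}_{\mathbf{hh}}=\check X^{(i)}_{\mathbf{hh}}$, so the rotation, normalization and hub-consistency constraints transfer directly. The lifted model and block symmetry constraints are already written in terms of $\Xi_i\check X^{(i)}\Xi_i^\top$, and so is the objective. Hence each feasible point of \eqref{eq:reduced:sdp} maps to a feasible point of \eqref{eq:chordal:sdp} with the same cost, and so $p^*_{\text{Reduced}}\ge p^*_{\text{Chordal}}$.

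\textbf{Key step: the optimum is in the range.} Every feasible $X^{(i)}$ of \eqref{eq:chordal:sdp} satisfies $Q_iX^{(i)}Q_i^\top=0$.
\begin{itemize}
\item The first two rows of $Q_i$ form $\tilde Q_i$, and $\tilde Q_iX^{(i)}=0$ is a subset of the lifted model constraints: $g_{\theta,i}(x)\,(x^{(i)})^\top=0$ relaxes to $\tilde Q_iX^{(i)}=0$, which is stronger than what is needed.
\item For the third row, I would first try to obtain it directly from the feasible set. In the lifted problem, $A_{i,1}y_{i,2}-A_{i,2}y_{i,1}$ is a linear combination of the entries of $g_{\theta,i}(x)\,y_i^\top$ together with the block symmetry relations $y_{i,1}y_{i,2}^\top$ symmetric. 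Concretely, multiply row 1 of $g$ by $y_{i,2}$ and row 2 by $y_{i,1}$, and use $z y_{i,k}^\top=y_{i,k}z^\top$ from $\mathcal{BS}$. This shows $Q_{i,3}X^{(i)}Q_{i,3}^\top=0$ is implied at the SDP level.
\item If that fails, there is a fallback. The unique optimum of \eqref{eq:chordal:sdp} is rank one, $x^{(i)*}(x^{(i)*})^\top$, and $x^{(i)*}$ comes from a feasible point of Problem~\ref{prob:lifted}, where $y_i=v_i\otimes z$. Then $Q_ix^{(i)*}=0$ holds by the derivation preceding \eqref{eq:qrr:Q}.
\end{itemize}
In either case, $Q_iX^{(i)}Q_i^\top=0$ with $X^{(i)}\succeq0$ gives $Q_iX^{(i)}=0$, so the column space of $X^{(i)}$ lies in $\ker Q_i=\mathrm{range}\,\Xi_i$. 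Since $\Xi_i$ has full column rank, $\check X^{(i)}=\Xi_i^\dagger X^{(i)}(\Xi_i^\dagger)^\top$ is the unique preimage of $X^{(i)}$, and it is positive semidefinite.

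\textbf{Conclusion.} The optimum of \eqref{eq:chordal:sdp} has a feasible preimage in \eqref{eq:reduced:sdp}, so the two optimal values coincide. Any optimizer of \eqref{eq:reduced:sdp} maps to an optimizer of \eqref{eq:chordal:sdp}, which is unique. Injectivity of $\check X\mapsto\Xi\check X\Xi^\top$ (full column rank of $\Xi_i$) then makes the optimizer of \eqref{eq:reduced:sdp} unique as well, which yields the bijection.

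The main obstacle is the third-row argument: showing that the redundant linear constraint is satisfied by the relevant SDP solution, and not merely by rank-one lifted points. Invoking uniqueness and rank one from Corollary~\ref{cor:stable} through Theorem~\ref{thm:chordal} sidesteps this, which is why the statement is restricted to $\theta$ near $\bar\theta$.
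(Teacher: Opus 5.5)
Your proof is correct if you rely on your fallback, and the fallback is exactly the paper's argument. By Theorems~\ref{thm:stable} and~\ref{thm:chordal}, the optimum of \eqref{eq:chordal:sdp} consists of the rank-one clique blocks $x^{(i)*}(x^{(i)*})^\top$. Each satisfies $Q_ix^{(i)*}=0$ because $x^{(i)*}$ comes from a feasible point of Problem~\ref{prob:lifted}. Full column rank of $\Xi_i$ then gives a unique positive semidefinite preimage, and your optimal-value comparison turns this into the bijection between optimal sets.

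You should, however, delete the opening claim of the key step and the ``direct'' SDP-level route, because both are false. Not every feasible $X^{(i)}$ of \eqref{eq:chordal:sdp} satisfies $Q_iX^{(i)}Q_i^\top=0$. Write $q_3$ for the third row of $Q_i$.

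\textbf{What your combination actually gives.} Pair row~1 of $g_{\theta,i}$ with $y_{i,2}$ and row~2 with $y_{i,1}$, then use block symmetry of $X_{y_{i,1}y_{i,2}}$ and $X_{zy_{i,k}}$. This only yields that $q_3^\top X^{(i)}$ vanishes on the columns indexed by $z$. That is the SDP image of $(q_3^\top x^{(i)})\,z^\top=0$. In the QCQP you may divide by $z\neq0$, but the relaxation cannot, so nothing follows about the columns indexed by $y_i$.

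\textbf{A feasible point violating the claim.} Pick $w\in\mathbb R^{12}$ with $e_3^\top P_iw=0$ and $A_{i,1}w\neq0$. Such $w$ exists because $A_{i,1}\notin\mathrm{span}\{e_3^\top P_i\}$, as shown in Section~\ref{sec:reduce:qrr}. Set $\xi=(0,0_{12},0_{12},w)\in\mathbb R^{37}$. Then:
\begin{itemize}
\item $\tilde Q_i\xi=0$, so the lifted model constraints still hold;
\item the hub part of $\xi$ is zero, so the hub constraints are unchanged;
\item $\xi_{u_i}\xi_{u_i}^\top$ has $ww^\top$ as its only nonzero block, so it lies in $\mathcal{BS}^{36}_{12}$.
\end{itemize}
Hence replacing $X^{(i)}$ by $X^{(i)}+\epsilon\,\xi\xi^\top$, for any $\epsilon>0$, preserves feasibility. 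Yet
$q_3^\top(X^{(i)}+\epsilon\,\xi\xi^\top)q_3\ge\epsilon\,(A_{i,1}w)^2>0$.

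So the range condition genuinely holds only at the optimum. This is why uniqueness and rank one, and hence the neighborhood of $\bar\theta$, are indispensable, as you correctly diagnose at the end and as the paper's remark after the theorem states.
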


The proof is given in Appendix~\ref{appx:proof:qrr}.
The quotient ring reduction removes 3 dimensions per clique, reducing each block from $\mathbb{S}^{37}$ to $\mathbb{S}^{34}$. 
We quantify the resulting speed-up in Section~\ref{sec:exp:scalability}.

\begin{remark}
Again, Assumption~\ref{assump:nondegenerate} is needed \textit{not} for the quotient ring reduction, but for the bijection in Definition~\ref{def:prob-equivalence}.
For example, if \eqref{eq:chordal:sdp} has an optimal solution with $Q_iX^{(i)}\neq0$ for some $i\in[N]$, then it is not of the form $\Xi_i\check X^{(i)}\Xi_i^\top$, and so it does not correspond to any optimal solution of \eqref{eq:reduced:sdp}.
\end{remark}

\subsection{PnP*: A certifiable solver for Perspective-n-Point}\label{sec:pnp-star}
We can now solve the relaxation \eqref{eq:reduced:sdp} computationally efficiently. Under Assumption~\ref{assump:nondegenerate} and given $\theta$ in a neighborhood of $\bar\theta$, if the solution of \eqref{eq:reduced:sdp} has rank-1 blocks, then we can recover the global minimizer of Problem~\ref{prob:mle} by Corollary~\ref{cor:stable}. However, verifying this requires checking the rank of all $N$ blocks of size $34$. The next Proposition shows that the $13\times13$ hub block alone suffices.

\begin{proposition}\label{prop:hub:rank}
Let $(\check X^{(1)},\dots,\check X^{(N)})$ be an optimal solution of \eqref{eq:reduced:sdp}. Then $\check X^{(i)}$ has rank 1 for all $i\in[N]$ if and only if $\check X^{(1)}_{\mathbf{hh}}$ has rank 1.
\end{proposition}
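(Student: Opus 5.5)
The forward direction is immediate. The reverse direction reduces to a clique-by-clique argument: once the shared hub block has rank one, the positive semidefiniteness of each block, together with optimality, forces the spoke part of that block to be rank one as well.

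Throughout, write $X^{(i)}\triangleq\Xi_i\check X^{(i)}\Xi_i^\top$. Since $\Xi_i$ has full column rank, $\mathrm{rank}(X^{(i)})=\mathrm{rank}(\check X^{(i)})$. Also, $X^{(i)}_{\mathbf{hh}}=\check X^{(i)}_{\mathbf{hh}}=\check X^{(1)}_{\mathbf{hh}}$ by \eqref{eq:qrr:xi:hub} and the consensus constraints. So it suffices to argue about the $37\times37$ matrices $X^{(i)}$.

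\emph{Forward direction.} Suppose every $\check X^{(i)}$ has rank 1. Then $\check X^{(1)}_{\mathbf{hh}}$ is a principal submatrix of a rank-1 PSD matrix, so its rank is at most 1. It is nonzero because $\mathrm{trace}(\check X^{(1)}_{zz})=1$. Hence it has rank exactly 1.

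\emph{Reverse direction.} Suppose $\check X^{(1)}_{\mathbf{hh}}=\eta\eta^\top$ with $\eta=(h,z)\in\mathbb R^{13}$ and $\|z\|=1$. Fix $i$ and write
$$X^{(i)}=\begin{pmatrix}\eta\eta^\top&B^\top\\B&Y\end{pmatrix}\succeq0.$$

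1. Since $X^{(i)}\succeq0$, the range of $B^\top$ lies in the range of $\eta\eta^\top$. Hence $B=b\eta^\top$ for some $b\in\mathbb R^{24}$. The Schur complement then gives $Y\succeq bb^\top$.

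2. Set $\xi\triangleq(\eta,b)$. Then $X^{(i)}(\eta,0)=\|\eta\|^2\xi$, so $\xi$ lies in the range of $X^{(i)}$, which is contained in the range of $\Xi_i$. Therefore $\xi=\Xi_i\check\xi$ for some $\check\xi\in\mathbb R^{34}$, and $Q_i\xi=0$.

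3. Define the candidate $\check X'^{(i)}\triangleq\check\xi\check\xi^\top$, so that $X'^{(i)}=\xi\xi^\top$. It has the same hub block $\eta\eta^\top$, so the consensus, rotation and normalization constraints are unaffected. It remains to check the two clique-local constraints.
   - \emph{Lifted model constraints.} Their relaxation is linear in the entries $(\tilde Q_iX^{(i)})_{kl}$. Since $\tilde Q_i\xi=0$ (the first two rows of $Q_i$), we get $\tilde Q_i\xi\xi^\top=0$.
   - \emph{Block symmetry constraints.} The off-diagonal blocks of $X'^{(i)}_{u_iu_i}$ are $z b_k^\top$, where $b_k\in\mathbb R^{12}$ are the blocks of $b$; these coincide with those of $X^{(i)}$, so they are symmetric. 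Because $z\ne0$, symmetry of $zb_k^\top$ forces $b_k=c_kz$ for a scalar $c_k$. Hence $b_kb_l^\top=c_kc_lzz^\top$ is symmetric as well.

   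So the modified tuple, which replaces only block $i$, is feasible for \eqref{eq:reduced:sdp}.

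4. The objective contribution of block $i$ changes from $\mathrm{trace}(Y)$ to $\mathrm{trace}(bb^\top)$. By optimality, $\mathrm{trace}(Y-bb^\top)\le0$. Combined with $Y-bb^\top\succeq0$, this gives $Y=bb^\top$. Hence $X^{(i)}=\xi\xi^\top$ has rank 1, and so does $\check X^{(i)}$.

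The step needing the most care is step 3, the feasibility of the rank-1 replacement. It relies on two facts established above: the hub column of $X^{(i)}$ alone certifies $Q_i\xi=0$, and block symmetry propagates from the mixed blocks to the spoke blocks via $z\neq0$. Everything else is a standard Schur-complement argument.
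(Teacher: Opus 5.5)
Your proposal is correct and takes essentially the same route as the paper's proof. Both arguments factor the off-diagonal block as $b\eta^\top$ using positive semidefiniteness, show that the rank-one matrix $\xi\xi^\top$ lies in the range of $\Xi_i$ and satisfies the lifted model and block-symmetry constraints, and then combine the Schur-complement bound $Y\succeq bb^\top$ with optimality to force $Y=bb^\top$; your range-inclusion step and single-block replacement are just streamlined versions of the paper's column argument via $Q_iX^{(i)}=0$ and its ``unique optimal candidate'' comparison.
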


The proof is given in Appendix~\ref{appx:proof:hub:rank}. Nevertheless, recovering the global minimizer of Problem~\ref{prob:mle} from it still relies on Corollary~\ref{cor:stable}, and hence on $\theta$ being in a neighborhood of $\bar\theta$. 
This is because the feasible sets of Problem~\ref{prob:naive} and \ref{prob:lifted} contain points with $e_3^\top P_iz=0$ or $h=0$, which are excluded from being optimal only for $\theta$ near $\bar\theta$ (see Remark~\ref{remark:depth} and~\ref{remark:h}). In practice, however, we can check the two conditions on the solution directly, which removes the restriction on $\theta$.

\begin{proposition}\label{prop:recover}
Let $(\check X^{(1)},\dots,\check X^{(N)})$ be an optimal solution of \eqref{eq:reduced:sdp}. Suppose that $\check X^{(1)}_\mathbf{hh}$ has rank $1$, and write $\check X^{(1)}_\mathbf{hh}=(h,z)(h,z)^\top$. Then $(h,z)$ is the hub part of a global minimizer of Problem~\ref{prob:lifted}. Moreover, if $e_3^\top P_iz\neq0$ for all $i\in[N]$ and $h\neq0$, then $z/h$ is a global minimizer of Problem~\ref{prob:mle}.
\end{proposition}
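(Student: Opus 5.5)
The plan has two parts. First, I show that a rank-1 hub block yields a rank-1 feasible point of \eqref{eq:sdp} whose value equals the optimum; this certifies global optimality for Problem~\ref{prob:lifted}. Second, I push that point down through the chain of reformulations, using the two extra conditions in place of the neighborhood assumption.

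\emph{Step 1 (rank-1 lift).} By Proposition~\ref{prop:hub:rank}, every block $\check X^{(i)}$ has rank 1, say $\check X^{(i)}=\check x^{(i)}(\check x^{(i)})^\top$. Then $X^{(i)}=\Xi_i\check X^{(i)}\Xi_i^\top=x^{(i)}(x^{(i)})^\top$ with $x^{(i)}=\Xi_i\check x^{(i)}$. By \eqref{eq:qrr:xi:hub} the hub part of $x^{(i)}$ is the hub part of $\check x^{(i)}$. The coupling constraints $\check X^{(i)}_{\mathbf{hh}}=\check X^{(1)}_{\mathbf{hh}}=(h,z)(h,z)^\top$ force that hub part to equal $\pm(h,z)$, and I flip the sign of $\check x^{(i)}$ so that it equals $(h,z)$ exactly; here I use that $(h,z)\neq0$, which holds since $\mathrm{trace}(X_{zz})=1$. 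Write $x^{(i)}=(h,z,y_i)$ and assemble $x=(h,z,y_1,\dots,y_N)$. Each constraint of \eqref{eq:sdp} involves only one clique. Hence $X=xx^\top$ satisfies all constraints of \eqref{eq:sdp}, because its clique blocks coincide with the feasible $X^{(i)}$ of \eqref{eq:chordal:sdp}. Since $X$ is rank 1, $x$ is feasible for Problem~\ref{prob:lifted}.

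\emph{Step 2 (optimality without the neighborhood assumption).} Only the easy directions of the equivalences are needed here, not the bijections. Any feasible $x'$ of Problem~\ref{prob:lifted} gives $X'=x'x'^\top$ feasible in \eqref{eq:sdp}. Its clique blocks are feasible in \eqref{eq:chordal:sdp}. Moreover $Q_ix'^{(i)}=0$, because these linear constraints are redundant on the QCQP feasible set, as derived before \eqref{eq:qrr:Q}. So $x'^{(i)}=\Xi_i\check x'^{(i)}$, which gives a feasible point of \eqref{eq:reduced:sdp} with the same objective $\|y'\|^2$. Thus $\mathrm{opt}(\text{Reduced-SDP})\le p^*_{\text{lifted}}$. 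The point $x$ from Step 1 attains the value $\sum_i\|y_i\|^2=\mathrm{opt}(\text{Reduced-SDP})$ and is feasible for Problem~\ref{prob:lifted}. Hence $x$ is a global minimizer and $(h,z)$ is its hub part.

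\emph{Step 3 (recovery).} Assume $h\neq0$ and $e_3^\top P_iz\neq0$ for all $i$. From $z\neq0$ and the block symmetry constraint we get $y_i=v_i\otimes z$ for some $v_i$; the cited argument from \citet{C21} uses this together with $\|z\|=1$. Set $\hat z=z/h$, $\hat v=v/h$, and $\hat w=(1,\hat z,\hat v)$. I would check three facts directly. The rotation constraints are homogeneous, so they hold at $(1,\hat z)$, giving $\hat R\in\mathrm{SO}(3)$. The lifted model constraint $g_{\theta,i}(x)\,h=0$ divided by $h^2$ gives $m_{\theta,i}(\hat z,v_i)=0$. Because $e_3^\top P_i\hat z\neq0$, this yields $v_i=\theta_i-\pi_i(\hat z)$, so the objective equals $\|v\|^2=\|y\|^2$. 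Note that the scaling by $h$ must be tracked carefully here: under normalization $\|z\|=1$ we have $y=v\otimes z$ with the residuals being $v$ itself, not $v/h$. For the reverse comparison, take any feasible $\hat z'$ of Problem~\ref{prob:mle}. Its residual vector $v'$ gives the lifted point $\bigl(1,\hat z',v'\otimes\hat z'\bigr)/\|\hat z'\|$, which is feasible for Problem~\ref{prob:lifted} with objective $\|v'\|^2$. Minimality of $x$ then gives $\sum_i\|\theta_i-\pi_i(\hat z)\|^2\le\sum_i\|\theta_i-\pi_i(\hat z')\|^2$.

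The main obstacle is Step 1's sign alignment, together with confirming that every constraint of \eqref{eq:sdp} is clique-local. I also need to verify that $y_i$ computed this way really equals $v_i\otimes z$ with $v_i$ equal to the residual. These are bookkeeping issues rather than conceptual ones.
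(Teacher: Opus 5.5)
Your proposal is correct and follows the paper's proof essentially step for step. The steps match: rank-one blocks via Proposition~\ref{prop:hub:rank}, the sign-aligned assembly of $x=(h,z,y)$, global optimality from feasibility plus the relaxation bound, and recovery of $z/h$ by comparison with the lifted images $(1,\hat z',v'\otimes\hat z')/\|\hat z'\|$ of MLE-feasible poses. You are more explicit than the paper in justifying that \eqref{eq:reduced:sdp} relaxes Problem~\ref{prob:lifted}, via $Q_ix'^{(i)}=0$ on the whole lifted feasible set, including points with $h=0$.

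The only blemish is the unused definition $\hat v=v/h$, $\hat w=(1,\hat z,\hat v)$. You should drop it, since, as you note yourself, the residuals at $\hat z$ are $v$ and not $v/h$.
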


The proof is given in Appendix~\ref{appx:proof:recover}.
Since Proposition~\ref{prop:recover} makes no assumption on $\theta$, we do not have to restrict $\theta$ to be in a neighborhood of $\bar\theta$. We now introduce our proposed solver, PnP$^*$, named after optimality and the star graph structure. We first solve~\eqref{eq:reduced:sdp} and take the eigenvector $(h,z)$ of $\check X^{(1)}_{\mathbf{hh}}$ with the largest eigenvalue. If $h=0$, then no pose can be extracted and we report failure. Otherwise, we extract $z^*=z/h=(\mathrm{vec}(R^*),t^*)$, and if $\check X^{(1)}_{\mathbf{hh}}$ has rank 1 and $e_3^\top P_iz^*\neq0$ for all $i\in[N]$, then $(R^*,t^*)$ is a certified global minimizer of Problem~\ref{prob:mle} by Proposition~\ref{prop:recover}. If either check fails, we project $R^*$ onto $\mathrm{SO}(3)$ and use it as an initialization for a local refinement by Levenberg--Marquardt. The complete procedure is given in Algorithm~\ref{alg:pnp}.

\begin{algorithm}[ht]
\caption{PnP* algorithm}\label{alg:pnp}
\begin{algorithmic}[1]
\Require $\{p_i\}_{i=1}^N$, $\{\theta_i\}_{i=1}^N$, and $K$
\Ensure $(R,t)\in\mathrm{SO}(3)\times\mathbb{R}^3$ or \texttt{NaN}
\State solve \eqref{eq:reduced:sdp} for $(\check X^{(1)},\dots,\check X^{(N)})$
\State $(h,z)\gets$ eigenvector of $\check X^{(1)}_{\mathbf{hh}}$ with the largest eigenvalue
\If{$h=0$}
    \State \Return \texttt{NaN}
\EndIf
\State extract $z^*\gets z/h$ and write $z^*=(\text{vec}(R^*),t^*)$
\If{$\check X^{(1)}_{\mathbf{hh}}$ is rank 1 \textbf{and} $e_3^\top P_iz^*\neq0$ for all $i$}
    \State \Return $(R^*,t^*)$ \Comment{certified pose}
\EndIf
\State $R\gets$ projection of $R^*$ onto $\mathrm{SO}(3)$
\State \Return local refinement initialized at $(R,t^*)$ 
\end{algorithmic}
\end{algorithm}

\section{Experimental results}\label{sec:experiment}
We use \citet{mosek} as our SDP solver and for the convex relaxations in BnB.
We determine the rank of a matrix by the number of its eigenvalues greater than $10^{-5}$.
All experiments are conducted over 40 Monte Carlo runs on an M4 MacBook Air with 16GB RAM. Our implementation is available online\footnote{\url{https://github.com/doggydoggy0101/pnp-star}}.

\subsection{Stability}\label{sec:exp:stability}
In this experiment, we validate the \textit{stability} result numerically, meaning our SDP relaxation is exact under a certain amount of noise. We set the number of points to $N=6$, the minimal configuration that admits a unique global solution under Assumption~\ref{assump:nondegenerate}.
We first generate the camera pose with a random rotation and a random translation with the norm less than or equal to 5.
For the camera intrinsic, we use the conventional setting~\citep{LMF09}
$$
K=\begin{pmatrix}
    800&0&320\\0&800&240\\0&0&1
\end{pmatrix}.
$$ 
We sample 2D measurements uniformly over the $640\times480$ image, then back-project them to 3D points with depths uniformly sampled in $[5,35]$. Finally, we add Gaussian noise to the 2D measurements with a standard deviation of $\sigma$ pixels.

We compare our PnP* against the minimal solver P3P~\citep{GHT03}, the local non-minimal solvers EPnP~\citep{LMF09} and SQPnP~\citep{TL20}, both with and without Levenberg--Marquardt (LM) refinement, the Alternative solver~\citep{SD20}, and branch-and-bound (BnB)~\citep{OKO06}. The Alternative solver is a certifiable solver for the weighted reprojection error, meaning \textit{it does not optimize the MLE objective}. In contrast, PnP* and BnB minimize the MLE directly. 

For P3P, we run all possible $3$-point combinations and keep the solution with the lowest reprojection error over all correspondences.
We run BnB with a relative suboptimality tolerance of $\varepsilon=10^{-4}$. Since BnB searches over the rotation space, it requires prior bounds on the translation such that the search space is compact. We compute an upper bound from the pose $(\breve R,\breve t)=(I_3,(0,0,1+\max_i\|p_i\|))$, whose depths $e_3^\top P_i\breve z$ are positive for all $i\in[N]$, and a lower bound by assuming that the reference point in BnB has depth at least 1, which holds since we sample depths in $[5,35]$.

\begin{figure*}[t]
    \centering
    \includegraphics[width=1\linewidth]{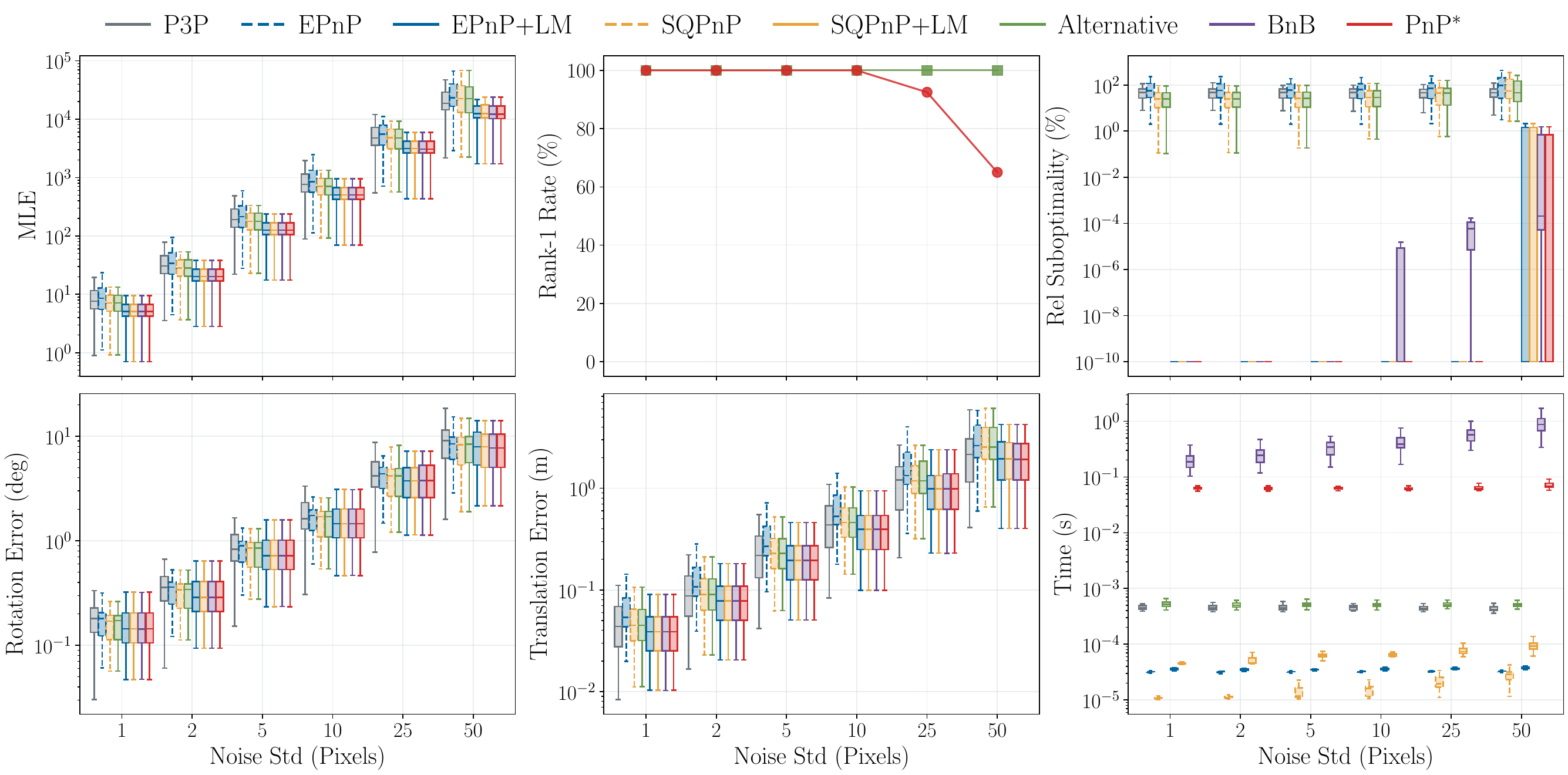}
    \caption{Noise sweep for $N=6$ with P3P~\citep{GHT03}, EPnP~\citep{LMF09}, SQPnP~\citep{TL20}, Alternative solver~\citep{SD20}, BnB~\citep{OKO06}, and our proposed PnP*.}
    \label{fig:exp1}
\end{figure*}

Figure~\ref{fig:exp1} reports the results of a noise sweep. The first plot of the top row shows the MLE objective of each method's solution. P3P never attains the optimum, even with the exhaustive selection over all 3-point combinations. SQPnP minimizes the same weighted reprojection error as the Alternative solver, and their results coincide. Interestingly, LM refinement empirically reaches the global minimizer regardless of its initialization. Nevertheless, PnP$^*$ provides a certificate that the local solvers actually work well. The remaining two plots in the top row show the rate of rank-1 solutions and the relative suboptimality bound $(\bar p-p^*_\mathrm{SDP})/p^*_\mathrm{SDP}$, where $\bar p$ is the MLE objective of a given solver's solution and $p^*_\mathrm{SDP}$ is the optimal value of~\eqref{eq:reduced:sdp}, a lower bound on the optimal value of Problem~\ref{prob:mle}. The relaxation is exact in all runs for $\sigma$ up to 10 pixels, and the bound certifies that LM refinement, BnB, and PnP$^*$ all attain the global minimizer under moderate noise. When the relaxation is not exact, PnP$^*$ falls back to LM refinement (Algorithm~\ref{alg:pnp}), and the bound still certifies that the refined solution is near-optimal in most runs.

The first two plots of the bottom row show the rotation error (in degrees) and the translation error (in meters). The MLE optimum also corresponds to a lower error in the estimated pose.
The last plot reports the solve time. PnP$^*$ is about three orders of magnitude slower than the local non-minimal solvers, while remaining $3\times$ to $13\times$ faster than the other global solver BnB.

\subsection{Scalability}\label{sec:exp:scalability}
We next compare the \textit{scalability} of solvers that optimize the MLE globally. Among prior work, BnB~\citep{OKO06} is the only tractable method that globally optimizes the MLE objective, so it is our sole external baseline.
Alongside it, we evaluate the three formulations \eqref{eq:sdp}, \eqref{eq:chordal:sdp}, and \eqref{eq:reduced:sdp}. 
We perform a sweep  over the number of points at a fixed noise of $\sigma=10$ pixels, which Figure~\ref{fig:exp1} shows to lie inside the neighborhood of the stable relaxation.
The remaining settings are the same as in Section~\ref{sec:exp:stability}.

\begin{figure*}[t]
    \centering
    \includegraphics[width=1\linewidth]{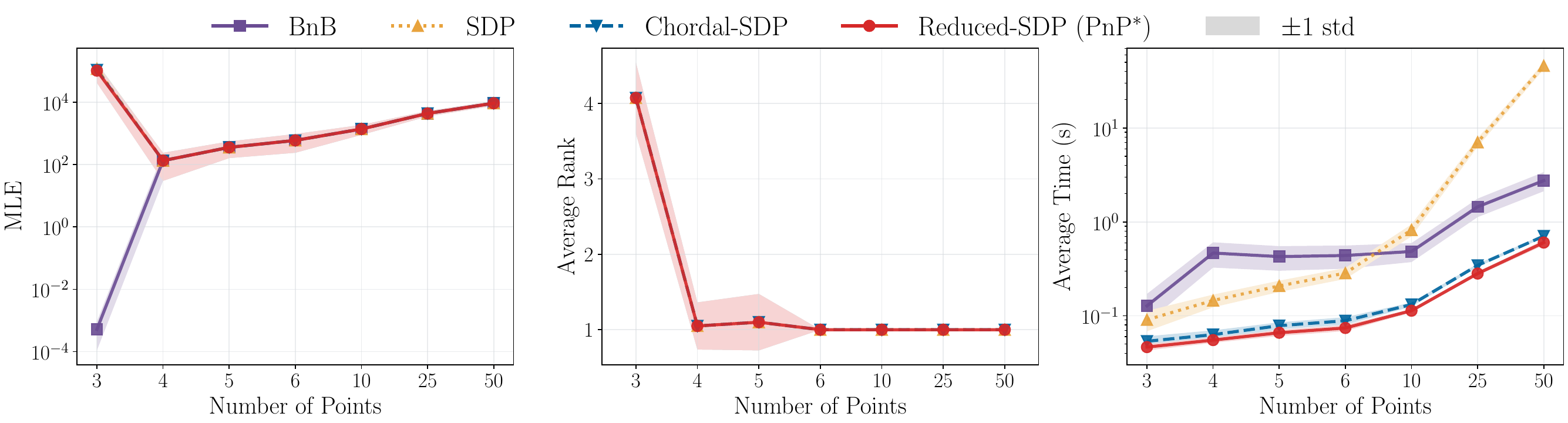}
    \caption{Point sweep for $\sigma=10$ pixels with BnB~\citep{OKO06}, \eqref{eq:sdp}, \eqref{eq:chordal:sdp}, and \eqref{eq:reduced:sdp}.}
    \label{fig:exp2}
\end{figure*}

Figure~\ref{fig:exp2} reports the results. The first two plots show the MLE objective and the rank of the SDP solution. Since the three formulations return the same solution in every run, their curves are the same. At $N=3$, the relaxation is tight, i.e., $p^*_\mathrm{SDP}$ agrees with the optimal value of BnB to numerical precision, but it is not exact since P3P admits up to four poses that fit the measurements exactly~\citep{FB81}. Problem~\ref{prob:mle} then has multiple optimal solutions, and the interior-point method returns a convex combination of the corresponding rank-1 solutions, and our solution extraction procedure fails to extract any of them. In this case, we observe a few instances violating the condition $h\neq0$ in Proposition~\ref{prop:recover}.
For $N\geq4$, all three formulations return rank 1 solutions on 39 of 40 runs at $N=4$, 37 of 40 runs at $N=5$, and all 40 runs for every $N\geq6$. This is consistent with Assumption~\ref{assump:nondegenerate}, which requires $N\geq6$ for the guarantee, while the relaxation is already exact in most runs for $N=4,5$.

The third plot reports the average runtime. Exploiting chordal sparsity is what makes the relaxation tractable, which \eqref{eq:chordal:sdp} is already $6\times$ faster than~\eqref{eq:sdp} at $N=10$, and the gap widens to about $65\times$ at $N=50$. The quotient ring reduction further makes~\eqref{eq:reduced:sdp} about $12$--$18\%$ faster than~\eqref{eq:chordal:sdp} at this range. Altogether, PnP$^*$ is between $3\times$ and $8\times$ faster than BnB across the sweep. At the minimal configuration $N=6$,~\eqref{eq:reduced:sdp} takes 74\,ms on average, a certified global solution is available at more than $10$ Hz.

\section{Conclusion}\label{sec:conclusion}
We present PnP*, the first scalable certifiable solver that optimizes the MLE of Perspective-n-Point directly. By introducing only necessary auxiliary variables and adding redundant constraints that preserve the problem's sparsity, we derive a convex SDP relaxation that is provably exact under moderate noise. By exploiting the problem's chordal sparsity structure, and together with quotient ring reduction, we reformulated an equivalent scalable SDP that returns a provably correct camera pose over 10 Hz at the minimal configuration. More broadly, this work extends scalable certifiable solvers to the perspective projection.

Future directions include incorporating outlier robustness for real-world application, and extending the relaxation technique to other large-scale vision problems such as camera calibration and bundle adjustment.

\begin{appendices}

\section{QCQP reformulations}\label{secA}
\setcounter{equation}{0}

\begin{definition}\label{def:well-defined}
Let $\mathcal{A}$ and $\mathcal{B}$ be sets, and consider the mapping $\phi:\mathcal A\to\mathcal B$. Then
\begin{enumerate}[(i)]
    \item $\phi$ is \textit{well-defined} if $\phi(a)\in\mathcal{B}$ for all $a\in\mathcal{A}$, and $a_1=a_2$ implies $\phi(a_1)=\phi(a_2)$ for all $a_1,a_2\in\mathcal{A}$.
    \item $\phi$ is \textit{injective} if $\phi(a_1)=\phi(a_2)$ implies $a_1=a_2$ for all $a_1,a_2\in\mathcal{A}$.
    \item $\phi$ is \textit{surjective} if for all $b\in\mathcal B$, there exists $a\in\mathcal A$ such that $\phi(a)=b$.
    \item $\phi$ is \textit{bijective} if it is both injective and surjective.
\end{enumerate}
\end{definition}

\begin{lemma}\label{lem:homogenization}
\eqref{eq:qcqp:inhomo} and \eqref{eq:qcqp:homo} are equivalent.
\end{lemma}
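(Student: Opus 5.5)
The plan is to build an explicit map between equivalence classes of optimal points, in the sense of Definitions~\ref{def:sign-equivalence} and~\ref{def:prob-equivalence}, and check that it is a bijection using Definition~\ref{def:well-defined}. The optimal points of \eqref{eq:qcqp:inhomo} are singletons $\{\tilde x\}$, while those of \eqref{eq:qcqp:homo} are pairs $\{\pm x\}$. Define
\[
\phi(\tilde x)\triangleq[(1,\tilde x)]=\{\pm(1,\tilde x)\}.
\]

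The first step is a pointwise correspondence of feasible sets and objective values. By \eqref{eq:inhomo2homo}, write each quadratic as $x^\top C_j x$ with $C_j=\begin{pmatrix} r_j & q_j^\top\\ q_j & P_j\end{pmatrix}$. If $\tilde x$ is feasible for \eqref{eq:qcqp:inhomo}, then $x=(1,\tilde x)$ satisfies $h^2=1$ and $x^\top C_j x=0$, and it has the same objective value. Conversely, let $x=(h,\tilde x')$ be feasible for \eqref{eq:qcqp:homo}. Then $h=\pm1$, and by homogeneity of degree two, $h\tilde x'$ satisfies every constraint of \eqref{eq:qcqp:inhomo}, since $x^\top C x=(hx)^\top C(hx)$ and $hx=(1,h\tilde x')$. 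Its objective value also equals $x^\top C_0 x$. Hence both problems attain the same set of objective values, so $p^*_\text{QCQP}$ coincides for the two problems, including the cases where the infimum is not attained or the problem is infeasible. Optimal points therefore map to optimal points in both directions.

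The second step verifies the bijection. For \emph{well-definedness}, $\phi(\tilde x)$ is an optimal class by step one. For \emph{injectivity}, suppose $[(1,\tilde x_1)]=[(1,\tilde x_2)]$. Then $(1,\tilde x_1)=\pm(1,\tilde x_2)$, and the first coordinate forces the sign to be $+$, so $\tilde x_1=\tilde x_2$. For \emph{surjectivity}, take an optimal class $[x]$ with $x=(h,\tilde x')$. Then $\tilde x\triangleq h\tilde x'$ is optimal for \eqref{eq:qcqp:inhomo} by step one, and $(1,\tilde x)=hx\in[x]$, so $\phi(\tilde x)=[x]$. This last step also shows that the inverse map $[x]\mapsto h\tilde x'$ does not depend on the representative chosen: replacing $x$ by $-x$ gives $(-h)(-\tilde x')=h\tilde x'$.

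I expect no real obstacle. The only point that needs care is confirming that the class-level map is well-defined, which rests on two facts: the constraint $h^2=1$ pins $h$ to $\pm1$, and the sign flip acts by $x\mapsto -x$, which leaves $h\tilde x'$ unchanged.
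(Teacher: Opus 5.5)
Your proposal is correct and is essentially the paper's proof. The paper defines the map in the reverse direction, $[(h,\tilde x)]\mapsto\tilde x/h$ from classes of feasible points of \eqref{eq:qcqp:homo} to feasible points of \eqref{eq:qcqp:inhomo}, shows it is a well-defined, objective-preserving bijection of feasible sets, and then restricts it to optimal sets; you instead map $\tilde x\mapsto[(1,\tilde x)]$ directly between optimal sets, and since $h=\pm1$ your inverse $h\tilde x'$ coincides with the paper's $\tilde x'/h$, so the difference is only cosmetic.
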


\begin{proof}
Let $f_j(\tilde x)=\tilde x^\top P_j\tilde x+2q_j^\top\tilde x+r_j$ for $j=0,\dots,\ell$, and denote the feasible sets of \eqref{eq:qcqp:inhomo} and \eqref{eq:qcqp:homo}, respectively, by
\begin{align*}
  \mathcal{F} &\triangleq \{\tilde x\mid f_j(\tilde x)=0,\ j\in[\ell]\},\\
  \mathcal{F}_\text{H} &\triangleq\{x=(h,\tilde x)\mid x^\top C_jx=0,\ j\in[\ell],\ h^2=1\}.
\end{align*}
By Definition~\ref{def:sign-equivalence}, denote by $[\mathcal{F}_\text{H}]\triangleq\{[x]\mid x\in\mathcal{F}_\text{H}\}$ its set of equivalence classes. Consider the mapping
\begin{equation*}
  \phi:[\mathcal{F}_\text{H}]\to\mathcal{F},\quad [(h,\tilde x)]\mapsto\tilde x/h.
\end{equation*}

We first show that $\phi$ is well-defined. Let $x=(h,\tilde x)\in\mathcal{F}_\text{H}$, then $\phi([x])=\tilde x/h$ is defined since $h^2=1$ implies $h\neq0$. Next, by \eqref{eq:inhomo2homo} and $h^2=1$, we have
\begin{equation}\label{eq:homo-identity}
\begin{aligned}
  x^\top C_jx&=\tilde x^\top P_j\tilde x+2hq_j^\top\tilde x+r_jh^2\\
  &= h^2 f_j(\tilde x/h) = f_j(\tilde x/h)
  \end{aligned}
\end{equation}
for $j=0,\dots,\ell$. Therefore, $x^\top C_jx=f_j(\tilde x/h)=0$ for all $j\in[\ell]$, and so $\phi([x])\in\mathcal F$.
Finally, let $[x_1]=[x_2]$. Since $x_2=\pm x_1$, we have $\phi([x_2])=(\pm\tilde x_1)/(\pm h_1)=\tilde x_1/h_1=\phi([x_1])$.

We next show that $\phi$ is a bijection (between feasible sets). For injectivity, let $\phi([x_1])=\phi([x_2])$, i.e., $\tilde x_1/h_1=\tilde x_2/h_2$. Since $h_1,h_2\in\{\pm1\}$, $x_2=\pm x_1$, i.e., $[x_1]=[x_2]$. For surjectivity, let $\tilde x\in\mathcal{F}$ and $x=(1,\tilde x)$. Since $x^\top C_jx=f_j(\tilde x)=0$ for all $j\in[\ell]$ by \eqref{eq:homo-identity} and $h^2=1$, $[x]\in[\mathcal{F}_\text{H}]$ and $\phi([x])=\tilde x$.

Furthermore, $\phi$ preserves the objective value since $x^\top C_0x=f_0(\tilde x/h)=f_0(\phi([x]))$ by \eqref{eq:homo-identity}. Therefore, $\phi$ is a bijection between the equivalence classes of optimal sets, and the two problems are equivalent by Definition~\ref{def:prob-equivalence}.
\end{proof}

\subsection{Proof of Proposition~\ref{prop:mle:unique}}\label{appx:proof:mle:unique}

\begin{proof}
Consider Problem~\ref{prob:mle} at $\bar\theta$. Note that by \eqref{eq:noiseless}, $z_\mathrm{gt}$ attains zero objective value, which is the lowest possible for any sum of squares objective, hence $z_\mathrm{gt}$ is a global minimizer.

Let $z^*$ be a global minimizer. Since $z_\mathrm{gt}$ has zero objective value, $z^*$ also has to attain zero objective value, hence we have
$$
\bar\theta_i-\pi_i(z^*)=\bar\theta_i-\frac{\begin{bmatrix}I_2&0\end{bmatrix}KP_iz^*}{e_3^\top P_iz^*}=0
$$
for all $i\in[N]$. Multiplying by $e_3^\top P_iz^*$ gives $(\bar\theta_ie_3^\top-\begin{bmatrix}I_2&0\end{bmatrix}K)P_iz^*=0$. By stacking the equations for $N$ points, we have the linear system $A(\bar\theta)z^*=0$, where $A:\mathbb{R}^{2N}\to\mathbb{R}^{2N\times 12}$ is an affine mapping defined by
\begin{equation}\label{eq:affine}
A(\theta)\triangleq
\begin{pmatrix}
    \vdots  \\
    (\theta_ie_3^\top-\begin{bmatrix}I_2&0\end{bmatrix}K)P_i \\
    \vdots
\end{pmatrix}\in\mathbb{R}^{2N\times12}.
\end{equation}

That is, every global minimizer has to lie in $\ker(A(\bar\theta))$. To determine the kernel of $A(\bar\theta)$, we first consider its rank. Since $z_\mathrm{gt}\in\ker(A(\bar\theta))$ and $z_\text{gt}\neq0$, by rank-nullity theorem, $\text{rank}(A(\bar\theta))\leq 11$.
If $\text{rank}(A(\bar\theta))<11$, then there exists more than 1 (vectorized) camera resectioning solution that satisfy the linear system $A(\bar\theta)z^*=0$, which contradicts Assumption~\ref{assump:nondegenerate}~\citep[Result 22.6]{HZ03}\footnote{If $N\leq 5$, then $\text{rank}(A(\bar\theta))\leq 2N\leq10$, hence $N\geq 6$ is necessary in Assumption~\ref{assump:nondegenerate}.}. 
Altogether, $\text{rank}(A(\bar\theta))=11$ and $\dim(\ker(A(\bar\theta)))=1$.

Since $z_\mathrm{gt}\in\ker(A(\bar\theta))$, we can write $\ker(A(\bar\theta))=\text{span}\{z_\text{gt}\}$.
Let $z^*=cz_\text{gt}$ for some scalar $c$, and denote $R^*=cR_\text{gt}$ the rotation component. We next determine $c$ by the rotation constraint $R^*\in\text{SO}(3)$. Since rotations are orthonormal, $(R^*)^\top R^*=c^2I_3=I_3$ and hence $c=\pm1$. Since rotations have determinant 1, $\det(R^*)=c^3=1$ forces $c=1$. Therefore, $z^*=z_\text{gt}$ is the unique minimizer.
\end{proof}

\subsection{Rotation constraints}\label{appx:rot}
Denote by $c^{(i)}\in\mathbb{R}^3$ the $i$-th column of $R$ such that $z=(c^{(1)},c^{(2)},c^{(3)},t)$, and by
$c^{(i)}_k$ its $k$-th entry.
Column orthonormality gives (6 constraints)
\begin{align}
  \|c^{(1)}\|^2&=h^2, \label{eq:rot:col1}\\
  \|c^{(2)}\|^2&=h^2, \label{eq:rot:col2}\\
  \|c^{(3)}\|^2&=h^2, \label{eq:rot:col3}\\
  (c^{(1)})^\top c^{(2)}&=0, \notag\\
  (c^{(1)})^\top c^{(3)}&=0, \notag\\
  (c^{(2)})^\top c^{(3)}&=0. \notag
\end{align}
Row orthonormality gives (6 constraints)
\begin{align}
  \textstyle\sum_{k=1}^3(c^{(k)}_1)^2&=h^2, \label{eq:rot:row1}\\
  \textstyle\sum_{k=1}^3(c^{(k)}_2)^2&=h^2, \label{eq:rot:row2}\\
  \textstyle\sum_{k=1}^3(c^{(k)}_3)^2&=h^2, \label{eq:rot:row3}\\
  \textstyle\sum_{k=1}^3c^{(k)}_1c^{(k)}_2&=0, \notag\\
  \textstyle\sum_{k=1}^3c^{(k)}_1c^{(k)}_3&=0, \notag\\
  \textstyle\sum_{k=1}^3c^{(k)}_2c^{(k)}_3&=0. \notag
\end{align}
Determinant gives (9 constraints)
\begin{align}
  c^{(2)}\times c^{(3)}&=h\,c^{(1)}, \label{eq:rot:det1}\\
  c^{(3)}\times c^{(1)}&=h\,c^{(2)}, \label{eq:rot:det2}\\
  c^{(1)}\times c^{(2)}&=h\,c^{(3)}, \label{eq:rot:det3}
\end{align}
each of which consists of 3 scalar constraints.

Note that summing \eqref{eq:rot:col1}--\eqref{eq:rot:col3} and summing
\eqref{eq:rot:row1}--\eqref{eq:rot:row3} both give
$\sum_{k=1}^3\|c^{(k)}\|^2=3h^2$, hence we can write \eqref{eq:rot:row3} as a linear combination of the other constraints, and so we omit it. The remaining 20 constraints are linearly independent, and we denote them by $r_j(h,z)=0$ for $j\in[20]$.

\subsection{Proof of Theorem~\ref{thm:naive}}\label{appx:proof:naive}
Denote the feasible sets of Problem~\ref{prob:mle} by 
\begin{equation}\label{eq:feasible-set:mle}
\begin{aligned}
    \mathcal Z\triangleq\{z\in\mathbb{R}^{12}&\mid e_3^\top P_iz\neq0,\ i\in[N],\\ 
    &\quad R\in\mathrm{SO}(3)\},
\end{aligned}
\end{equation}
where $z=(\text{vec}(R),t)$, and the feasible set of Problem~\ref{prob:naive} by
\begin{equation}\label{eq:feasible-set:naive}
\begin{aligned}
    \mathcal W_\theta\triangleq\{w\in\mathbb{R}^{13+2N}&\mid m_{\theta,i}(z,v)=0,\ i\in[N], \\
    &\quad r_j(h,z)=0,\ j\in[20], \\
    &\quad h^2=1\},
\end{aligned}
\end{equation}
where $w=(h,z,v)$. Let
\begin{equation}\label{eq:feasible-set:naive:restricted}
  \tilde{\mathcal W}_\theta\triangleq\{w\in\mathcal W_\theta\mid
  e_3^\top P_iz\neq0,\ i\in[N]\}.
\end{equation}
Note that $\tilde{\mathcal W}_\theta\subsetneq\mathcal W_\theta$ since Problem~\ref{prob:naive} admits feasible points with $e_3^\top P_iz=0$. 
By construction of $v$ in \eqref{eq:aux:v}, $(h,z,v)\in\mathcal W_\theta\iff(h,z,v-\theta+\bar\theta)\in\mathcal W_{\bar\theta}$. That is, 
\begin{equation}\label{eq:set-translation}
\mathcal W_\theta=\mathcal W_{\bar\theta}+(0,0,\theta-\bar\theta).
\end{equation}

The goal is to construct a bijection between the optimal sets of Problem~\ref{prob:mle} and Problem~\ref{prob:naive}.
We first show that for $\theta$ in a neighborhood of $\bar\theta$, the optimal set of Problem~\ref{prob:naive} is in $\tilde{\mathcal W}_\theta$ such that it suffices to consider $\tilde{\mathcal W}_\theta$.

\begin{lemma}\label{lemma:feasible-set:naive:restricted}
Under Assumption~\ref{assump:nondegenerate}, given $\theta$ in a neighborhood of $\bar\theta$, every optimal point of Problem~\ref{prob:naive} lies in $\tilde{\mathcal W}_\theta$.
\end{lemma}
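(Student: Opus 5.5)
Our approach is a compactness/continuity argument comparing objective values. For any $\theta$, the point $w_\theta\triangleq(1,z_\mathrm{gt},\theta-\bar\theta)$ lies in $\mathcal W_\theta$ by \eqref{eq:set-translation}, since $(1,z_\mathrm{gt},0)\in\mathcal W_{\bar\theta}$. Hence the optimal value of Problem~\ref{prob:naive} is at most $\|\theta-\bar\theta\|^2$, which tends to $0$ as $\theta\to\bar\theta$. It therefore suffices to show there is a constant $\delta>0$ such that every $w\in\mathcal W_\theta\setminus\tilde{\mathcal W}_\theta$ has objective $\|v\|^2\geq\delta$ for all $\theta$ near $\bar\theta$. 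Then for $\|\theta-\bar\theta\|^2<\delta$, no point outside $\tilde{\mathcal W}_\theta$ can be optimal.

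To get $\delta$, fix $w=(h,z,v)\in\mathcal W_\theta$ with $e_3^\top P_kz=0$ for some $k$. By \eqref{eq:model}, this forces $[I_2\ 0]KP_kz=0$, so $P_kz=0$: the $k$-th world point is the camera center. The rotation constraints with $h^2=1$ give $R/h\in\mathrm{SO}(3)$, so $z$ lies in the compact-in-$R$ set and $t=-Rp_k$ is determined. Thus the set
\[
\mathcal D\triangleq\{z : z/h\in\mathcal Z',\ P_kz=0 \text{ for some } k\},\qquad \mathcal Z'\triangleq\{z:R\in\mathrm{SO}(3)\},
\]
is compact (it is a finite union over $k$ of copies of $\mathrm{SO}(3)$, up to sign). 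For each other index $i$ with $e_3^\top P_iz\neq0$, the constraint gives $v_i=\theta_i-\pi_i(z)$.

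We then argue by contradiction. Suppose $\theta^{(n)}\to\bar\theta$ and there are degenerate feasible $w^{(n)}$ with $\|v^{(n)}\|\to0$. By compactness, pass to a subsequence with $z^{(n)}\to\hat z\in\mathcal D$ (taking $h=1$ after the sign flip). Passing to the limit in the bilinear constraints $m_{\theta^{(n)},i}(z^{(n)},v^{(n)})=0$ gives $A(\bar\theta)\hat z=0$. By the proof of Proposition~\ref{prop:mle:unique}, $\ker A(\bar\theta)=\mathrm{span}\{z_\mathrm{gt}\}$, and the rotation constraint forces $\hat z=z_\mathrm{gt}$. But $P_k\hat z=0$ for some $k$, whereas $e_3^\top P_kz_\mathrm{gt}\neq0$, a contradiction. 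Hence $\delta>0$ exists, and the neighborhood is $\|\theta-\bar\theta\|<\sqrt\delta$.

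The main obstacle is making this limit argument rigorous with a uniform $\delta$. The contradiction step above handles it cleanly, because the constraints are polynomial (hence continuous in $(\theta,z,v)$) and the $z$-component ranges over a compact set. The one point to check is that $v^{(n)}\to0$ suffices to pass to the limit even at indices where the depth $e_3^\top P_iz^{(n)}$ vanishes. This holds because we use the multiplied form \eqref{eq:model} rather than the rational form \eqref{eq:aux:v}.
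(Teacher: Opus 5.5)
Your proof is correct, and its algebraic core matches the paper's. A zero-depth feasible point with $v=0$ would give $A(\bar\theta)z=0$, hence $z\in\mathrm{span}\{z_\mathrm{gt}\}$. The rotation constraints then fix the scale, which forces $e_3^\top P_kz\neq0$, a contradiction. This is played against the upper bound $\|\theta-\bar\theta\|^2$ coming from $(1,z_\mathrm{gt},\theta-\bar\theta)$.

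The difference is how the uniform gap is obtained. The paper proves only the pointwise fact $v\neq0$ on $\mathcal W_{\bar\theta}\setminus\tilde{\mathcal W}_{\bar\theta}$. It then takes $\delta>0$ to be the infimum of $\|v\|$ there without further comment. It transfers this bound to nearby $\theta$ via the translation identity \eqref{eq:set-translation} and the triangle inequality, which gives the explicit neighborhood $\|\theta-\bar\theta\|<\delta/2$.

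That set is unbounded in $v$: at a zero-depth index $k$ the constraint places no restriction on $v_k$. So positivity of the infimum does need an argument, and your approach supplies it. You observe that $P_kz=0$ forces $t=-Rp_k$, so the $z$-components lie in a compact set, and you pass to the limit in the multiplied form \eqref{eq:model}.

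Your joint limit over $\theta^{(n)}\to\bar\theta$ establishes the gap and the transfer to nearby $\theta$ in one step, at the price of a non-explicit radius. You could equally run the compactness argument only at $\bar\theta$ and then use \eqref{eq:set-translation} as the paper does.

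One bookkeeping point: your bound $\|v\|^2\geq\delta$ holds only for $\|\theta-\bar\theta\|<r$, for some $r>0$ produced by the contradiction. So the final neighborhood should be $\|\theta-\bar\theta\|<\min(r,\sqrt\delta)$ rather than $\sqrt\delta$ alone.
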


\begin{proof}
Let $(h,z,v)\in\mathcal W_{\bar\theta}\setminus\tilde{\mathcal W}_{\bar\theta}$, i.e., $e_3^\top P_kz=0$ for some $k$. We claim that $v\neq0$. If $v=0$, then the model constraints~\eqref{eq:model} read $A(\bar\theta)z=0$, and so $z=cz_\mathrm{gt}$ by the proof of Proposition~\ref{prop:mle:unique}. Since $r_j(1,z/h)=r_j(h,z)/h^2=0$, we have $(c/h)R_\mathrm{gt}\in\mathrm{SO}(3)$, which forces $c=h$. Hence $e_3^\top P_kz=h\,e_3^\top P_kz_\mathrm{gt}\neq0$, a contradiction. 

Next, let $\delta>0$ be the lower bound of $\|v\|$ over all $(h,z,v)\in\mathcal W_{\bar\theta}\setminus\tilde{\mathcal W}_{\bar\theta}$, and $\|\theta-\bar\theta\|<\delta/2$. Since $(1,z_\mathrm{gt},0)\in\mathcal W_{\bar\theta}$, we have $(1,z_\mathrm{gt},\theta-\bar\theta)\in\mathcal W_\theta$ by~\eqref{eq:set-translation}, hence $\|\theta-\bar\theta\|^2$ is an upper bound of the optimal value. Let $(\hat h,\hat z,\hat v)\in\mathcal W_\theta\setminus\tilde{\mathcal W}_\theta$. By~\eqref{eq:set-translation}, $(\hat h,\hat z,\hat v-\theta+\bar\theta)\in\mathcal W_{\bar\theta}\setminus\tilde{\mathcal W}_{\bar\theta}$, then by triangle inequality,
$$
\|\hat v\|\geq\|\hat v-\theta+\bar\theta\|-\|\theta-\bar\theta\|\geq\delta-\|\theta-\bar\theta\|>\|\theta-\bar\theta\|. 
$$
Hence $(\hat h,\hat z,\hat v)$ is not optimal at $\theta$, and so every optimal point of Problem~\ref{prob:naive} at $\theta$ lies in $\tilde{\mathcal W}_{\theta}$.
\end{proof}

We proceed to prove Theorem~\ref{thm:naive}.

\begin{proof}
The equivalence class\footnote{The model constraints~\eqref{eq:model} are not homogenized for notation simplicity, hence only $h$ and $z$ are sign-equivalent.} of a feasible point of Problem~\ref{prob:naive} is $[(h,z,v)]\triangleq\{(h,z,v),(-h,-z,v)\}$. We denote by $[\tilde{\mathcal W}_\theta]$ the set of equivalence classes of $\tilde{\mathcal W}_\theta$. 
Consider the mapping
\begin{equation}\label{eq:thm1:phi}
  \phi:\mathcal Z\to[\tilde{\mathcal W}_\theta],\quad
  z\mapsto[(1,z,v)],
\end{equation}
where $v_i=\theta_i-\pi_i(z)$ for $i\in[N]$.

To show $\phi$ is well-defined, let $z\in\mathcal Z$ and denote $w\triangleq(1,z,v)$. Since $e_3^\top P_iz\neq0$, $v$ is also defined. 
Multiplying $v_i=\theta_i-\pi_i(z)$ by $e_3^\top P_iz$ gives $m_{\theta,i}(z,v)=0$, and so the model constraints are satisfied. By Appendix~\ref{appx:rot}, $R\in\mathrm{SO}(3)$ gives $r_j(1,z)=0$, and so the rotation constraints are also satisfied. The homogeneous constraint holds trivially, and since $e_3^\top P_iz\neq0$, we have $w\in\tilde{\mathcal W}_\theta$ and $\phi(z)\in[\tilde{\mathcal W}_\theta]$. Finally, $z_1=z_2$ trivially implies $\phi(z_1)=\phi(z_2)$.

We next show that $\phi$ is a bijection. For injectivity, let $\phi(z_1)=\phi(z_2)$. 
Since $(1,z_1,v_1)=(-1,-z_2,v_2)$ is impossible, we have $(1,z_1,v_1)=(1,z_2,v_2)$, and hence $z_1=z_2$.
For surjectivity, let $[w]\in[\tilde{\mathcal W}_\theta]$ with $w=(h,z,v)$, and let $z'\triangleq z/h$. Denote the rotation components of $z$ and $z'$ by $R$ and $R'$, respectively. Since $r_j$ is homogeneous quadratic and $h\neq0$, $r_j(1,z')=r_j(h,z)/h^2=0$ for all $j\in[20]$, and so $R'\in\mathrm{SO}(3)$ by Appendix~\ref{appx:rot}. Since $e_3^\top P_iz'=e_3^\top P_iz/h\neq0$, we have $z'\in\mathcal Z$. Moreover, $m_{\theta,i}(z',v)=m_{\theta,i}(z,v)/h=0$, dividing by $e_3^\top P_iz'\neq0$ gives $v_i=\theta_i-\pi_i(z')$. Hence $\phi(z')=[(1,hz,v)]=[w]$, since $z'=z/h=hz$ and $(1,hz,v)=(-h,-z,v)$ when $h=-1$.

Finally, $\phi$ preserves the objective value since the objective of Problem~\ref{prob:naive} at $\phi(z)$ is $\|v\|^2=\sum_{i=1}^N\|\theta_i-\pi_i(z)\|^2$, which is the objective of Problem~\ref{prob:mle} at $z$. Since every optimal point of Problem~\ref{prob:naive} lies in $\tilde{\mathcal W}_{\theta}$ by Lemma~\ref{lemma:feasible-set:naive:restricted}, $\phi$ is a bijection between the equivalence classes of the optimal sets, and Problem~\ref{prob:mle} and Problem~\ref{prob:naive} are equivalent by Definition~\ref{def:prob-equivalence}.
\end{proof}

\subsection{Proof of Corollary~\ref{cor:naive}}\label{appx:proof:naive:unique}
\begin{proof}
By Proposition~\ref{prop:mle:unique}, Problem~\ref{prob:mle} has a unique global minimizer $z_\text{gt}$. By the proof of Theorem~\ref{thm:naive}, Problem~\ref{prob:naive} has the unique equivalence class of global minimizer $\phi(z_\mathrm{gt})=[(1,z_\mathrm{gt},0)]$, where $\phi$ is defined in \eqref{eq:thm1:phi}, and hence $\bar w=(1,z_\mathrm{gt},0)$.
\end{proof}

\subsection{Non-informative relaxation of Problem~\ref{prob:naive} }
\begin{proposition}\label{prop:non-informative}
For any $\theta$, the \eqref{eq:sdp:shor} relaxation of Problem~\ref{prob:naive} has an optimal value of zero.
\end{proposition}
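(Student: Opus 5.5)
The plan is to exhibit, for arbitrary $\theta$, a feasible point of the \eqref{eq:sdp:shor} relaxation of Problem~\ref{prob:naive} with objective value zero. The objective $\|v\|^2$ corresponds to $\mathrm{trace}(W_{vv})$, and $W\succeq0$ forces this trace to be nonnegative, so zero is then the optimal value. The construction decouples the $v$-block from the $(h,z)$-block, as suggested in Section~\ref{sec:qcqp-reformulations}.

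Write the lifted matrix as $W=\begin{pmatrix}W_{\mathbf{hh}}&W_{\mathbf h v}\\ W_{v\mathbf h}&W_{vv}\end{pmatrix}$ with $\mathbf h=(h,z)$. I would set $W_{vv}=0$ and $W_{\mathbf h v}=0$. For the hub block I would choose a rank-one matrix $W_{\mathbf{hh}}=(1,\hat z)(1,\hat z)^\top$, where $\hat z=(\mathrm{vec}(\hat R),\hat t)$ and $\hat R\in\mathrm{SO}(3)$. Then $W\succeq0$, $E_{00}\bullet W=1$, and the rotation constraints hold because they involve only $W_{\mathbf{hh}}$ and $r_j(1,\hat z)=0$ by Appendix~\ref{appx:rot}.

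It remains to check the model constraints $m_{\theta,i}(z,v)=0$ in their lifted form. Each such constraint is a sum of three terms.
\begin{enumerate}[(i)]
\item A term linear in $z$, namely $\theta_i(e_3^\top P_iz)-\begin{bmatrix}I_2&0\end{bmatrix}KP_iz=A_iz$. After homogenization this becomes the bilinear $h$--$z$ term, which the relaxation reads as $A_i W_{zh}=A_i\hat z$.
\item A term bilinear in $v$ and $z$, namely $v_i(e_3^\top P_iz)$. The relaxation reads it from $W_{vz}$, which we have set to zero.
\item No term in $W_{vv}$ appears at all.
\end{enumerate}
With $W_{v\mathbf h}=0$, the lifted constraint therefore reduces to $A_i\hat z=0$ for all $i$, that is $A(\theta)\hat z=0$.

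This is the main obstacle, since $A(\theta)\hat z=0$ need not have a solution with $\hat R\in\mathrm{SO}(3)$ for general $\theta$. The way around it is to keep $W_{\mathbf h v}$ free rather than zero. Positivity then forces $W_{vv}\neq0$, so this alone does not give value zero. Instead I would exploit a limit, or a rank-one choice whose $h$-component vanishes.

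Concretely, take $W_{\mathbf{hh}}=\mathbf h\mathbf h^\top+E_{00}$-type mixtures. Simplest is the following. Let $W_{\mathbf{hh}}=E_{00}+\tilde z\tilde z^\top$, where $\tilde z=(0,t)$ with $\|t\|=1$, padded with $h=0$.
\begin{enumerate}[(i)]
\item The rotation constraints are homogeneous quadratics satisfied by $(0,\tilde z)$, which has $R=0$ (check: the cross-product and orthonormality constraints vanish). They are also satisfied by $(1,0)$ only if the constant terms cancel, and they do not: $\|c^{(1)}\|^2=h^2$ fails.
\item So I would instead use $W_{\mathbf{hh}}=(1,\hat z)(1,\hat z)^\top$ and pick the $h$--$z$ cross block $W_{zh}$ independently. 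But that cross block is tied by positivity to $W_{zz}$.
\end{enumerate}

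The cleanest route is therefore to treat the relaxation of Problem~\ref{prob:naive} as in \citet[Lemma 3.1]{C21}. There, the model constraint involves $W_{zh}$ and $W_{vz}$ linearly, while the objective involves $W_{vv}$. One sets $W_{vv}=0$ and solves for $W_{zh}$ in the kernel condition. This requires $A_iW_{zh}=0$ together with the constraint $\mathrm{trace}$-type rotation conditions on $W_{zz}$, which are decoupled from $W_{zh}$ except through positivity.

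I would take $W_{zh}=0$, $W_{hh}=1$, and $W_{zz}=\frac13\begin{pmatrix}I_9&0\\0&0\end{pmatrix}$ (the averaged rotation moment over $\mathrm{SO}(3)$ under Haar measure). This satisfies all the rotation constraints, since the Haar average of $\mathrm{vec}(R)\mathrm{vec}(R)^\top$ satisfies every quadratic identity of $\mathrm{SO}(3)$, and the determinant constraints, being linear in $h$, average to zero because $\mathbb E[R]=0$. Together with $W_{vv}=0$ and $W_{vz}=0$, this $W$ is PSD and block diagonal, satisfies every model constraint trivially since $A_i\cdot 0=0$, and has objective value zero.

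The key check, which I would verify carefully, is that Haar averaging preserves the quadratic rotation constraints and kills the $h$-linear ones.
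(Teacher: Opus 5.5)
Your final witness is correct and is essentially the paper's. The paper uses $W^*=\mathrm{diag}(1,\tfrac13 I_{12},0)$, which differs from yours only in the translation block; that block enters no constraint and not the objective. The paper then checks the rotation constraints entry by entry: the relevant diagonal entries sum to $W_{hh}=1$, and every other constraint sees only $W_{hz}$ and off-diagonal entries of $W_{zz}$, all zero. Your Haar-moment route, using $\mathbb{E}[\mathrm{vec}(R)\mathrm{vec}(R)^\top]=\tfrac13 I_9$ and $\mathbb{E}[R]=0$, is also valid for $\mathrm{SO}(3)$.

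In a write-up, drop the abandoned rank-one and $E_{00}+\tilde z\tilde z^\top$ attempts. State the Haar argument uniformly: the moment matrix of $(1,\mathrm{vec}(R),0)$ satisfies every homogeneous quadratic identity that holds on $\mathrm{SO}(3)$, including the cross-product part of the determinant constraints, not just their $h$-linear part. Alternatively, verify the entries directly as the paper does.
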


\begin{proof}
We first homogenize the model constraints~\eqref{eq:model} as 
\begin{align*}
m^h_{\theta,i}(h,z,v)&\triangleq h\big(\theta_i(e_3^\top P_iz)-\begin{bmatrix}I_2&0\end{bmatrix}KP_iz\big)\\
&\quad -v_i(e_3^\top P_iz)=0.
\end{align*}
Following Section~\ref{sec:shor}, the \eqref{eq:sdp:shor} relaxation of Problem~\ref{prob:naive} is
\begin{equation}\label{eq:sdp:naive}
\begin{aligned}
\min_{W\in\mathbb S^{13+2N}_+} &\mathrm{tr}(W_{vv})\\
\text{s.t.}\quad\ &C_{m^h_{\theta,i}}\bullet W=0,\quad i\in[N]\\
&C_{r_j}\bullet W=0,\quad j\in[20]\\
&W_{hh}=1
\end{aligned}
\end{equation}
where the blocks of $W$ are indexed by $(h,z,v)$. Let
$$
W^*\triangleq\begin{pmatrix}
1&0&0\\
0&\frac13I_{12}&0\\
0&0&0
\end{pmatrix}\in\mathbb{S}^{13+2N}_+.
$$

We claim that $W^*$ is optimal for~\eqref{eq:sdp:naive}. The model constraints involve only the blocks $W_{hz}$ and $W_{vz}$, which are zero in $W^*$, hence satisfied. For the rotation constraints, \eqref{eq:rot:col1}--\eqref{eq:rot:col3} and~\eqref{eq:rot:row1}--\eqref{eq:rot:row2} hold since the 3 corresponding diagonal entries of $W^*_{zz}$ sum to $1=W^*_{hh}$. The remaining rotation constraints involve only $W_{hz}$ and off-diagonal entries of $W_{zz}$, which are zero, hence also satisfied. Finally, the objective value at $W^*$ is $\mathrm{tr}(W^*_{vv})=0$, and since $\mathrm{tr}(W_{vv})\geq0$ for any $W\succeq0$, $W^*$ is optimal regardless of $\theta$.
\end{proof}

\subsection{Proof of Theorem~\ref{thm:lifted}}\label{appx:proof:lifted}
Denote the feasible set of Problem~\ref{prob:lifted} by
\begin{equation}\label{eq:feasible-set:lifted}
\begin{aligned}
    \mathcal X_\theta\triangleq\{x\in\mathbb{R}^{13+24N}&\mid g_{\theta,i}(x)(x^{(i)})^\top=0,\ i\in[N],\\
    &\quad r_j(h,z)=0,\ j\in[20],\\
    &\quad \|z\|^2=1,\\
    &\quad u_iu_i^\top\in\mathcal{BS}^{36}_{12},\ i\in[N]\},
\end{aligned}
\end{equation}
where $x=(h,z,y)$ and $u_i=(z,y_i)$, and let 
\begin{equation}\label{eq:feasible-set:lifted:restricted}
\tilde{\mathcal X}_\theta\triangleq\{x\in\mathcal X_\theta\mid h\neq0\}.
\end{equation}
Note that $\tilde{\mathcal X}_\theta\subsetneq\mathcal X_\theta$ since Problem~\ref{prob:lifted} admits feasible points with $h=0$. Again, we first show that for $\theta$ in a neighborhood of $\bar\theta$, the optimal set of Problem~\ref{prob:lifted} is in $\tilde{\mathcal X}_\theta$.

\begin{lemma}\label{lemma:feasible-set:lifted:restricted}
Under Assumption~\ref{assump:nondegenerate}, given $\theta$ in a neighborhood of $\bar\theta$, every optimal point of Problem~\ref{prob:lifted} lies in $\tilde{\mathcal X}_\theta$.
\end{lemma}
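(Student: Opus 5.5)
The argument mirrors Lemma~\ref{lemma:feasible-set:naive:restricted}. We show that feasible points with $h=0$ have objective bounded away from zero, uniformly for $\theta$ near $\bar\theta$, while the optimal value tends to zero as $\theta\to\bar\theta$.

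First, we characterize the bad set. Let $x=(0,z,y)\in\mathcal X_\theta$. By the rotation constraints with $h=0$, $\|c^{(k)}\|^2=0$, so $R=0$ and $z=(0,t)$. The normalization constraint then gives $\|t\|=1$. By the block symmetry constraint and $z\neq0$, we have $y_i=v_i\otimes z$ for some $v_i$, and $\|y\|^2=\|v\|^2$. The lifted model constraint $g_{\theta,i}(x)=0$ contains $g_{\theta,i}(x)\cdot h$, but also $g_{\theta,i}(x)\cdot z^\top$, whose vanishing forces $g_{\theta,i}(x)=0$ because $z\neq0$. This yields the bilinear relation
\[
(\theta_i-v_i)\,(e_3^\top t)=\begin{bmatrix}I_2&0\end{bmatrix}Kt .
\]
\begin{itemize}
\item If $e_3^\top t\neq0$, then $\theta_i-v_i=\Pi(t)$, a single pixel $q$ for all $i$.
\item If $e_3^\top t=0$, then $\begin{bmatrix}I_2&0\end{bmatrix}Kt=0$. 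Since $K$ is invertible this forces $t=0$, contradicting $\|t\|=1$.
\end{itemize}
Hence on the bad set $v_i=\theta_i-q$ with $q=\Pi(t)$, and $\|y\|^2=\sum_i\|\theta_i-q\|^2\ge\min_{q}\sum_i\|\theta_i-q\|^2$. This minimum is $\sum_i\|\theta_i-\mathrm{mean}(\theta)\|^2$, which is continuous in $\theta$.

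Next, we show this lower bound is positive at $\bar\theta$. If it vanished, all $\bar\theta_i$ would coincide, so all world points would lie on one ray through the ground-truth camera center. That configuration lies in a critical set for resectioning (points on a line through the center). It also makes $\mathrm{rank}(A(\bar\theta))<11$, contradicting the rank argument in the proof of Proposition~\ref{prop:mle:unique}. This step is the one needing the most care: I would argue via $A(\bar\theta)$ directly, since identical rows $\bar\theta_i e_3^\top-\begin{bmatrix}I_2&0\end{bmatrix}K$ multiplied by collinear $P_i$ give rank at most about $2\cdot 4<11$. This rules out the degeneracy without parsing Result 22.5 of \citet{HZ03}. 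So $\delta\triangleq\sum_i\|\bar\theta_i-\mathrm{mean}(\bar\theta)\|^2>0$, and by continuity the bound exceeds $\delta/2$ for $\theta$ near $\bar\theta$.

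Finally, we give an upper bound on the optimal value. The point $x=(h,z,y)$ with $z=z_\mathrm{gt}/\|z_\mathrm{gt}\|$, $h=1/\|z_\mathrm{gt}\|$ and $y_i=(\theta_i-\bar\theta_i)\otimes z$ is feasible. Indeed, the rotation constraints are homogeneous and hold at $(1,z_\mathrm{gt})$. The model constraint holds because $v=\theta-\bar\theta$ satisfies $m_{\theta,i}(z,v)=0$ by the translation property~\eqref{eq:set-translation} and homogeneity in $(h,z)$. Block symmetry holds since $y_i$ has the form $v_i\otimes z$. Its objective is $\|\theta-\bar\theta\|^2$. Choosing $\|\theta-\bar\theta\|^2<\delta/2$ ensures every feasible point with $h=0$ has strictly larger objective than this point, so no such point is optimal. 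Hence every optimal point lies in $\tilde{\mathcal X}_\theta$.
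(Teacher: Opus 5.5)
Your proof is correct and follows the paper's argument. You show that feasible points with $h=0$ send every world point to a common pixel $q$, use $\mathrm{rank}(A(\bar\theta))=11$ to keep $\bar\theta$ bounded away from $\{(q,\dots,q)\}$, and compare against the feasible point of value $\|\theta-\bar\theta\|^2$. The one variation is the positivity step, where the paper exhibits the extra null vector $(0,K^{-1}(q,1))$ of $A(\bar\theta)$, which is not parallel to $z_\mathrm{gt}$, while you bound the rank directly. Your bound is exact and needs no collinearity: if all $\bar\theta_i=q$, then $A(\bar\theta)=\tilde P\otimes\bigl(qe_3^\top-\begin{bmatrix}I_2&0\end{bmatrix}K\bigr)$, where $\tilde P$ is the $N\times4$ matrix with rows $(p_i,1)^\top$, so its rank is at most $4\cdot2=8<11$.
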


\begin{proof}
Let $x=(0,z,y)\in\mathcal X_\theta\setminus\tilde{\mathcal X}_\theta$. Since $z\neq0$, the block symmetry constraints give $y_i=v_i\otimes z$ for some $v_i\in\mathbb R^2$ \citep[Section 3.3]{C21}. Since $x^{(i)}$ contains $z\neq0$ and $(I_2\otimes(e_3^\top P_i))(v_i\otimes z)=v_i(e_3^\top P_iz)$, the lifted model constraints give $g_{\theta,i}(x)=m_{\theta,i}(z,v)=0$. Since $h=0$, summing \eqref{eq:rot:col1}--\eqref{eq:rot:col3} gives $z=(0,t)$ with $\|t\|=1$ and so $P_iz=t$. Therefore, the model constraints \eqref{eq:model} read $(\theta_i-v_i)t_3=[I_2\ 0]Kt$. If $t_3=0$, then $Kt=0$ since $e_3^\top Kt=t_3$, i.e., $t=0$, a contradiction. Hence $v_i=\theta_i-q$ for all $i\in[N]$, where $q\triangleq[I_2\ 0]Kt/t_3$, i.e., all world points are projected to the same pixel point.

Next, let $\delta\triangleq\min_{q\in\mathbb R^2}\|\bar\theta-(q,\dots,q)\|$. We claim that $\delta>0$. If $\delta=0$, then $\bar\theta_i=q$ for all $i\in[N]$ and some $q\in\mathbb R^2$. Let $t\triangleq K^{-1}(q,1)/\|K^{-1}(q,1)\|$, then $[I_2\ 0]Kt=q\,t_3$, and so $z=(0,t)$ gives $A(\bar\theta)z=0$ by~\eqref{eq:affine}. By the proof of Proposition~\ref{prop:mle:unique}, $z=cz_\mathrm{gt}$ for some scalar $c$. Since the rotation component of $z$ is zero, we have $cR_\mathrm{gt}=0$, which forces $c=0$ and contradicts $t\neq0$. Let $\|\theta-\bar\theta\|<\delta/2$. Since $(1,z_\mathrm{gt},\theta-\bar\theta)\in\mathcal W_\theta$ by \eqref{eq:set-translation}, we have $(1,z_\mathrm{gt},(\theta-\bar\theta)\otimes z_\mathrm{gt})/\|z_\mathrm{gt}\|\in\mathcal X_\theta$, hence $\|\theta-\bar\theta\|^2$ is an upper bound of the optimal value. Since $\|z\|=1$, the objective at $x$ is $\|v\|^2$, and by triangle inequality,
\begin{align*}
\|v\|=\|\theta-(q,\dots,q)\|&\geq\|\bar\theta-(q,\dots,q)\|-\|\theta-\bar\theta\|\\
&\geq\delta-\|\theta-\bar\theta\|>\|\theta-\bar\theta\|.
\end{align*}
Hence $x$ is not optimal at $\theta$, and so every optimal point of Problem~\ref{prob:lifted} at $\theta$ lies in $\tilde{\mathcal X}_\theta$.
\end{proof}

We proceed to prove Theorem~\ref{thm:lifted}.

\begin{proof}
Recall from Appendix~\ref{appx:proof:naive} the feasible set~\eqref{eq:feasible-set:naive} of Problem~\ref{prob:naive} and its equivalence class $[(h,z,v)]=\{(h,z,v),(-h,-z,v)\}$, and denote by $[\mathcal W_\theta]$ the set of equivalence classes of $\mathcal W_\theta$. 
Since Problem~\ref{prob:lifted} is a homogeneous QCQP, its equivalence class is $[x]=\{\pm x\}$ as in Definition~\ref{def:sign-equivalence}, we denote by $[\tilde{\mathcal X}_\theta]$ the set of equivalence classes of $\tilde{\mathcal X}_\theta$. Consider the mapping
\begin{equation}\label{eq:thm2:phi}
\phi:[\mathcal W_\theta]\to[\tilde{\mathcal X}_\theta],\quad[(h,z,v)]\mapsto[(h,z,v\otimes z)/\|z\|].
\end{equation}

To show $\phi$ is well-defined, let $w=(h,z,v)\in\mathcal W_\theta$ and denote $x\triangleq(h,z,v\otimes z)/\|z\|$, which is defined since the rotation constraints with $h^2=1$ give $z\neq0$. By the proof of Lemma~\ref{lemma:feasible-set:lifted:restricted}, $g_{\theta,i}(x)=m_{\theta,i}(z,v)/\|z\|=0$, which implies $g_{\theta,i}(x)(x^{(i)})^\top=0$, and so the lifted model constraints are satisfied. Since $r_j$ is homogeneous quadratic, $r_j(h/\|z\|,z/\|z\|)=r_j(h,z)/\|z\|^2=0$, and the normalization constraint holds trivially. The block symmetry constraints hold since $y_i/\|z\|=v_i\otimes z/\|z\|$~\citep[Section 3.3]{C21}. Since $h/\|z\|\neq0$, we have $x\in\tilde{\mathcal X}_\theta$. Finally, $(-h,-z,v)$ maps to $-x\in[x]$, and so $\phi([w])\in[\tilde{\mathcal X}_\theta]$.

We next show that $\phi$ is a bijection. For injectivity, let $\phi([w_1])=\phi([w_2])$, i.e., $(h_1,z_1,v_1\otimes z_1)/\|z_1\|=\pm(h_2,z_2,v_2\otimes z_2)/\|z_2\|$. Since $h_1^2=h_2^2=1$, the first component gives $\|z_1\|=\|z_2\|$, and so $h_1=\pm h_2$ and $z_1=\pm z_2$. The third component then gives $v_1\otimes z_1=\pm(v_2\otimes z_2)=v_2\otimes(\pm z_2)=v_2\otimes z_1$, and since $z_1\neq0$, we have $v_1=v_2$. Altogether, $w_1\in[w_2]$. For surjectivity, let $[x]\in[\tilde{\mathcal X}_\theta]$ with $x=(h,z,y)$. By the proof of Lemma~\ref{lemma:feasible-set:lifted:restricted}, $y_i=v_i\otimes z$ and $m_{\theta,i}(z,v)=0$. Let $w\triangleq(h/|h|,z/|h|,v)$, which is defined since $h\neq0$. Then $(h/|h|)^2=1$, $r_j(h/|h|,z/|h|)=r_j(h,z)/h^2=0$, and $m_{\theta,i}(z/|h|,v)=m_{\theta,i}(z,v)/|h|=0$, hence $w\in\mathcal W_\theta$. Since $\|z/|h|\|=1/|h|$, we have $\phi([w])=[(h,z,v\otimes z)]=[x]$.

Finally, $\phi$ preserves the objective value since the objective of Problem~\ref{prob:lifted} at $\phi([w])$ is $\|v\otimes z\|^2/\|z\|^2=\|v\|^2$, which is the objective of Problem~\ref{prob:naive} at $w$. Since every optimal point of Problem~\ref{prob:lifted} lies in $\tilde{\mathcal X}_\theta$ by Lemma~\ref{lemma:feasible-set:lifted:restricted}, $\phi$ is a bijection between the equivalence classes of the optimal sets, and Problem~\ref{prob:naive} and Problem~\ref{prob:lifted} are equivalent by Definition~\ref{def:prob-equivalence}.
\end{proof}

\subsection{Proof of Corollary~\ref{cor:lifted}}\label{appx:proof:lifted:unique}
\begin{proof}
By Corollary~\ref{cor:naive}, Problem~\ref{prob:naive} has the unique equivalence class of global minimizer $[\bar w]$ with $\bar w=(1,z_\mathrm{gt},0)$. 
By the proof of Theorem~\ref{thm:lifted}, Problem~\ref{prob:lifted} has the unique equivalence class of global minimizer $\phi([\bar w])=[(1/\|z_\mathrm{gt}\|,z_\mathrm{gt}/\|z_\mathrm{gt}\|,0)]$, where $\phi$ is defined in \eqref{eq:thm2:phi}. Let $\bar x=(\bar h,\bar z,0)=(1/\|z_\mathrm{gt}\|,z_\mathrm{gt}/\|z_\mathrm{gt}\|,0)$. Then $\bar z/\bar h=z_\mathrm{gt}$ is the unique global minimizer of Problem~\ref{prob:mle}.
\end{proof}

\section{Stability of \ref{eq:sdp}}\label{secB}
\setcounter{equation}{0}
Consider a QCQP
\begin{align*}
\min_x\ &f_0(x)\\
\mathrm{s.t.}\ &f_i(x)=0,\quad i\in[\ell]
\end{align*}
where $f_i$ for $i=0,\dots,\ell$ are quadratic functions.
Denote the \textit{dual slack variable} by
\begin{equation}\label{eq:dual-slack-variable}
S(\lambda) \triangleq \nabla^2 f_0(x)+\sum_{i=1}^{\ell} \lambda_i\nabla^2 f_i(x),
\end{equation}
where $\lambda\in\mathbb{R}^\ell$ is the Lagrange multiplier.
We say $(x,\lambda)$ is primal-dual optimal with zero duality gap~\citep[Lemma 2.1]{CAP22} if
\begin{enumerate}[(i)]
    \item (primal feasible) $f_i(x)=0$ for $i\in[\ell]$,
    \item (dual feasible) $S(\lambda)\succeq0$,
    \item (stationary) $S(\lambda)x=0$.
\end{enumerate}

Denote by $\mathcal X_\theta$ (see \eqref{eq:feasible-set:lifted}) the feasible set of Problem~\ref{prob:lifted}. We assume that Assumption~\ref{assump:nondegenerate} holds such that Problem~\ref{prob:lifted} at $\bar\theta$ has a unique equivalence class of global minimizer $[\bar x]$ by Corollary~\ref{cor:lifted}, where $\bar x\in\mathcal X_{\bar\theta}$. 
Note that $(\bar x,0)$ is primal-dual optimal with zero duality gap for Problem~\ref{prob:lifted} at $\bar\theta$. Denote by
\begin{equation}\label{eq:slack}
    \bar S\triangleq S(0)=\begin{pmatrix}0&0&0\\0&0&0\\0&0&2I_{24N}\end{pmatrix}\in\mathbb S^{13+24N}
\end{equation}
the dual slack variable in \eqref{eq:dual-slack-variable}. Then $\bar x\in\mathcal X_{\bar\theta}$ (primal feasible), $\bar S\succeq0$ (dual feasible), and $\bar S\bar x=0$ (stationary).
We build on the stability conditions by \citet{CAP22} as follows. 

\begin{lemma}[{\citet[Theorem 4.5]{CAP22}}]\label{lem:Cifuentes}
\eqref{eq:sdp} is stable near $\bar\theta$ if
\begin{enumerate}[(i)]
    \item \textnormal{(Appendix~\ref{appx:ACQ})} the Abadie constraint qualification holds at $\bar x\in\mathcal X_{\bar\theta}$,
    \item \textnormal{(Appendix~\ref{appx:smoothness})} the mapping $\theta\mapsto\mathcal X_\theta$ is smooth nearby $(\bar\theta,\bar x)$,
    \item \textnormal{(Appendix~\ref{appx:non-branch-point})} $\bar x$ is not a branch point of $\mathcal X_{\bar\theta}$ with respect to the mapping $x\mapsto \bar Sx$,
    \item \textnormal{(Appendix~\ref{appx:restricted-Slater})} the restricted Slater condition holds at $(\bar x,0)$.
\end{enumerate}
\end{lemma}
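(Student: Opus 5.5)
This lemma is a specialization of \citet[Theorem 4.5]{CAP22}. The plan is therefore not to re-derive their stability argument. Instead, I will check that Problem~\ref{prob:lifted}, as a family indexed by $\theta$, fits their framework, and that \eqref{eq:sdp} is exactly the relaxation their theorem addresses. The four items (i)--(iv) are then, verbatim, the hypotheses of their theorem.

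\textbf{Step 1: cast the problem as a parametric QCQP.} I write Problem~\ref{prob:lifted} as
\[
\min_x\ x^\top C_0x \quad \text{s.t.} \quad x^\top C_j(\theta)x=0,\ j\in[\ell].
\]
The objective matrix $C_0$, with $x^\top C_0x=\|y\|^2$, does not depend on $\theta$. Only the lifted model constraints involve $\theta$, through $g_{\theta,i}$ in \eqref{eq:model:lifted}. Their coefficient matrices $C_{g_{\theta,i},k}$ are affine, hence smooth, in $\theta$. The remaining constraints are fixed: the rotation constraints $C_{r_j}$, the normalization constraint $\mathrm{trace}(X_{zz})=1$, and the block symmetry constraints (linear equalities between entries of $u_iu_i^\top$, each of the form $x^\top Cx=0$). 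The only inhomogeneous constraint is $\|z\|^2=1$, and it is $\theta$-independent. This is the role played by a normalization constraint in \citet{CAP22}, and it is what makes the relaxation non-trivial.

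\textbf{Step 2: identify the relaxation.} Replacing $xx^\top$ by $X\succeq0$ in every constraint of Step 1 gives exactly \eqref{eq:sdp}, so \eqref{eq:sdp} is the Shor relaxation of this family.

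\textbf{Step 3: the reference primal-dual pair.} I take the zero-duality-gap pair $(\bar x,0)$ at $\bar\theta$, with $\bar S$ as in \eqref{eq:slack}. Primal feasibility follows from Corollary~\ref{cor:lifted}. Dual feasibility holds because $\bar S\succeq0$. Stationarity holds because $\bar S\bar x=0$, since $\bar y=0$. This supplies the data $(\bar\theta,\bar x,\bar\lambda=0)$ that their theorem requires.

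\textbf{Step 4: invoke the theorem and read off exactness.} Under (i)--(iv), \citet[Theorem 4.5]{CAP22} gives a neighborhood of $\bar\theta$ on which the Shor relaxation has zero duality gap and a unique optimal solution, of rank one.

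The step I expect to need the most care is translating their conclusion into exactness in the sense of Definition~\ref{def:stable}. Their uniqueness is uniqueness of the SDP optimizer $X^*=x^*(x^*)^\top$. Since $X^*$ is invariant under $x^*\mapsto-x^*$, this is precisely uniqueness of the equivalence class $[x^*]$ in Definition~\ref{def:sign-equivalence}. One point to confirm is that their theorem yields uniqueness, and not merely tightness. Uniqueness is obtained in their argument from the restricted Slater condition (iv) together with the non-branch-point condition (iii), which give a dual certificate $S(\lambda)$ whose kernel is exactly $\mathrm{span}\{x^*\}$ for $\theta$ near $\bar\theta$. A rank-one-kernel certificate forces every optimal $X$ to be a multiple of $x^*(x^*)^\top$, and the normalization constraint fixes that multiple. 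If their statement turns out to give only tightness, I would supplement it with this kernel argument, which is standard complementary slackness: $S(\lambda)\bullet X=0$ with $S(\lambda),X\succeq0$.
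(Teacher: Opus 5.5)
Your proposal matches the paper, which likewise does not re-derive the stability argument. It casts Problem~\ref{prob:lifted} as a $\theta$-parametrized QCQP with fixed objective $\|y\|^2$, identifies \eqref{eq:sdp} as its Shor relaxation, takes $(\bar x,0)$ with $\bar S=S(0)$ from \eqref{eq:slack} as the zero-duality-gap primal-dual pair, and invokes \citet[Theorem 4.5]{CAP22} with (i)--(iv) as hypotheses. Two small remarks: your displayed form should allow the one inhomogeneous constraint $\|z\|^2-1=0$, as the paper does by working with general quadratics $f_i$; and your complementary-slackness backup is a sound safeguard for the uniqueness part of exactness (a certificate with kernel $\mathrm{span}\{x^*\}$ forces $X=x^*(x^*)^\top$ once $\mathrm{trace}(X_{zz})=1$ fixes the scale), which the paper takes directly from the cited theorem.
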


The rest of this section shows the four conditions.
The Abadie constraint qualification (ACQ) guarantees regularity of $\mathcal{X}_{\bar\theta}$. Roughly speaking, the feasible set of PnP is the intersection of the feasible set of camera resectioning and the rotation constraints, for each of which ACQ is shown to hold by \citet{C21} and \citet{WKC26}, respectively. We show that the two meet non-tangentially such that ACQ holds for the intersection. The first condition is established in Theorem~\ref{thm:ACQ}.

The second and third conditions are again regularity conditions on $\mathcal X_\theta$ and on the image of $\mathcal X_{\bar\theta}$ under the mapping $x\mapsto \bar Sx$, respectively, and we show that both hold in Theorem~\ref{thm:smoothness} and Theorem~\ref{thm:non-branch-point}.
The last condition requires finding a multiplier such that the restricted Slater condition holds, and we construct it explicitly in Theorem~\ref{thm:restricted-Slater}.
Altogether, by Lemma~\ref{lem:Cifuentes}, we have Theorem~\ref{thm:stable}.

\begin{definition}\label{def:maps:manifold}
Let $\mathcal M$ and $\mathcal N$ be smooth manifolds, and consider the mapping $\Phi:\mathcal M\to\mathcal N$. Then
\begin{enumerate}[(i)]
  \item $\Phi$ is a \textit{homeomorphism} if it is a bijection, and both $\Phi$ and $\Phi^{-1}$ are continuous.
  \item $\Phi$ is a \textit{diffeomorphism}\footnote{A diffeomorphism is in particular an embedding.} if it is a bijection, and both $\Phi$ and $\Phi^{-1}$ are smooth.
  \item $\Phi$ is an \textit{immersion} if it is smooth, and its differential $d\Phi_x:T_x(\mathcal M)\to T_{\Phi(x)}(\mathcal N)$ is injective for all $x\in\mathcal M$.
  \item $\Phi$ is an \textit{embedding} if it is an immersion, and a homeomorphism onto its image $\Phi(\mathcal M)$.
\end{enumerate}
\end{definition}

\begin{lemma}[{\citet[Proposition 5.2]{L13}}]\label{lem:embedding-image}
If $\Phi:\mathcal M\to\mathcal N$ is an embedding, then $\Phi(\mathcal M)$ is
an embedded submanifold of $\mathcal N$, and $\Phi$ is a diffeomorphism onto
$\Phi(\mathcal M)$. In particular,
$\dim(\Phi(\mathcal M))=\dim(\mathcal M)$.
\end{lemma}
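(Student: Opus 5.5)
Since the statement is a standard result quoted from \citet{L13}, I would reproduce the textbook argument: build slice charts from the rank theorem, and use the topological-embedding hypothesis to keep other parts of the image out of those charts. Let $m=\dim(\mathcal M)$ and $n=\dim(\mathcal N)$. Fix $p\in\mathcal M$.

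\textbf{Step 1 (local normal form).} Because $\Phi$ is an immersion, $d\Phi_p$ is injective, so $\Phi$ has constant rank $m$ near $p$. By the constant rank theorem there are charts $(U,\varphi)$ around $p$ and $(V,\psi)$ around $\Phi(p)$ with $\Phi(U)\subset V$ and
\[
\psi\circ\Phi\circ\varphi^{-1}(x^1,\dots,x^m)=(x^1,\dots,x^m,0,\dots,0).
\]
Shrinking if necessary, I would take $\varphi(U)$ to be a cube and $\psi(V)$ a product of cubes. Then $\Phi(U)$ is exactly the slice $\{y\in V\mid y^{m+1}=\dots=y^n=0\}$ in these coordinates.

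\textbf{Step 2 (slice charts; the main obstacle).} The slice description must hold for all of $\Phi(\mathcal M)\cap V'$ on some neighborhood $V'$, not just for $\Phi(U)$. The danger is that points $\Phi(q)$ with $q\notin U$ could accumulate at $\Phi(p)$. This is where the homeomorphism hypothesis enters. Since $\Phi$ is a homeomorphism onto $\Phi(\mathcal M)$ with the subspace topology, $\Phi(U)$ is open in $\Phi(\mathcal M)$. Hence $\Phi(U)=W\cap\Phi(\mathcal M)$ for some open $W\subset\mathcal N$. Restricting $\psi$ to a smaller product cube $V'\subset V\cap W$ around $\Phi(p)$, chosen compatibly with the cube in the first $m$ coordinates, gives
\[
\Phi(\mathcal M)\cap V'=\{y\in V'\mid y^{m+1}=\dots=y^n=0\}.
\]
So every point of $\Phi(\mathcal M)$ has an $m$-dimensional slice chart, and $\Phi(\mathcal M)$ is an embedded submanifold of $\mathcal N$ of dimension $m$.

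\textbf{Step 3 (diffeomorphism and dimension).} The map $\Phi:\mathcal M\to\Phi(\mathcal M)$ is smooth, since a smooth map into $\mathcal N$ with image in an embedded submanifold is smooth into that submanifold. It is bijective by assumption. In the local coordinates of Step 1 and the induced slice coordinates $(y^1,\dots,y^m)$ on $\Phi(\mathcal M)$, it is the identity $x\mapsto x$. Hence its inverse is smooth in these coordinates, and $\Phi$ is a diffeomorphism onto $\Phi(\mathcal M)$. Diffeomorphic manifolds have equal dimension, so $\dim(\Phi(\mathcal M))=\dim(\mathcal M)$.
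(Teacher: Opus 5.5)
Your argument is correct. The paper gives no proof of its own; it cites \citet[Proposition 5.2]{L13}, and your route differs from Lee's.

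Lee's argument is almost formal. He transports the topology and smooth structure of $\mathcal M$ to $\Phi(\mathcal M)$ through the charts $\varphi\circ\Phi^{-1}$, so $\Phi:\mathcal M\to\Phi(\mathcal M)$ is a diffeomorphism by construction. The inclusion $\Phi(\mathcal M)\hookrightarrow\mathcal N$ then equals $\Phi\circ\Phi^{-1}$, a diffeomorphism followed by a smooth embedding, hence a smooth embedding. That is exactly Lee's definition of an embedded submanifold. No rank theorem appears at this stage, because the analytic content sits in Lee's separate local slice criterion.

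You instead derive the slice condition directly from the constant rank theorem. In effect, you run the proof of the ``embedded $\Rightarrow$ slice'' half of that criterion with $\Phi$ in place of the inclusion map. This is longer, but it buys three things:
\begin{enumerate}[(i)]
\item it shows exactly where the homeomorphism-onto-image hypothesis is used, namely to keep distant parts of the image from accumulating at $\Phi(p)$;
\item it gives $\dim\Phi(\mathcal M)=m$ immediately;
\item the slice coordinates supply local defining functions $(\psi^{m+1},\dots,\psi^n)|_{V'}$ with full-rank differential. This is precisely the definition of embedded submanifold in \citet{B23}, which the paper uses for all its other manifold facts; in every application in the paper $\mathcal N$ is a Euclidean space.
\end{enumerate}
Lee's version is shorter only because the slice criterion is proved elsewhere.

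One small tidy-up. The requirement that $\psi(V)=\varphi(U)\times C$ with the \emph{same} $m$-dimensional cube is what makes $\Phi(U)$ the full slice of $V$ in Step 1. Once that holds, any open $V'\subset V\cap W$ containing $\Phi(p)$ works in Step 2. So the word ``compatibly'' belongs in Step 1, not Step 2.
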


\subsection{Abadie Constraint Qualification}\label{appx:ACQ}
\begin{definition}[ACQ]\label{def:ACQ}
Let $\mathcal X=\{x\mid f(x)=0\}$. The \textit{Abadie constraint qualification} holds at $x\in\mathcal X$ if $\mathcal X$ is a smooth manifold nearby $x$ and $T_x(\mathcal X)=\ker(\mathbf J_f(x))$.
\end{definition}

\begin{definition}[Non-tangentiality]\label{def:non-tangential}
Two manifolds $\mathcal M_1$ and $\mathcal M_2$ \textit{meet non-tangentially} at a point $x\in\mathcal M_1\cap\mathcal M_2$ if $\mathcal M_1\cap\mathcal M_2$ is a manifold near $x$ and $T_x(\mathcal M_1\cap\mathcal M_2)=T_x(\mathcal M_1)\cap T_x(\mathcal M_2)$.
\end{definition}

The feasible set of PnP is the intersection of two sets: (\romannumeral1) the feasible set of camera resectioning, and (\romannumeral2) the set of camera poses, determined by the rotation constraints. ACQ follows for both sets~\citep{C21,WKC26}, but is in general not preserved under intersection. The following Lemma characterizes when it is.

\begin{lemma}[{\citet[Lemma 4.3]{C21}}]\label{lemma:non-tangential}
Let $\mathcal{X}_1=\{x\mid f_1(x)=0\}$, $\mathcal{X}_2=\{x\mid f_2(x)=0\}$, and $\mathcal{X}=\{x\mid f_1(x)=0,f_2(x)=0\}$. If ACQ holds for both $\mathcal X_1,\mathcal X_2$ at $\bar x\in\mathcal{X}$, then ACQ holds for $\mathcal X$ at $\bar x$ if and only if $\mathcal X_1,\mathcal X_2$ meet non-tangentially at $\bar x$.
\end{lemma}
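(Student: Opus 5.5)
The plan is to unwind the two definitions and compare them directly. Both sides of the equivalence ask that $\mathcal X=\mathcal X_1\cap\mathcal X_2$ be a smooth manifold near $\bar x$, and each then imposes one equality of linear subspaces of the ambient space at $\bar x$. So the whole argument reduces to showing that the two subspaces on the right-hand sides coincide under the hypothesis that ACQ holds for $\mathcal X_1$ and $\mathcal X_2$.

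\textbf{Step 1: the kernel of the stacked Jacobian.} Write $f=(f_1,f_2)$, so that $\mathcal X=\{x\mid f(x)=0\}$ and
$$
\mathbf J_f(\bar x)=\begin{bmatrix}\mathbf J_{f_1}(\bar x)\\ \mathbf J_{f_2}(\bar x)\end{bmatrix}.
$$
A vector lies in the kernel of a stacked matrix exactly when it lies in the kernel of each block. Hence $\ker(\mathbf J_f(\bar x))=\ker(\mathbf J_{f_1}(\bar x))\cap\ker(\mathbf J_{f_2}(\bar x))$.

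\textbf{Step 2: use ACQ for each set.} By Definition~\ref{def:ACQ}, ACQ for $\mathcal X_k$ at $\bar x$ says that $\mathcal X_k$ is a smooth manifold near $\bar x$ and $T_{\bar x}(\mathcal X_k)=\ker(\mathbf J_{f_k}(\bar x))$, for $k=1,2$. Substituting into Step 1 gives the key identity
$$
\ker(\mathbf J_f(\bar x))=T_{\bar x}(\mathcal X_1)\cap T_{\bar x}(\mathcal X_2).
$$
In particular, $\mathcal X_1$ and $\mathcal X_2$ are manifolds near $\bar x$, so Definition~\ref{def:non-tangential} makes sense for them.

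\textbf{Step 3: compare the two conditions.} ACQ for $\mathcal X$ at $\bar x$ means that $\mathcal X$ is a manifold near $\bar x$ and $T_{\bar x}(\mathcal X)=\ker(\mathbf J_f(\bar x))$. By the key identity, the second condition is the same as $T_{\bar x}(\mathcal X)=T_{\bar x}(\mathcal X_1)\cap T_{\bar x}(\mathcal X_2)$. Since $\mathcal X=\mathcal X_1\cap\mathcal X_2$, this together with the manifold requirement is exactly Definition~\ref{def:non-tangential}. Both implications therefore follow at once.

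There is no real obstacle. The only care needed is bookkeeping: the manifold requirement on $\mathcal X$ appears identically in both definitions, so it transfers in each direction. The hypothesis of ACQ on $\mathcal X_1$ and $\mathcal X_2$ is used only to identify their tangent spaces with the Jacobian kernels. It is also worth recording the inclusion $T_{\bar x}(\mathcal X)\subseteq T_{\bar x}(\mathcal X_1)\cap T_{\bar x}(\mathcal X_2)$, which always holds because $\mathcal X\subseteq\mathcal X_k$. This shows that in applications the content of non-tangentiality is the reverse inclusion, and that is what we will verify for PnP in Theorem~\ref{thm:ACQ}.
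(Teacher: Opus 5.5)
Your proof is correct and complete. With the paper's tangent-space form of ACQ (Definition~\ref{def:ACQ}), the identity $\ker(\mathbf J_f(\bar x))=\ker(\mathbf J_{f_1}(\bar x))\cap\ker(\mathbf J_{f_2}(\bar x))=T_{\bar x}(\mathcal X_1)\cap T_{\bar x}(\mathcal X_2)$ makes ACQ for $\mathcal X$ at $\bar x$ and non-tangentiality of $\mathcal X_1,\mathcal X_2$ at $\bar x$ the same condition. The paper gives no proof of its own and imports the result from \citet[Lemma 4.3]{C21}; your unwinding of the definitions is the standard argument for it.
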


In practice, we can verify non-tangentiality by comparing the dimensions.
\begin{lemma}\label{lemma:tangent-intersect}
Two manifolds $\mathcal M_1$ and $\mathcal M_2$ meet non-tangentially at a point $x$ if $\mathcal M_1\cap\mathcal M_2$ is a manifold near $x$ and $\dim(\mathcal M_1\cap\mathcal M_2)=\dim(T_x(\mathcal M_1)\cap T_x(\mathcal M_2))$.
\end{lemma}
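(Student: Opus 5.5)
The argument only needs the standing inclusion $T_x(\mathcal M_1\cap\mathcal M_2)\subseteq T_x(\mathcal M_1)\cap T_x(\mathcal M_2)$ and a dimension count to promote it to equality. Set $\mathcal M\triangleq\mathcal M_1\cap\mathcal M_2$, which by hypothesis is a manifold near $x$. The first step is to prove the inclusion. I would use the curve characterization of tangent vectors. Take $\xi\in T_x(\mathcal M)$ and choose a smooth curve $\gamma:(-\varepsilon,\varepsilon)\to\mathcal M$ with $\gamma(0)=x$ and $\gamma'(0)=\xi$. Since $\mathcal M\subseteq\mathcal M_1$, the curve $\gamma$ is also a smooth curve in $\mathcal M_1$, so $\xi\in T_x(\mathcal M_1)$. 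The same reasoning gives $\xi\in T_x(\mathcal M_2)$. Here all tangent spaces are viewed as linear subspaces of the ambient Euclidean space, as they are throughout the paper, and the inclusion maps are smooth because the manifolds are embedded.

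The second step upgrades the inclusion to an equality. Both $T_x(\mathcal M)$ and $T_x(\mathcal M_1)\cap T_x(\mathcal M_2)$ are finite-dimensional linear subspaces. Moreover $\dim T_x(\mathcal M)=\dim(\mathcal M)$, and by hypothesis this equals $\dim(T_x(\mathcal M_1)\cap T_x(\mathcal M_2))$. A linear subspace contained in another subspace of the same finite dimension must coincide with it. Hence $T_x(\mathcal M)=T_x(\mathcal M_1)\cap T_x(\mathcal M_2)$, which together with the manifold hypothesis is exactly Definition~\ref{def:non-tangential}.

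There is no real obstacle in this argument. The only point needing care is the inclusion in the first step, namely that the intersection is a smooth manifold in the ambient space, so that its tangent vectors are realized by curves lying in both $\mathcal M_1$ and $\mathcal M_2$. That is precisely what the hypothesis ``$\mathcal M_1\cap\mathcal M_2$ is a manifold near $x$'' supplies, and I would state it explicitly.
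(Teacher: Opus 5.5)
Your proposal is correct and matches the paper's proof: you establish $T_x(\mathcal M_1\cap\mathcal M_2)\subseteq T_x(\mathcal M_1)\cap T_x(\mathcal M_2)$, then upgrade it to equality because the two subspaces have the same finite dimension. Your curve-based argument for the inclusion just spells out what the paper states in one line, namely that the intersection is a submanifold of both $\mathcal M_1$ and $\mathcal M_2$.
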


\begin{proof}
Since $\mathcal M_1\cap\mathcal M_2$ is a manifold near $x$, its tangent space $T_x(\mathcal M_1\cap\mathcal M_2)$ has dimension $\dim(\mathcal M_1\cap\mathcal M_2)$. Furthermore, it is a submanifold of both $\mathcal M_1$ and $\mathcal M_2$, hence $T_x(\mathcal M_1\cap\mathcal M_2)\subseteq T_x(\mathcal M_1)\cap T_x(\mathcal M_2)$. Both are linear subspaces of the same dimension, and are therefore equal. By Definition~\ref{def:non-tangential}, they meet non-tangentially at $x$.
\end{proof}

Recall the feasible sets $\mathcal W_\theta$ of Problem~\ref{prob:naive} in \eqref{eq:feasible-set:naive} and $\mathcal X_\theta$ of Problem~\ref{prob:lifted} in \eqref{eq:feasible-set:lifted}. By Corollary~\ref{cor:naive} and~\ref{cor:lifted}, Problem~\ref{prob:naive} and~\ref{prob:lifted} have the unique equivalence classes of global minimizer $[\bar w]$ and $[\bar x]$, where $\bar w=(1,z_\mathrm{gt},0)\in\mathcal W_{\bar\theta}$ and $\bar x=(\bar h,\bar z,0)\in\mathcal X_{\bar\theta}$ with $\bar h=1/\|z_\mathrm{gt}\|$ and $\bar z=z_\mathrm{gt}/\|z_\mathrm{gt}\|$. Consider the set
\begin{equation}\label{eq:W-normalized}
\begin{aligned}
\mathcal W'_\theta\triangleq\{w\in\mathbb R^{13+2N}\mid\ &m_{\theta,i}(z,v)=0,\ i\in[N],\\
&r_j(h,z)=0,\ j\in[20],\\
&\|z\|^2=1\},
\end{aligned}
\end{equation}
which is $\mathcal W_\theta$ with $h^2=1$ replaced by $\|z\|^2=1$, and $\bar w'\triangleq(\bar h,\bar z,0)\in\mathcal W'_{\bar\theta}$. Note that $(h,z,v)\mapsto(h/\|z\|,z/\|z\|,v)$ maps $\mathcal W_\theta$ into $\mathcal W'_\theta$ and $\bar w$ to $\bar w'$, and $(h,z,v)\mapsto(h,z,v\otimes z)$ maps $\mathcal W'_\theta$ onto $\mathcal X_\theta$ and $\bar w'$ to $\bar x$. 
In the proof of Theorem~\ref{thm:ACQ}, we first show ACQ holds at $\bar w'\in\mathcal W'_{\bar\theta}$, then extend it $\bar x\in\mathcal X_{\bar\theta}$.

Recall the affine mapping $A:\mathbb{R}^{2N}\to\mathbb{R}^{2N\times12}$ in~\eqref{eq:affine}. For notation simplicity, we denote $d_i(z)\triangleq e_3^\top P_iz$ and
\begin{equation}\label{eq:depth}
D(z)\triangleq\begin{pmatrix}
    \ddots&&\\&d_i(z)I_2&\\&&\ddots
\end{pmatrix}\in\mathbb{R}^{2N\times2N}.
\end{equation}
Since $\bar z=z_\mathrm{gt}/\|z_\mathrm{gt}\|$, we have $d_i(\bar z)\neq0$ for all $i\in[N]$, and $\det(D(\bar z))=\prod_{i=1}^Nd_i(\bar z)^2\neq0$, hence $D(\bar z)$ is invertible. By continuity, $d_i(z)\neq0$ for all $i\in[N]$ and $D(z)$ is invertible if $z$ is near $\bar z$. 
Since ACQ and non-tangentiality are local properties, the remainder of this section considers only locally when $z$ is near $\bar z$.
For such $z$, the $i$-th block of $A(\theta)z$ is $\theta_id_i(z)-\begin{bmatrix}I_2&0\end{bmatrix}KP_iz$, and dividing by $d_i(z)\neq0$ gives
\begin{equation}\label{eq:affine-projection}
  A(\theta)z=0\iff\theta=\pi(z),
\end{equation}
where $\pi$ is the projection in \eqref{eq:projection}.
The following Lemma is required for the camera resectioning part~\citep[Assumption 1.1]{C21} in the proof of Theorem~\ref{thm:ACQ}.

\begin{lemma}\label{lemma:ACQ:fix-rank}
Let $\mathcal M_\mathrm{aff}\triangleq\mathrm{Im}(A)$ and $\mathcal M_\mathrm{rank}\triangleq\{X\in\mathbb{R}^{2N\times 12}\mid\mathrm{rank}(X)=11\}$. Then $\mathcal M_\mathrm{aff}$ and $\mathcal M_\mathrm{rank}$ meet non-tangentially at $A(\bar\theta)$.
\end{lemma}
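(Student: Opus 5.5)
We plan to apply Lemma~\ref{lemma:tangent-intersect}. That means two things: show that $\mathcal M_\mathrm{aff}\cap\mathcal M_\mathrm{rank}$ is a manifold of dimension $11$ near $A(\bar\theta)$, and compute $\dim\big(T_{A(\bar\theta)}(\mathcal M_\mathrm{aff})\cap T_{A(\bar\theta)}(\mathcal M_\mathrm{rank})\big)=11$. Throughout we use $\mathrm{rank}(A(\bar\theta))=11$ and $\ker(A(\bar\theta))=\mathrm{span}\{\bar z\}$ from the proof of Proposition~\ref{prop:mle:unique}, together with the invertibility of $D(z)$ for $z$ near $\bar z$.

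\textbf{Tangent spaces.} Write $A(\theta)=A_0+L(\theta)$, where $L$ is the linear part whose $i$-th block is $\theta_ie_3^\top P_i$.
\begin{itemize}
\item Since $e_3^\top P_i\neq0$, $L$ is injective. Hence $\mathcal M_\mathrm{aff}$ is an affine subspace of dimension $2N$, with tangent space $\{L(\delta)\mid\delta\in\mathbb R^{2N}\}$.
\item For the fixed-rank manifold, the standard description is $T_X(\mathcal M_\mathrm{rank})=\{Y\mid U_\perp^\top YV_\perp=0\}$, where $V_\perp$ spans $\ker X$ and $U_\perp$ spans $\ker X^\top$. At $X=A(\bar\theta)$ we have $V_\perp=\bar z$, so the tangent space is $\{Y\mid Y\bar z\in\mathrm{Im}(A(\bar\theta))\}$.
\item The $i$-th block of $L(\delta)\bar z$ is $\delta_id_i(\bar z)$, so $L(\delta)\bar z=D(\bar z)\delta$.
\end{itemize}
Combining these, the intersection of the tangent spaces is $L\big(D(\bar z)^{-1}\mathrm{Im}(A(\bar\theta))\big)$. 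Because $D(\bar z)$ is invertible and $L$ is injective, this subspace has dimension $11$.

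\textbf{Local structure of the intersection.} We claim that near $A(\bar\theta)$,
$$\mathcal M_\mathrm{aff}\cap\mathcal M_\mathrm{rank}=\Phi(\mathcal U),\qquad \Phi(z)\triangleq A(\pi(z)),$$
where $\mathcal U$ is a neighborhood of $\bar z$ in the unit sphere $S^{11}\subset\mathbb R^{12}$.
\begin{itemize}
\item Inclusion $\supseteq$: by \eqref{eq:affine-projection}, $A(\pi(z))z=0$, so $\mathrm{rank}(\Phi(z))\le 11$. Lower semicontinuity of rank near $A(\bar\theta)$ gives $\mathrm{rank}(\Phi(z))=11$.
\item Inclusion $\subseteq$: let $A(\theta)$ be a nearby rank-11 point. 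Its unit kernel vector $z$ (sign chosen near $\bar z$) depends continuously on the matrix on the rank-11 set, so it lies near $\bar z$. Then $\theta=\pi(z)$ by \eqref{eq:affine-projection}.
\end{itemize}

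\textbf{$\Phi$ is an embedding.}
\begin{itemize}
\item Injectivity and continuity of the inverse: the inverse map is $A(\theta)\mapsto$ normalized kernel vector, which is continuous. So $\Phi$ is a homeomorphism onto its image.
\item Immersion: differentiating $\pi_i$ gives $\mathbf J_\pi(z)=-D(z)^{-1}A(\pi(z))$. At $\bar z$ this is $-D(\bar z)^{-1}A(\bar\theta)$, whose kernel is exactly $\mathrm{span}\{\bar z\}$. That line is the normal direction to $S^{11}$, so $d\pi$ is injective on $T_{\bar z}S^{11}=\bar z^\perp$, and by continuity on $\mathcal U$. Since $L$ is injective, $\Phi$ is an immersion.
\end{itemize}
Lemma~\ref{lem:embedding-image} then says the intersection is an embedded manifold of dimension $11$ near $A(\bar\theta)$. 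Lemma~\ref{lemma:tangent-intersect} concludes that $\mathcal M_\mathrm{aff}$ and $\mathcal M_\mathrm{rank}$ meet non-tangentially at $A(\bar\theta)$.

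\textbf{Expected main obstacle.} The delicate step is the local set identification, specifically the continuity of the kernel map on the rank-11 set. It underlies both the inclusion $\subseteq$ and the homeomorphism claim. We plan to handle it via the continuity of the one-dimensional eigenspace of $X^\top X$ for its simple zero eigenvalue.
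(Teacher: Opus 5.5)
Your proposal is correct and follows the paper's proof essentially step for step. Both arguments use the parametrization $z\mapsto A(\pi(z))$ with Jacobian $\mathbf J_\pi(z)=-D(z)^{-1}A(\pi(z))$ for the manifold part, and the same tangent-space count $L\big(D(\bar z)^{-1}\mathrm{Im}(A(\bar\theta))\big)$ (which the paper writes as $\phi(\ker(Y^\top D(\bar z)))$) before invoking Lemma~\ref{lemma:tangent-intersect}. The differences are minor. You normalize on the unit sphere rather than on the hyperplane $\bar z^\top z=1$, and you obtain the homeomorphism onto the image directly from continuity of the normalized kernel vector. The paper instead shrinks the domain with the local embedding theorem and proves the set identification with the explicit bound $\sigma\|b\|\le\|A(\bar\theta)-M\|$.
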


\begin{proof}
Note that $A(\bar\theta)\in\mathcal M_\mathrm{aff}\cap\mathcal M_\mathrm{rank}$ by the proof of Proposition~\ref{prop:mle:unique}. 
We claim that (\romannumeral1) $\mathcal M_\mathrm{aff}\cap\mathcal M_\mathrm{rank}$ is an 11-dimensional manifold near $A(\bar\theta)$, and (\romannumeral2) $T_{A(\bar\theta)}(\mathcal M_\mathrm{aff})\cap T_{A(\bar\theta)}(\mathcal M_\mathrm{rank})$ is also 11-dimensional.

\begin{quote}
\small\itshape
``We show (\romannumeral1) by constructing an embedding $\Phi$ of an 11-dimensional manifold $U$ whose image is $\mathcal M_\mathrm{aff}\cap\mathcal M_\mathrm{rank}$ near $A(\bar\theta)$.''
\end{quote}

Every $M\in\mathcal M_\mathrm{aff}\cap\mathcal M_\mathrm{rank}$ takes the form $M=A(\theta)$ with some nonzero null vector $z$, and if $d_i(z)\neq0$ for all $i\in[N]$, then by \eqref{eq:affine-projection}, $M=A(\pi(z))$. This suggests a mapping from $z$ to $A(\pi(z))$. However, $z$ is determined only up to scale since $\pi$ is scale-invariant, we first fix the scale by the affine hyperplane
$$
H\triangleq\{z\in\mathbb{R}^{12}\mid \bar z^\top z=1\}.
$$
Note that $H$ is an 11-dimensional affine subspace containing $\bar z$. Let
\begin{align*}
U\triangleq\{z\in H&\mid d_i(z)\neq0,\ i\in[N],\\
&\quad\mathrm{rank}(A(\pi(z)))=11\}.
\end{align*}
We have $\bar z\in U$ since $d_i(\bar z)\neq0$ for all $i\in[N]$ and $\mathrm{rank}(A(\pi(\bar z)))=\mathrm{rank}(A(\bar\theta))=11$.

We claim that $U$ is open in $H$. Since $d_i(\cdot)$ is continuous, the first condition $d_i(z)\neq0$ for all $i\in[N]$ defines an open set. For the second condition, $\mathrm{rank}(A(\pi(z)))=11$ implies some $11\times11$ minor of $A(\pi(z))$ is nonzero. Since $A(\pi(\cdot))$ is also continuous, we have $\mathrm{rank}(A(\pi(z)))\geq11$ nearby. Note that the first condition also implies $A(\pi(z))z=0$ by \eqref{eq:affine-projection}, and so $\mathrm{rank}(A(\pi(z)))\leq11$. Altogether, $\mathrm{rank}(A(\pi(z)))=11$ nearby, the second condition also defines an open set. 
Finally, $U$ is the intersection of two open sets, and hence open in $H$, as claimed.
It follows from \citep[Proposition 3.23]{B23} that $U$ is an 11-dimensional manifold.

To this end, we denote the mapping
$$
\Phi:U\to\mathbb{R}^{2N\times12},\quad z\mapsto A(\pi(z)),
$$
and show that $\Phi$ is an embedding. We first claim that $\Phi$ is an immersion.
It is smooth, since $\pi$ is rational with nonzero denominators on $U$ and $A$ is affine. The Jacobian of $\pi$ at $z$ is
\begin{equation}\label{eq:Jacobian:pi-z}
\mathbf J_\pi(z)=-D(z)^{-1}A(\pi(z))\in\mathbb{R}^{2N\times 12}
\end{equation}
since
\begin{align*}
\mathbf J_{\pi_i}(z)&=\frac{(e_3^\top P_iz)\begin{bmatrix}I_2&0\end{bmatrix}KP_i-\begin{bmatrix}I_2&0\end{bmatrix}KP_iz(e_3^\top P_i)}{(e_3^\top P_iz)^2}\\
&=\frac{\begin{bmatrix}I_2&0\end{bmatrix}KP_i-\pi_i(z)(e_3^\top {P_i})}{d_i(z)}=-\frac{[A(\pi(z))]_i}{d_i(z)}.
\end{align*}
Since $A$ is affine, by chain rule and \eqref{eq:affine}, the differential of $\Phi$ at $z\in U$ is
$$
d\Phi_z:T_z(U)\to\mathbb{R}^{2N\times12},\quad v\mapsto\begin{pmatrix}\vdots\\ [\mathbf J_\pi(z)v]_i\,e_3^\top P_i\\ \vdots\end{pmatrix}.
$$

To show that $d\Phi_z$ is injective, let $v_1,v_2\in T_z(U)$ with $d\Phi_z(v_1)=d\Phi_z(v_2)$. Then we have
\begin{align*}
&[\mathbf J_\pi(z)(v_1-v_2)]_i\,e_3^\top P_i=0,\quad i\in[N]\\
\implies\ &\mathbf J_\pi(z)(v_1-v_2)=0\\
\implies\ &A(\pi(z))(v_1-v_2)=0
\end{align*}
The first implication holds because $e_3^\top P_i\neq0$ for all $i\in[N]$, the second holds by \eqref{eq:Jacobian:pi-z} and $D(z)$ is invertible for $z\in U$.
Since $A(\pi(z))z=0$ by \eqref{eq:affine-projection} and $\mathrm{rank}(A(\pi(z)))=11$ for $z\in U$, we have 
$$
v_1-v_2\in\ker(A(\pi(z)))=\mathrm{span}\{z\}.
$$
On the other hand, $T_z(U)=T_z(H)=\bar z^\perp$~\citep[Proposition 3.23]{B23},
and so $v_1-v_2\in\bar z^\perp$.
Let $v_1-v_2=cz$ for some scalar $c$, we have 
$$
0=\bar z^\top(v_1-v_2)=c\,\bar z^\top z=c
$$
since $z\in H$. That is, $v_1=v_2$ and $d\Phi_z$ is injective, which implies $\Phi$ is an immersion, as claimed.

Since $\Phi$ is an immersion, by the local embedding theorem~\citep[Theorem 4.25]{L13}, there exists an open neighborhood $U'\subseteq U$ of $\bar z$ such that the mapping
\begin{equation*}
  \Phi':U'\to\mathbb{R}^{2N\times12},\quad z\mapsto A(\pi(z))
\end{equation*}
is an embedding. Since $U'$ is still open in $H$ and contains $\bar z$, all properties of $U$ established above hold for $U'$. For the sake of simplicity, we denote $U'$ by $U$ and $\Phi'$ by $\Phi$ in the remainder of the proof.

Finally, we show that the image $\Phi(U)$ is $\mathcal M_\mathrm{aff}\cap\mathcal M_\mathrm{rank}$ near $A(\bar\theta)$. By construction, $\Phi(U)\subseteq\mathcal M_\mathrm{aff}\cap\mathcal M_\mathrm{rank}$. For the converse, let $M\in\mathcal M_\mathrm{aff}\cap\mathcal M_\mathrm{rank}$ be sufficiently close to $A(\bar\theta)$. We claim that $M$ has a null vector in $U$. Since $\mathrm{rank}(M)=11$, $M$ has a unit null vector $u$. Let $b\triangleq u-(\bar z^\top u)\bar z$ be the orthogonal component to $\bar z$. Since $\ker(A(\bar\theta))=\mathrm{span}\{\bar z\}$ and $\bar z^\top b=0$, we have $\|A(\bar\theta)b\|\geq\sigma\|b\|$ where $\sigma>0$ is the smallest nonzero singular value of $A(\bar\theta)$. Then
\begin{align*}
\sigma\|b\|\leq\|A(\bar\theta)b\|&=\|A(\bar\theta)u\|\\
&=\|(A(\bar\theta)-M)u\|\leq\|A(\bar\theta)-M\|,
\end{align*}
where the first equality holds because $A(\bar\theta)\bar z=0$ and the second holds by $Mu=0$, and the last inequality uses $\|u\|=1$. Since $M$ is sufficiently close to $A(\bar\theta)$, $b$ is also sufficiently small.
Since $\|b\|^2=1-(\bar z^\top u)^2$, we have $\bar z^\top u\neq0$. Let $z\triangleq u/(\bar z^\top u)$, then $Mz=0$ and $\bar z^\top z=1$, i.e., $z\in H$. Furthermore, 
$$
z-\bar z=\frac{u}{\bar z^\top u}-\frac{(\bar z^\top u)\bar z}{\bar z^\top u}=\frac{b}{\bar z^\top u}
$$
is sufficiently small, that is, $z\in U$ is a null vector, as claimed. Since $M\in\mathcal M_\mathrm{aff}$, we have $M=A(\theta)$ for some $\theta$. Since $z\in U$, we have $d_i(z)\neq0$ for all $i$, and so $A(\theta)z=0$ implies $\theta=\pi(z)$ by~\eqref{eq:affine-projection}, i.e., $M=\Phi(z)$. Therefore, $\Phi(U)$ is $\mathcal M_\mathrm{aff}\cap\mathcal M_\mathrm{rank}$ near $A(\bar\theta)$.

Altogether, since $\Phi$ is an embedding and $U$ is $11$-dimensional, $\mathcal M_\mathrm{aff}\cap\mathcal M_\mathrm{rank}$ is an $11$-dimensional manifold near $A(\bar\theta)$ by Lemma~\ref{lem:embedding-image}, which proves (\romannumeral1).

\begin{quote}
\small\itshape
``We show (\romannumeral2) by writing out both tangent spaces at $A(\bar\theta)$ and computing their intersection.''
\end{quote}

Consider the linear mapping
$$
\phi:\mathbb R^{2N}\to\mathbb R^{2N\times12},\quad \theta\mapsto\begin{pmatrix}\vdots\\ \theta_ie_3^\top P_i\\ \vdots\end{pmatrix}.
$$
By~\eqref{eq:affine}, every $\Gamma\in T_{A(\bar\theta)}(\mathcal M_\mathrm{aff})$ takes the form $\Gamma=\phi(\theta^\Gamma)$ for some $\theta^\Gamma\in\mathbb R^{2N}$. We claim that $\phi$ is injective. Let $\theta_1,\theta_2\in\mathbb R^{2N}$ with $\phi(\theta_1)=\phi(\theta_2)$. Then $(\theta_{1,i}-\theta_{2,i})e_3^\top P_i=0$ for all $i\in[N]$. Since $e_3^\top P_i\neq0$, we have $\theta_1=\theta_2$.

Recall that $\bar z$ forms a basis for the right null space of $A(\bar\theta)$, and let $Y\in\mathbb R^{2N\times(2N-11)}$ be a matrix whose columns form a basis for the left null space. For $\Gamma$ to be in $T_{A(\bar\theta)}(\mathcal M_\mathrm{rank})$, it must satisfy $Y^\top\Gamma\bar z=Y^\top D(\bar z)\theta^\Gamma=0$~\citep[Section~7.5]{B23}. Therefore, $T_{A(\bar\theta)}(\mathcal M_\mathrm{aff})\cap T_{A(\bar\theta)}(\mathcal M_\mathrm{rank})=\phi(\ker(Y^\top D(\bar z)))$.
Since $D(\bar z)$ is invertible, we have $\mathrm{rank}(Y^\top D(\bar z))=\mathrm{rank}(Y^\top)=2N-11$. By rank-nullity theorem, $\dim(\ker(Y^\top D(\bar z)))=2N-(2N-11)=11$. Since $\phi$ is injective, the intersection is also 11-dimensional, which proves (\romannumeral2).

Altogether, by Lemma~\ref{lemma:tangent-intersect}, $\mathcal M_\mathrm{aff}$ and $\mathcal M_\mathrm{rank}$ meet non-tangentially at $A(\bar\theta)$.
\end{proof}

Recall $\bar w'=(\bar h,\bar z,0)\in\mathcal W'_{\bar\theta}$ and $\bar x=(\bar h,\bar z,0)\in\mathcal X_{\bar\theta}$ with $\bar h=1/\|z_\mathrm{gt}\|$ and $\bar z=z_\mathrm{gt}/\|z_\mathrm{gt}\|$.
We now show the Abadie constraint qualification.

\begin{theorem}\label{thm:ACQ}
ACQ holds at $\bar x\in\mathcal X_{\bar\theta}$.
\end{theorem}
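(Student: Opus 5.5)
The plan is the two-stage route announced before the statement. First we show ACQ at $\bar w'\in\mathcal W'_{\bar\theta}$, and then we transport it to $\bar x\in\mathcal X_{\bar\theta}$ through the map $(h,z,v)\mapsto(h,z,v\otimes z)$.

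For the first stage, write $\mathcal W'_{\bar\theta}=\mathcal W_1\cap\mathcal W_2$ near $\bar w'$. Here $\mathcal W_1=\{(h,z,v)\mid m_{\bar\theta,i}(z,v)=0,\ \|z\|^2=1\}$ is the camera-resectioning feasible set. The set $\mathcal W_2=\{(h,z,v)\mid r_j(h,z)=0\}$ is the rotation set, with $v$ free.

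ACQ for $\mathcal W_1$ at $\bar w'$ is \citet{C21}'s result. Their standing hypothesis (Assumption 1.1) is exactly Lemma~\ref{lemma:ACQ:fix-rank}.

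For $\mathcal W_2$ we invoke \citet{WKC26}. The set $\{(h,z)\mid r_j(h,z)=0\}$ near $(\bar h,\bar z)$ with $\bar h\neq0$ is the cone $\{(h,hR,t)\mid R\in\mathrm{SO}(3)\}$. It is a 7-dimensional manifold, $\mathbb R_{\neq0}\times\mathrm{SO}(3)\times\mathbb R^3$, and its Jacobian kernel has dimension 7. Taking the product with the free $v\in\mathbb R^{2N}$ preserves ACQ.

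By Lemma~\ref{lemma:non-tangential} it then suffices to show non-tangentiality, which we check by Lemma~\ref{lemma:tangent-intersect} through a dimension count.

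\textbf{Manifold structure and dimension of the intersection.} Since $d_i(z)\neq0$ near $\bar z$, the model constraints solve as $v=\bar\theta-\pi(z)$. Hence $\mathcal W'_{\bar\theta}$ is locally the graph of a smooth function of $(h,z)$ over the set $\{(h,hR,t)\mid\|z\|=1\}$. That set is a 6-dimensional manifold, parametrized smoothly by $(R,t)\in\mathrm{SO}(3)\times\mathbb R^3$, with $h$ fixed by normalization. So $\dim\mathcal W'_{\bar\theta}=6$ locally.

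\textbf{Tangent spaces.} The tangent space of $\mathcal W_1$ at $\bar w'$ is
\[
\{(\dot h,\dot z,\dot v)\mid \bar z^\top\dot z=0,\ D(\bar z)\dot v=A(\bar\theta)\dot z\},
\]
obtained by linearizing $m_{\bar\theta,i}$ at $v=0$. Here $\dot h$ is free and $\dot v$ is determined by $\dot z$. The tangent space of $\mathcal W_2$ is
\[
\{(\dot h,\dot z,\dot v)\mid \dot z\in T\},
\]
where $T=\{(\dot h\bar R+\bar h\,\bar R\Omega,\dot t)\mid \Omega\text{ skew}\}$ is the cone tangent. Writing $\bar R$ for the rotation part of $\bar z$ divided by $\bar h$, the coordinates of $T$ are $(\dot h,\Omega,\dot t)$, which is 7-dimensional.

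\textbf{Intersection count.} The intersection is parametrized by $(\dot h,\Omega,\dot t)$ subject to the single linear condition $\bar z^\top\dot z=0$, with $\dot v$ then uniquely determined. This condition is nontrivial: the coefficient of $\dot h$ is $\bar z_R^\top\mathrm{vec}(\bar R)=3\bar h\neq0$. The intersection is therefore 6-dimensional, matching $\dim\mathcal W'_{\bar\theta}$, and non-tangentiality follows.

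For the second stage, let $\Psi(h,z,v)=(h,z,v\otimes z)$, and let $F$ denote the defining constraints of $\mathcal X_{\bar\theta}$.
\begin{enumerate}[(i)]
\item \textbf{$\Psi$ is an embedding.} $\Psi$ is an injective immersion near $\bar w'$, since $z\neq0$ and $v$ can be recovered from $v\otimes z$. It is a homeomorphism onto its image, because the inverse is $y\mapsto$ the coefficients of $y_i$ along $z$, which is continuous. It maps $\mathcal W'_{\bar\theta}$ locally onto $\mathcal X_{\bar\theta}$: by \citet[Section 3.3]{C21}, block symmetry with $z\neq0$ forces $y_i=v_i\otimes z$, and the lifted model constraints reduce to $m_{\bar\theta,i}=0$. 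By Lemma~\ref{lem:embedding-image}, $\mathcal X_{\bar\theta}$ is therefore a 6-dimensional manifold near $\bar x$.
\item \textbf{Jacobian kernel.} It then suffices to show $\dim\ker\mathbf J_F(\bar x)\le 6$, since $T_{\bar x}\mathcal X_{\bar\theta}\subseteq\ker\mathbf J_F(\bar x)$ always holds. Linearize at $\bar x$, where $y=0$:
\begin{itemize}
\item Block symmetry of $u_iu_i^\top$, linearized at $u_i=(\bar z,0)$, says $\dot y_{i}\bar z^\top$ has symmetric blocks. That is, $\dot y_{i,k}\bar z^\top$ is symmetric, so $\dot y_{i,k}=\dot v_{i,k}\bar z$ for scalars $\dot v_{i,k}$.
\item The lifted model constraints $g_{\bar\theta,i}(x)(x^{(i)})^\top=0$, linearized at $g=0$, give $\mathbf J_{g_{\bar\theta,i}}(\bar x)\dot x\,(\bar x^{(i)})^\top=0$. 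Since $\bar x^{(i)}\neq0$, this is $A_i\dot z-d_i(\bar z)\dot v_i=0$.
\item The rotation and normalization constraints linearize as in stage one.
\end{itemize}
These are exactly the linearized constraints of $\mathcal W'_{\bar\theta}$ at $\bar w'$ pushed forward by $d\Psi$. So $\ker\mathbf J_F(\bar x)=d\Psi(\ker\mathbf J_{\mathcal W'}(\bar w'))$, which has dimension 6 by stage one.
\end{enumerate}

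The main obstacle is the linearization of the block-symmetry and quadratic-in-$x^{(i)}$ lifted model constraints at $\bar x$. One must check that the blocks of $\dot y_i$ are genuinely forced to be multiples of $\bar z$. The block-symmetry constraint alone gives, for each block $k$, that $\dot y_{i,k}\bar z^\top$ is symmetric with $\bar z\neq0$, which forces $\dot y_{i,k}\parallel\bar z$. This should work, but the index bookkeeping of $\mathcal{BS}^{36}_{12}$ is where errors could creep in.
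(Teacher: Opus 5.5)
Your proof is correct. Its first stage is the paper's in substance: the same split of $\mathcal W'_{\bar\theta}$ into a resectioning piece and a rotation piece, the same sources for ACQ of each, and non-tangentiality via Lemma~\ref{lemma:tangent-intersect}.

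The differences in that first stage are cosmetic. You keep $\|z\|^2=1$ only in $\mathcal W_1$, so the rotation piece is the $7$-dimensional cone. You then get $6=7-1$ by checking that $\bar z^\top\dot z=3\bar h\,\dot h+\bar t^\top\dot t$ (with $\bar t$ the translation part of $\bar z$) is a nonzero functional on the cone's tangent space. You also describe $\mathcal W'_{\bar\theta}$ as the graph $v=\bar\theta-\pi(z)$, rather than as the image of the paper's embedding of $\mathcal Z$.

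One step deserves a line. Since \citet{WKC26} concerns $\mathrm{SO}(3)$ itself, add the transfer to the cone: by homogeneity $\nabla r_j(\bar h,\bar h\bar R)=\bar h\,\nabla r_j(1,\bar R)$, so the rank $6$ of the $R$-gradients on $\mathrm{SO}(3)$ gives rank at least $6$. The $4$-dimensional cone tangent in $(h,\mathrm{vec}(R))$ lies in the kernel, which forces equality.

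The genuine difference is the second stage. The paper moves ACQ from $\mathcal W'_{\bar\theta}$ to $\mathcal X_{\bar\theta}$ by citing \citet[Lemma 4.6]{C21}: ACQ survives the lift $(h,z,v)\mapsto(h,z,v\otimes z)$ when the ideal is unchanged. You instead linearize the lifted constraints at $\bar x$ by hand:
$\bar y=0$ kills the quadratic part of block symmetry;
$\bar x^{(i)}\neq0$ reduces $g_{\bar\theta,i}(x)(x^{(i)})^\top=0$ to $\mathbf J_{g_{\bar\theta,i}}(\bar x)\dot x=0$ at first order;
symmetry of $\dot y_{i,k}\bar z^\top$ forces $\dot y_{i,k}\parallel\bar z$.
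Together these give $\ker\mathbf J_F(\bar x)=d\Psi(\ker\mathbf J_{\mathcal W'}(\bar w'))$. This is self-contained and shows exactly why the lift creates no spurious kernel directions. The paper's citation is shorter but hides this mechanism.

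Finally, because you compute every Jacobian kernel explicitly, ACQ for $\mathcal W'_{\bar\theta}$ already follows from your graph description, the inclusion $T_{\bar w'}(\mathcal W'_{\bar\theta})\subseteq\ker\mathbf J_{\mathcal W'}(\bar w')$, and the equal dimensions $6=6$. So your route does not actually need ACQ for $\mathcal W_1$ from \citet{C21}, hence not Lemma~\ref{lemma:ACQ:fix-rank}, nor Lemma~\ref{lemma:non-tangential}. The only external input it really uses is the rank statement for the rotation constraints.
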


\begin{proof}
We first show that ACQ holds at $\bar w'\in\mathcal W'_{\bar\theta}$, then extend it to $\bar x\in\mathcal{X}_{\bar\theta}$. 
Consider the sets
\begin{align*}
\mathcal W_1&\triangleq\{w\mid m_{\bar\theta,i}(z,v)=0,\ i\in[N],\ \|z\|^2=1\},\\
\mathcal W_2&\triangleq\{w\mid r_j(h,z)=0,\ j\in[20],\ \|z\|^2=1\},
\end{align*}
where $w=(h,z,v)$ such that $\mathcal W'_{\bar\theta}=\mathcal W_1\cap\mathcal W_2$.
Note that $\mathcal W_1$ is the feasible set of nearest rank deficient matrix problem analyzed by \citet{C21}. By Lemma~\ref{lemma:ACQ:fix-rank}, it follows from \citet[Lemma 4.7]{C21} that ACQ holds at $\bar w'\in\mathcal W_1$. 
Note that ACQ holds for the 20 redundant rotation constraints on $\mathrm{SO}(3)$ by~\citet[Theorem 6]{WKC26}, and is preserved when introducing the homogeneous variable $h$ with $h^2=1$ by~\citet[Section 4.3]{CAP22}.
$\mathcal W'_2$ differs from this only by the variables $t,v$\footnote{$w=(h,z,v)=(h,\text{vec}(R),t,v)$.}, and the constraint $\|z\|^2=1$ that replaces $h^2=1$, neither of which affects ACQ\footnote{By \citet[Definition 3.1]{CAP22}, ACQ holds if the rank of the Jacobian of the constraints equals the codimension of the variety. Introducing (free) variables $t,v$ adds zero columns to the Jacobian and dimensions to the variety. Since the gradients of $h^2$ and $\|z\|^2$ are not in the span of the gradients of the rotation constraints, both $h^2=1$ and $\|z\|^2=1$ increase the rank by one and reduce the dimension by one. Hence replacing $h^2=1$ by $\|z\|^2=1$ leaves both unchanged.}, hence ACQ holds at $\bar w'\in\mathcal W_2$.

We proceed to show the conditions of Lemma~\ref{lemma:tangent-intersect}: (\romannumeral1) $\mathcal W_1\cap\mathcal W_2$ is a 6-dimensional manifold near $\bar w'$, and (\romannumeral2) $T_{\bar w'}(\mathcal W_1)\cap T_{\bar w'}(\mathcal W_2)$ is also 6-dimensional.

For (\romannumeral1), recall $\mathcal Z$ in \eqref{eq:feasible-set:mle} and consider the mapping
$$
\Phi:\mathcal Z\to\mathbb R^{13+2N},\quad z\mapsto(1/\|z\|,z/\|z\|,\bar\theta-\pi(z)).
$$
By the proof of Theorem~\ref{thm:naive} ($\phi$ in \eqref{eq:thm1:phi} at $\bar\theta$) and the proof of Theorem~\ref{thm:lifted} (rescaling by $1/\|z\|$), $\Phi$ is well-defined and $\Phi(\mathcal Z)\subseteq\mathcal W'_{\bar\theta}$.
Since $\mathcal Z$ is an open subset of the 6-dimensional manifold $\mathrm{SO}(3)\times\mathbb R^3$, it is a 6-dimensional manifold~\citep[Proposition~3.23]{B23}. 

We first show that $\Phi$ is an embedding. It is smooth since $\|z\|\neq0$ and $\pi$ is rational with nonzero denominators on $\mathcal Z$. 
Next, consider the mapping
$$
\rho:\{(h,z,v)\mid h\neq0\}\to\mathbb R^{12},\quad (h,z,v)\mapsto z/h,
$$
which is smooth since $h\neq0$. Since the first component of $\Phi(z)$ is nonzero, $\Phi(z)$ lies in the domain of $\rho$, and
\begin{equation}\label{eq:left-inverse}
\rho(\Phi(z))=\frac{z/\|z\|}{1/\|z\|}=z\quad\text{for all }z\in\mathcal Z.
\end{equation}
Differentiating~\eqref{eq:left-inverse} gives $d\rho_{\Phi(z)}\circ d\Phi_z=I$ (identity map) by chain rule, hence $d\Phi_z$ is injective for all $z\in\mathcal Z$, i.e., $\Phi$ is an immersion.

We next show that $\Phi$ is a homeomorphism onto its image. Let $z_1,z_2\in\mathcal Z$ with $\Phi(z_1)=\Phi(z_2)$, then $z_1=\rho(\Phi(z_1))=\rho(\Phi(z_2))=z_2$ by~\eqref{eq:left-inverse}, and so $\Phi$ is a bijection onto $\Phi(\mathcal Z)$.
By~\eqref{eq:left-inverse}, its inverse
$$
\Phi^{-1}:\Phi(\mathcal Z)\to\mathcal Z,\quad w\mapsto\rho(w)
$$ 
is continuous since $\rho$ is smooth, thus $\Phi$ is a homeomorphism onto its image. Altogether, $\Phi$ is an embedding, and $\Phi(\mathcal Z)$ is a 6-dimensional manifold by Lemma~\ref{lem:embedding-image}.

Finally, we show that $\Phi(\mathcal Z)$ is $\mathcal W'_{\bar\theta}$ near $\bar w'$. Since $\Phi(\mathcal Z)\subseteq\mathcal W'_{\bar\theta}$, it remains to show the converse. 
Let $w=(h,z,v)\in\mathcal W'_{\bar\theta}$ be sufficiently close to $\bar w'$. To show $w\in\Phi(\mathcal Z)$, let $z'\triangleq z/h$. Since $\bar h>0$ and $e_3^\top P_i\bar z\neq0$, by continuity we have $h>0$ and $e_3^\top P_iz\neq0$ for all $i\in[N]$. We first show $z'\in\mathcal Z$. Since $r_j(1,z')=r_j(h,z)/h^2=0$, the rotation component of $z'$ lies in $\mathrm{SO}(3)$ by Appendix~\ref{appx:rot}, and $e_3^\top P_iz'=e_3^\top P_iz/h\neq0$, hence $z'\in\mathcal Z$. We next show $\Phi(z')=w$. Since $\|z\|=1$ and $h>0$, we have $\|z'\|=1/h$, and so the first two components of $\Phi(z')$ are $(h,z)$. Moreover, dividing $m_{\bar\theta,i}(z,v)=0$ by $e_3^\top P_iz\neq0$ gives $v_i=\bar\theta_i-\pi_i(z)=\bar\theta_i-\pi_i(z')$, since $\pi_i$ is scale-invariant. Hence $\Phi(z')=(h,z,v)=w$, i.e., $w\in\Phi(\mathcal Z)$.
Therefore, $\Phi(\mathcal Z)$ is $\mathcal W'_{\bar\theta}$ near $\bar w'$, and $\mathcal W_1\cap\mathcal W_2=\mathcal W'_{\bar\theta}$ is a 6-dimensional manifold near $\bar w'$, which proves (\romannumeral1).

For (\romannumeral2), recall that since ACQ holds at $\bar w'$ for both $\mathcal W_1$ and $\mathcal W_2$, their tangent spaces are the kernels of the Jacobian of their constraints by Definition~\ref{def:ACQ}. Let 
$$
\mathcal S\triangleq\{(h,z)\mid r_j(h,z)=0,\ j\in[20],\ \|z\|^2=1\}.
$$ 
such that $\mathcal W_2=\mathcal S\times\mathbb R^{2N}$ since the constraints of $\mathcal W_2$ involve only $(h,z)$. By \citet[Proposition~3.20]{B23}, we have $T_{\bar w'}(\mathcal W_2)=T_{(\bar h,\bar z)}(\mathcal S)\times\mathbb R^{2N}$. By (\romannumeral1), $\mathcal S$ is the image of $\mathrm{SO}(3)\times\mathbb R^3$ under the embedding $z\mapsto(1/\|z\|,z/\|z\|)$ near $(\bar h,\bar z)$, and so $T_{(\bar h,\bar z)}(\mathcal S)$ is also 6-dimensional. For $\mathcal W_1$, differentiating the model constraints~\eqref{eq:model} at $\bar w'$ shows that every $\gamma=(\gamma_h,\gamma_z,\gamma_v)\in T_{\bar w'}(\mathcal W_1)$ satisfies
$$
[A(\bar\theta)]_i\gamma_z-d_i(\bar z)\gamma_{v,i}=0,\quad i\in[N],
$$
where $\gamma_{v,i}\in\mathbb R^2$ is the $i$-th block of $\gamma_v$, i.e., $A(\bar\theta)\gamma_z=D(\bar z)\gamma_v$ with $D$ in~\eqref{eq:depth}, and the normalization constraint gives $\bar z^\top\gamma_z=0$.
Consider the linear mapping
$$
L:T_{\bar w'}(\mathcal W_1)\cap T_{\bar w'}(\mathcal W_2)\to T_{(\bar h,\bar z)}(\mathcal S),\quad \gamma\mapsto(\gamma_h,\gamma_z).
$$
It is trivially well-defined. It is injective because $L(\gamma_1)=L(\gamma_2)$ gives $(\gamma_{1,h},\gamma_{1,z})=(\gamma_{2,h},\gamma_{2,z})$, and $\gamma_{1,v}=D(\bar z)^{-1}A(\bar\theta)\gamma_{1,z}=D(\bar z)^{-1}A(\bar\theta)\gamma_{2,z}=\gamma_{2,v}$ since $D(\bar z)$ is invertible, which implies $\gamma_1=\gamma_2$.
For surjectivity, let $(\gamma_h,\gamma_z)\in T_{(\bar h,\bar z)}(\mathcal S)$ and $\gamma_v\triangleq D(\bar z)^{-1}A(\bar\theta)\gamma_z$. Then $\gamma\in T_{\bar w'}(\mathcal W_2)$ by construction, and $\gamma\in T_{\bar w'}(\mathcal W_1)$ since the model constraints hold by the choice of $\gamma_v$ and $\bar z^\top\gamma_z=0$ holds because $\|z\|^2=1$ is also a constraint of $\mathcal W_2$. 
Altogether, $L$ is bijective and since it is linear, $\dim(T_{\bar w'}(\mathcal W_1)\cap T_{\bar w'}(\mathcal W_2))=\dim(T_{(\bar h,\bar z)}(\mathcal S))=6$, which proves (\romannumeral2).

Finally, by Lemma~\ref{lemma:tangent-intersect}, $\mathcal W_1$ and $\mathcal W_2$ meet non-tangentially at $\bar w'$, and by Lemma~\ref{lemma:non-tangential}, ACQ holds at $\bar w'\in\mathcal W_1\cap\mathcal W_2=\mathcal W'_{\bar\theta}$.
It follows from \citet[Lemma 4.6]{C21} that ACQ holds at $\bar x\in\mathcal X_{\bar\theta}$ because $\mathcal X_{\bar\theta}$ is the image of $\mathcal W'_{\bar\theta}$ under the embedding $(h,z,v)\mapsto(h,z,v\otimes z)$, and the ACQ property is preserved if the ideal generated by the algebraic description remains unchanged.
\end{proof}

\subsection{Smoothness}\label{appx:smoothness}
Recall $\bar w'=(\bar h,\bar z,0)\in\mathcal W'_{\bar\theta}$ and $\bar x=(\bar h,\bar z,0)\in\mathcal X_{\bar\theta}$ with $\bar h=1/\|z_\mathrm{gt}\|$ and $\bar z=z_\mathrm{gt}/\|z_\mathrm{gt}\|$.

\begin{definition}[{\citet[Definition~4.1]{CAP22}}]\label{def:smooth}
The mapping $\theta\mapsto\mathcal X_\theta$ is \emph{smooth} nearby $(\bar\theta,\bar x)$ if $\{(\theta,x)\mid x\in\mathcal X_\theta\}$ is a smooth manifold nearby $(\bar\theta,\bar x)$ whose dimension is $2N+\dim(\mathcal X_{\bar\theta})$, where $\dim(\mathcal X_{\bar\theta})$ is the dimension of $\mathcal X_{\bar\theta}$ near $\bar x$.
\end{definition}

\begin{theorem}\label{thm:smoothness}
The mapping $\theta\mapsto\mathcal X_\theta$ is smooth nearby $(\bar\theta,\bar x)$.
\end{theorem}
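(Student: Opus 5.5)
The plan is to show that the total set $\mathcal G\triangleq\{(\theta,x)\mid x\in\mathcal X_\theta\}$ is, near $(\bar\theta,\bar x)$, the image of an explicit embedding of an open subset of $\mathbb R^{2N}\times\mathcal Z$. Here $\mathcal Z$ is the feasible set \eqref{eq:feasible-set:mle}, which is an open subset of $\mathrm{SO}(3)\times\mathbb R^3$ and hence 6-dimensional. From the proof of Theorem~\ref{thm:ACQ}, $\mathcal X_{\bar\theta}$ is a 6-dimensional manifold near $\bar x$: it is the image of the 6-dimensional $\mathcal W'_{\bar\theta}$ under $(h,z,v)\mapsto(h,z,v\otimes z)$. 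It therefore suffices to show that $\mathcal G$ is a $(2N+6)$-dimensional manifold near $(\bar\theta,\bar x)$.

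\textbf{Step 1 (parametrization).} I would define
$$
\Psi:\mathbb R^{2N}\times\mathcal Z\to\mathbb R^{2N}\times\mathbb R^{13+24N},\quad(\theta,z)\mapsto\Big(\theta,\ \tfrac{1}{\|z\|}\big(1,\,z,\,(\theta-\pi(z))\otimes z\big)\Big).
$$
By the proofs of Theorem~\ref{thm:naive} and Theorem~\ref{thm:lifted} (the maps \eqref{eq:thm1:phi} and \eqref{eq:thm2:phi} composed), $\Psi(\theta,z)\in\{\theta\}\times\mathcal X_\theta$, so $\Psi$ maps into $\mathcal G$. Moreover, $\Psi(\bar\theta,z_\mathrm{gt})=(\bar\theta,\bar x)$. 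The map $\Psi$ is smooth because $\|z\|\neq0$ and $\pi$ is rational with nonvanishing denominators on $\mathcal Z$.

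\textbf{Step 2 (embedding).} I would use the smooth left inverse $\rho:(\theta,h,z,y)\mapsto(\theta,z/h)$, defined where $h\neq0$. It satisfies $\rho\circ\Psi=\mathrm{id}$, exactly as in \eqref{eq:left-inverse}. By the chain rule $d\Psi$ is injective, so $\Psi$ is an immersion. Since $\rho$ is continuous, $\Psi$ is a homeomorphism onto its image. By Lemma~\ref{lem:embedding-image}, $\Psi(\mathbb R^{2N}\times\mathcal Z)$ is then a $(2N+6)$-dimensional embedded submanifold.

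\textbf{Step 3 (local surjectivity).} This is the main obstacle. I must show that every $(\theta,x)\in\mathcal G$ sufficiently close to $(\bar\theta,\bar x)$ lies in the image of $\Psi$, uniformly in $\theta$. Take $x=(h,z,y)$ close to $\bar x$. Then $h>0$ and $d_i(z)\neq0$ by continuity, and $z\neq0$ since $\|z\|=1$. The block symmetry constraints give $y_i=v_i\otimes z$ \citep[Section 3.3]{C21}. The lifted model constraints, together with $z\in x^{(i)}$ and $z\neq0$, give $m_{\theta,i}(z,v)=0$ as in the proof of Lemma~\ref{lemma:feasible-set:lifted:restricted}.

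Now set $z'=z/h$. The homogeneity argument used in the proof of Theorem~\ref{thm:ACQ} shows $r_j(1,z')=0$, so the rotation part of $z'$ lies in $\mathrm{SO}(3)$, and $d_i(z')\neq0$; hence $z'\in\mathcal Z$. Dividing the model constraint by $d_i(z)$ gives $v_i=\theta_i-\pi_i(z')$, using that $\pi_i$ is scale-invariant. Since $\|z\|=1$ and $h>0$, we have $\|z'\|=1/h$, and therefore $\Psi(\theta,z')=(\theta,x)$.

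The care needed here is only to pick the neighborhood in $(\theta,x)$, not just in $x$, so that the continuity conditions $h>0$ and $d_i(z)\neq0$ hold. These conditions depend only on $x$, so this is straightforward. Finally, intersecting the image of $\Psi$ with this neighborhood shows that $\mathcal G$ is a smooth manifold of dimension $2N+6=2N+\dim(\mathcal X_{\bar\theta})$ near $(\bar\theta,\bar x)$, which is Definition~\ref{def:smooth}.
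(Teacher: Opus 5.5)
Your proof is correct and is essentially the paper's argument. Composing the paper's three maps gives exactly your $\Psi$: the parametrization $z\mapsto(1/\|z\|,z/\|z\|,\bar\theta-\pi(z))$ of $\mathcal W'_{\bar\theta}$ from the proof of Theorem~\ref{thm:ACQ}, the translation $w\mapsto w+(0,0,\theta-\bar\theta)$ justified by \eqref{eq:set-translation-prime}, and the lift $(\theta,h,z,v)\mapsto(\theta,h,z,v\otimes z)$. The only difference is that you verify the embedding and local surjectivity of the composite directly, via the left inverse $\rho$ and the block-symmetry argument, whereas the paper cites \citet[Section~5.2]{C21} for the lifting step.
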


\begin{proof}
Consider the sets
\begin{align*}
\mathcal M_\mathcal W&\triangleq\{(\theta,w)\mid w\in\mathcal W'_\theta\}\subseteq\mathbb R^{2N}\times\mathbb R^{13+2N},\\
\mathcal M_\mathcal X&\triangleq\{(\theta,x)\mid x\in\mathcal X_\theta\}\subseteq\mathbb R^{2N}\times\mathbb R^{13+24N},
\end{align*}
where $\mathcal W'_\theta$ and $\mathcal X_\theta$ are in \eqref{eq:W-normalized} and \eqref{eq:feasible-set:lifted}, respectively.
We first show that $\mathcal M_\mathcal W$ is a manifold nearby $(\bar\theta,\bar w')$, then extend it to $\mathcal M_\mathcal X$. 
Since $\|z\|^2=1$ does not involve $v$, \eqref{eq:set-translation} still holds for $\mathcal W'_\theta$, that is, 
\begin{equation}\label{eq:set-translation-prime}
\mathcal W'_\theta=\mathcal W'_{\bar\theta}+(0,0,\theta-\bar\theta).
\end{equation}
Consider the mapping
\begin{align*}
    \Phi:\mathbb R^{2N}\times\mathbb R^{13+2N}&\to\mathbb R^{2N}\times\mathbb R^{13+2N}\\
    (\theta,w)&\mapsto(\theta,w+(0,0,\theta-\bar\theta))
\end{align*}
such that $\Phi(\mathbb R^{2N}\times\mathcal W'_{\bar\theta})=\mathcal M_\mathcal W$ by \eqref{eq:set-translation-prime}.
Since $\mathcal W'_{\bar\theta}$ is a 6-dimensional manifold near $\bar w'$ by the proof of Theorem~\ref{thm:ACQ}, $\mathbb R^{2N}\times\mathcal W'_{\bar\theta}$ is a $(2N+6)$-dimensional manifold near $(\bar\theta,\bar w')$~\citep[Proposition~3.20]{B23}. $\Phi$ is a diffeomorphism\footnote{It is smooth, trivially a bijection, and the inverse $(\theta,w)\mapsto(\theta,w-(0,0,\theta-\bar\theta))$ is also smooth.}, which is also an embedding, hence $\mathcal M_\mathcal W$ is a $(2N+6)$-dimensional manifold near $(\bar\theta,\bar w')$ by Lemma~\ref{lem:embedding-image}.

Lastly, it follows from \citet[Section~5.2]{C21} that $\mathcal M_\mathcal X$ is a smooth manifold near $(\bar\theta,\bar x)$ since it is the image of $\mathcal M_\mathcal W$ under the embedding $(\theta,h,z,v)\mapsto(\theta,h,z,v\otimes z)$, and is a $(2N+6)$-dimensional manifold by Lemma~\ref{lem:embedding-image}. Likewise, $\mathcal X_{\bar\theta}$ is a 6-dimensional manifold near $\bar x$ by the proof of Theorem~\ref{thm:ACQ}. Altogether, $\dim(\mathcal M_\mathcal X)=2N+\dim(\mathcal X_{\bar\theta})$, and $\theta\mapsto\mathcal X_\theta$ is smooth nearby $(\bar\theta,\bar x)$ by Definition~\ref{def:smooth}.
\end{proof}

\subsection{Non-branch point}\label{appx:non-branch-point}

Recall the dual slack variable $\bar S$ in \eqref{eq:slack}, and $\bar x=(\bar h,\bar z,0)\in\mathcal X_{\bar\theta}$ with $\bar h=1/\|z_\mathrm{gt}\|$ and $\bar z=z_\mathrm{gt}/\|z_\mathrm{gt}\|$.

\begin{definition}[{\citet[Definition~4.3]{CAP22}}]\label{def:branch}
We say $\bar x$ is not a \textit{branch point} of $\mathcal X_{\bar\theta}$ with respect to $x\mapsto\bar Sx$ if $T_{\bar x}(\mathcal X_{\bar\theta})\cap\ker(\bar S)=\{0\}$.
\end{definition}

For simplicity, let $f_{\bar\theta}=(f_{\bar\theta,1},\dots,f_{\bar\theta,\bar\ell})$ collect all the constraints of $\mathcal X_{\bar\theta}$ in~\eqref{eq:feasible-set:lifted} such that
\begin{equation}\label{eq:constraint-f}
\mathcal X_{\bar\theta}=\{x\mid f_{\bar\theta}(x)=0\}.
\end{equation}
Note that $\ker(\bar S)=\{x\mid y=0\}$, hence only $h$ and $z$ are involved. Let $\mathbf h=(h,z)$, we first derive a special case for our problem.

\begin{lemma}\label{lemma:non-branch-point}
$\bar x$ is not a branch point of $\mathcal X_{\bar\theta}$ with respect to $x\mapsto\bar Sx$ if and only if $\ker(\mathbf J_{f_{\bar\theta},\mathbf h}(\bar x))$ is trivial.
\end{lemma}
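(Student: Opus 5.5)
By Theorem~\ref{thm:ACQ}, ACQ holds at $\bar x$, so by Definition~\ref{def:ACQ} we have $T_{\bar x}(\mathcal X_{\bar\theta})=\ker(\mathbf J_{f_{\bar\theta}}(\bar x))$. The non-branch condition of Definition~\ref{def:branch} therefore reads
\[
\ker(\mathbf J_{f_{\bar\theta}}(\bar x))\cap\ker(\bar S)=\{0\}.
\]
By \eqref{eq:slack}, $\ker(\bar S)=\{\gamma=(\gamma_h,\gamma_z,\gamma_y)\mid\gamma_y=0\}$. A vector $\gamma$ in this kernel is determined by its hub part $\gamma_{\mathbf h}=(\gamma_h,\gamma_z)$. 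For such $\gamma$, the Jacobian acts only through the hub columns:
\[
\mathbf J_{f_{\bar\theta}}(\bar x)\gamma=\mathbf J_{f_{\bar\theta},\mathbf h}(\bar x)\gamma_{\mathbf h}.
\]

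The proof then runs as follows. First, the map $\gamma_{\mathbf h}\mapsto(\gamma_{\mathbf h},0)$ is a linear bijection from $\mathbb R^{13}$ onto $\ker(\bar S)$. Second, under this bijection, $\ker(\mathbf J_{f_{\bar\theta},\mathbf h}(\bar x))$ corresponds exactly to $\ker(\mathbf J_{f_{\bar\theta}}(\bar x))\cap\ker(\bar S)$, by the displayed column-splitting identity. Third, one side is trivial if and only if the other is, which gives both directions of the equivalence.

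The argument is essentially linear algebra. The only step needing care is the first: identifying the tangent space with the Jacobian kernel, which is exactly what ACQ provides. Without ACQ one would only have $T_{\bar x}\subseteq\ker\mathbf J$, and only the "if" direction would follow. Since Theorem~\ref{thm:ACQ} is already established, I expect no real obstacle beyond writing out the block structure of $\mathbf J_{f_{\bar\theta}}$ with respect to $x=(\mathbf h,y)$.
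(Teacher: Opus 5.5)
Your proposal is correct and follows the paper's proof essentially step for step: ACQ (Theorem~\ref{thm:ACQ}) identifies $T_{\bar x}(\mathcal X_{\bar\theta})$ with $\ker(\mathbf J_{f_{\bar\theta}}(\bar x))$, you use $\ker(\bar S)=\{\gamma\mid\gamma_y=0\}$, and the column splitting then gives $T_{\bar x}(\mathcal X_{\bar\theta})\cap\ker(\bar S)=\ker(\mathbf J_{f_{\bar\theta},\mathbf h}(\bar x))\times\{0\}$. Your side remark is also right: without ACQ, the inclusion $T_{\bar x}\subseteq\ker\mathbf J$ would still give the ``if'' direction.
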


\begin{proof}
Write $\gamma=(\gamma_{\mathbf h},\gamma_y)$ with $\gamma_{\mathbf h}=(\gamma_h,\gamma_z)$. By Theorem~\ref{thm:ACQ} and Definition~\ref{def:ACQ}, $T_{\bar x}(\mathcal X_{\bar\theta})=\ker(\mathbf J_{f_{\bar\theta}}(\bar x))$, and $\ker(\bar S)=\{\gamma\mid\gamma_y=0\}$. Hence $\gamma\in T_{\bar x}(\mathcal X_{\bar\theta})\cap\ker(\bar S)$ if and only if $\gamma_y=0$ and
\begin{align*}
\mathbf J_{f_{\bar\theta}}(\bar x)\gamma&=\mathbf J_{f_{\bar\theta},\mathbf h}(\bar x)\gamma_{\mathbf h}+\mathbf J_{f_{\bar\theta},y}(\bar x)\gamma_y\\
&=\mathbf J_{f_{\bar\theta},\mathbf h}(\bar x)\gamma_{\mathbf h}=0.
\end{align*}
That is, $T_{\bar x}(\mathcal X_{\bar\theta})\cap\ker(\bar S)=\ker(\mathbf J_{f_{\bar\theta},\mathbf h}(\bar x))\times\{0\}$, which is trivial if and only if $\ker(\mathbf J_{f_{\bar\theta},\mathbf h}(\bar x))$ is trivial.
\end{proof}

\begin{theorem}\label{thm:non-branch-point}
$\bar x$ is not a branch point of $\mathcal X_{\bar\theta}$ with respect to $x\mapsto \bar Sx$.
\end{theorem}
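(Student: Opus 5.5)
By Lemma~\ref{lemma:non-branch-point}, it suffices to show that $\ker(\mathbf J_{f_{\bar\theta},\mathbf h}(\bar x))$ is trivial. Equivalently, take $\gamma=(\gamma_h,\gamma_z,0)$ with $\mathbf J_{f_{\bar\theta}}(\bar x)\gamma=0$ and show that $\gamma_h=0$ and $\gamma_z=0$. I will linearize each family of constraints in \eqref{eq:feasible-set:lifted} at $\bar x=(\bar h,\bar z,0)$ along such a direction. I will then use the resulting linear equations in order: the model constraints pin $\gamma_z$ to a line, the normalization kills $\gamma_z$, and the rotation constraints kill $\gamma_h$.

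\emph{Lifted model constraints.} The constraint $g_{\bar\theta,i}(x)(x^{(i)})^\top=0$ is a product. Its differential at $\bar x$ in direction $\gamma$ is
\[
\big(dg_{\bar\theta,i}\gamma\big)(\bar x^{(i)})^\top+g_{\bar\theta,i}(\bar x)(\gamma^{(i)})^\top .
\]
The second term vanishes since $g_{\bar\theta,i}(\bar x)=A_i\bar z=0$, because $A(\bar\theta)\bar z=0$. The vector $\bar x^{(i)}$ contains $\bar h\neq0$, so it is nonzero. Hence the rank-one matrix $(dg_{\bar\theta,i}\gamma)(\bar x^{(i)})^\top$ vanishes only if $dg_{\bar\theta,i}\gamma=0$. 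Since $g_{\bar\theta,i}$ is linear, by \eqref{eq:model:lifted} this reads
\[
A_i\gamma_z-(I_2\otimes(e_3^\top P_i))\gamma_{y_i}=A_i\gamma_z=0,
\]
using $\gamma_y=0$. Stacking over $i\in[N]$ gives $A(\bar\theta)\gamma_z=0$. By the proof of Proposition~\ref{prop:mle:unique}, $\ker(A(\bar\theta))=\mathrm{span}\{\bar z\}$, so $\gamma_z=c\bar z$ for some scalar $c$.

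\emph{Normalization and rotation constraints.} Differentiating $\|z\|^2=1$ gives $2\bar z^\top\gamma_z=2c\|\bar z\|^2=2c=0$, so $\gamma_z=0$. For $\gamma_h$, I use the rotation constraint \eqref{eq:rot:col1}, $\|c^{(1)}\|^2-h^2=0$. With $\gamma_z=0$, its differential reduces to $-2\bar h\gamma_h$. Since $\bar h=1/\|z_\mathrm{gt}\|\neq0$, this forces $\gamma_h=0$.

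\emph{Block symmetry constraints.} These are quadratic in $u_i=(z,y_i)$, so their differentials are linear in $\gamma_{u_i}=(\gamma_z,\gamma_{y_i})=0$. They therefore impose nothing further and cause no inconsistency. Hence $\ker(\mathbf J_{f_{\bar\theta},\mathbf h}(\bar x))=\{0\}$, and Lemma~\ref{lemma:non-branch-point} gives the claim.

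The only delicate step is the linearization of the product constraints $g_{\bar\theta,i}(x)(x^{(i)})^\top$. One has to note that $g_{\bar\theta,i}(\bar x)=0$ removes the second term, and that $\bar x^{(i)}\neq0$ lets us conclude $dg_{\bar\theta,i}\gamma=0$. After that, everything reduces to the rank-$11$ property of $A(\bar\theta)$ already established in the proof of Proposition~\ref{prop:mle:unique}.
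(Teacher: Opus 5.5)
Your proof is correct and follows the paper's argument step for step: the model constraints force $A(\bar\theta)\gamma_z=0$, so $\gamma_z\in\mathrm{span}\{\bar z\}$; normalization then kills $\gamma_z$; and a column-norm rotation constraint kills $\gamma_h$. The differences are cosmetic: the paper differentiates only the first column $h\,g_{\bar\theta,i}$ of the lifted model constraint and uses the summed constraint $\|z_R\|^2-3h^2=0$, whereas you use the full rank-one product and the single constraint \eqref{eq:rot:col1}; your discussion of the block symmetry constraints is also unnecessary, since extra Jacobian rows can only shrink the kernel.
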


\begin{proof}
Let $\gamma_{\mathbf h}=(\gamma_h,\gamma_z)\in\ker(\mathbf J_{f_{\bar\theta},\mathbf h}(\bar x))$. We first show that $\gamma_z=0$ using the model and normalization constraints, then $\gamma_h=0$, using the rotation constraints.

Recall from~\eqref{eq:model:lifted} and \eqref{eq:affine} that
$$
g_{\bar\theta,i}(x)=[A(\bar\theta)]_iz-\big(I_2\otimes(e_3^\top P_i)\big)y_i.
$$
Since $\bar y=0$, we have $g_{\bar\theta,i}(\bar x)=[A(\bar\theta)]_i\bar z=0$.
Note that the first column of the lifted model constraint $g_{\bar\theta,i}(x)(x^{(i)})^\top=0$ is $hg_{\bar\theta,i}(x)=0$, its partial Jacobian with respect to $\mathbf h$ at $\bar x$ is
\begin{align*}
\mathbf J_{hg_{\bar\theta,i},\mathbf h}(\bar x)
&=\underbrace{g_{\bar\theta,i}(\bar x)}_{0}\mathbf J_{h,\mathbf h}(\bar x)+\bar h\mathbf J_{g_{\bar\theta,i},\mathbf h}(\bar x)\\
&=\bar h\begin{bmatrix}\mathbf J_{g_{\bar\theta,i},h}(\bar x)&\mathbf J_{g_{\bar\theta,i},z}(\bar x)\end{bmatrix}\\
&=\bar h\begin{bmatrix}0&[A(\bar\theta)]_i\end{bmatrix}.
\end{align*}
Since $\gamma_{\mathbf h}$ lies in the kernel and $\bar h\neq0$, we have $[A(\bar\theta)]_i\gamma_z=0$ for all $i\in[N]$, i.e., $A(\bar\theta)\gamma_z=0$. By the proof of Proposition~\ref{prop:mle:unique}, $\ker(A(\bar\theta))=\mathrm{span}\{\bar z\}$, and so $\gamma_z=c\bar z$ for some scalar $c$.

The partial Jacobian of the normalization constraint $\|z\|^2-1=0$ with respect to $\mathbf h$ at $\bar x$ is $\begin{bmatrix}0&2\bar z^\top\end{bmatrix}$. Since $\gamma_{\mathbf h}$ lies in the kernel, we have 
$$
\begin{bmatrix}0&2\bar z^\top\end{bmatrix}\begin{bmatrix}\gamma_h\\\gamma_z\end{bmatrix}=2\bar z^\top\gamma_z=0.
$$
Substituting $\gamma_z=c\bar z$, we have $2c\|\bar z\|^2=2c=0$, which implies $c=0$ and hence $\gamma_z=0$.

Finally, summing~\eqref{eq:rot:col1}--\eqref{eq:rot:col3} gives $\|z_R\|^2-3h^2=0$, where $z_R$ is the (vectorized) rotation component of $z$. 
Since $\gamma_z=0$, only its partial derivative with respect to $h$ remains, which gives $-6\bar h\gamma_h=0$. Since $\bar h\neq0$, we have $\gamma_h=0$. Altogether, $\ker(\mathbf J_{f_{\bar\theta},\mathbf h}(\bar x))$ is trivial, and $\bar x$ is not a branch point by Lemma~\ref{lemma:non-branch-point}.
\end{proof}

\subsection{Restricted Slater condition}\label{appx:restricted-Slater}
Recall the dual slack variable $\bar S$ in~\eqref{eq:slack} and the constraints $f_{\bar\theta}=(f_{\bar\theta,1},\dots,f_{\bar\theta,\bar\ell})$ in \eqref{eq:constraint-f}. Denote the set of Lagrange multipliers such that $(\bar x,\lambda)$ is stationary\footnote{Since $S(\lambda)=\bar S+M(\lambda)$ and $\bar S\bar x=0$ by \eqref{eq:dual-slack-variable} and \eqref{eq:slack}, $S(\lambda)\bar x=0$ if and only if $M(\lambda)\bar x=0$.} by
\begin{align*}
\Lambda&\triangleq\{\lambda\in\mathbb{R}^{\bar\ell}\mid M(\lambda)\bar x=0\},\\ M(\lambda)&\triangleq\sum_{j=1}^{\bar\ell}\lambda_j\nabla^2f_{\bar\theta,j}(\bar x)\in \mathbb{S}^{13+24N}.
\end{align*}

\begin{definition}[{\citet[Definition~4.4]{CAP22}}]\label{def:slater}
The \textit{restricted Slater condition} holds at $(\bar x,0)$ if there exists $\lambda\in\Lambda$ such that $M(\lambda)\vert_{\ker(\bar S)\cap\bar x^\perp}\succ0$.
\end{definition}

Note that $\ker(\bar S)=\{x\mid y=0\}$ involves only the hub variable $\mathbf h=(h,z)$. Let
$$
M_\mathbf{h}(\lambda)\triangleq\sum_{j=1}^{\bar\ell}\lambda_j\nabla^2_\mathbf{hh}f_{\bar\theta,j}(\bar x)\in\mathbb{S}^{13}.
$$ 
We first simplify the condition for our problem.

\begin{lemma}\label{lemma:RS}
The restricted Slater condition holds at $(\bar x,0)$ if there exists $\lambda\in\Lambda$ such that $M_{\mathbf h}(\lambda)|_{\bar{\mathbf h}^\perp}\succ0$.
\end{lemma}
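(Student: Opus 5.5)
The plan is to identify the subspace $\ker(\bar S)\cap\bar x^\perp$ explicitly and then read off the restriction of $M(\lambda)$ to it. By \eqref{eq:slack}, $\ker(\bar S)=\{\gamma=(\gamma_{\mathbf h},\gamma_y)\mid\gamma_y=0\}$. Since $\bar x=(\bar{\mathbf h},0)$ with $\bar{\mathbf h}=(\bar h,\bar z)$, a vector $\gamma\in\ker(\bar S)$ is orthogonal to $\bar x$ if and only if $\gamma_{\mathbf h}^\top\bar{\mathbf h}=0$. Hence
$$
\ker(\bar S)\cap\bar x^\perp=\{(\gamma_{\mathbf h},0)\mid\gamma_{\mathbf h}\in\bar{\mathbf h}^\perp\},
$$
and the map $\gamma_{\mathbf h}\mapsto(\gamma_{\mathbf h},0)$ is a linear isomorphism from $\bar{\mathbf h}^\perp\subseteq\mathbb R^{13}$ onto this subspace.

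Next, I would evaluate the quadratic form of $M(\lambda)$ on such vectors. Partition $M(\lambda)$ into blocks indexed by $(\mathbf h,y)$. For $\gamma=(\gamma_{\mathbf h},0)$, only the $\mathbf{hh}$ block contributes:
$$
\gamma^\top M(\lambda)\gamma=\gamma_{\mathbf h}^\top\Big(\sum_{j}\lambda_j\nabla^2_{\mathbf{hh}}f_{\bar\theta,j}(\bar x)\Big)\gamma_{\mathbf h}=\gamma_{\mathbf h}^\top M_{\mathbf h}(\lambda)\gamma_{\mathbf h}.
$$
This uses only that the $\mathbf{hh}$ principal block of $\nabla^2 f_{\bar\theta,j}(\bar x)$ is $\nabla^2_{\mathbf{hh}}f_{\bar\theta,j}(\bar x)$, which holds by definition of the partial Hessian.

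Finally, take $\lambda\in\Lambda$ with $M_{\mathbf h}(\lambda)|_{\bar{\mathbf h}^\perp}\succ0$. For any nonzero $\gamma\in\ker(\bar S)\cap\bar x^\perp$, we have $\gamma_{\mathbf h}\neq0$ and $\gamma_{\mathbf h}\in\bar{\mathbf h}^\perp$. Therefore $\gamma^\top M(\lambda)\gamma>0$, so $M(\lambda)|_{\ker(\bar S)\cap\bar x^\perp}\succ0$. Since $\lambda\in\Lambda$ already, Definition~\ref{def:slater} is met.

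There is no real obstacle here; the argument is bookkeeping. The only point needing care is that the stationarity requirement $\lambda\in\Lambda$ is the same in both statements and is carried over unchanged. It involves the full $M(\lambda)$ and not just $M_{\mathbf h}$, so it cannot be relaxed to a hub-only condition.
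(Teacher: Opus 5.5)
Your proof is correct and follows the paper's argument essentially line for line. Like the paper, you identify $\ker(\bar S)\cap\bar x^\perp$ with $\{(\gamma_{\mathbf h},0)\mid\gamma_{\mathbf h}\in\bar{\mathbf h}^\perp\}$, note that $\gamma^\top M(\lambda)\gamma=\gamma_{\mathbf h}^\top M_{\mathbf h}(\lambda)\gamma_{\mathbf h}$ on this subspace, and conclude positivity using the same $\lambda\in\Lambda$.
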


\begin{proof}
Let $\gamma=(\gamma_{\mathbf h},\gamma_y)\in\ker(\bar S)\cap\bar x^\perp$ be nonzero. Since $\ker(\bar S)=\{x\mid y=0\}$, we have $\gamma_y=0$, and hence $\gamma_{\mathbf h}\neq0$. Since $\bar x=(\bar{\mathbf h},0)$, $\gamma\in\bar x^\perp$ gives $\bar{\mathbf h}^\top\gamma_{\mathbf h}=0$, i.e., $\gamma_{\mathbf h}\in\bar{\mathbf h}^\perp$. Moreover, since $\gamma^\top M(\lambda)\gamma=\gamma_{\mathbf h}^\top M_{\mathbf h}(\lambda)\gamma_{\mathbf h}$, we have $M_{\mathbf h}(\lambda)|_{\bar{\mathbf h}^\perp}\succ0$ implies $M(\lambda)|_{\ker(\bar S)\cap\bar x^\perp}\succ0$.
\end{proof}

Although Lemma~\ref{lemma:RS} reduces the condition for $\lambda$ such that the restricted Slater condition holds, we still don't know how to find it. Consider the linear space
$$
\mathcal V\triangleq\{V\in\mathbb{S}^{13}\mid V\bar{\mathbf h}=0\}.
$$
We next show that if $M_{\mathbf h}(\Lambda)=\mathcal V$, then we can take $P\in\mathcal V$ the projection matrix onto $\bar{\mathbf h}^\perp$ such that $M_{\mathbf h}(\lambda)=P$ for some $\lambda\in\Lambda$ and that the restricted Slater condition holds.

\begin{theorem}\label{thm:restricted-Slater}
$M_\mathbf h(\Lambda)=\mathcal V$ and the restricted Slater condition holds.
\end{theorem}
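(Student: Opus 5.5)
The plan is to prove the two inclusions $M_{\mathbf h}(\Lambda)\subseteq\mathcal V$ and $\mathcal V\subseteq M_{\mathbf h}(\Lambda)$. The restricted Slater condition then follows at once. Let $P\triangleq I_{13}-\bar{\mathbf h}\bar{\mathbf h}^\top/\|\bar{\mathbf h}\|^2$ be the projection matrix onto $\bar{\mathbf h}^\perp$. Then $P\in\mathcal V$, so $P=M_{\mathbf h}(\lambda)$ for some $\lambda\in\Lambda$. Since $P|_{\bar{\mathbf h}^\perp}$ is the identity and hence positive definite, Lemma~\ref{lemma:RS} gives the claim.

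For $\subseteq$: since $\bar x=(\bar{\mathbf h},0)$, the hub rows of $M(\lambda)\bar x$ equal $M_{\mathbf h}(\lambda)\bar{\mathbf h}$. Hence $\lambda\in\Lambda$ forces $M_{\mathbf h}(\lambda)\bar{\mathbf h}=0$, i.e.\ $M_{\mathbf h}(\lambda)\in\mathcal V$. Note that $\mathcal V\cong\mathbb S(\bar{\mathbf h}^\perp)$ has dimension $12\cdot 13/2=78$.

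For $\supseteq$, I would build explicit multipliers in two families.

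\emph{Model family.} Take the lifted model constraints whose monomial comes from the hub, $g_{\bar\theta,i,k}(x)\,\ell(\mathbf h)$ with $\ell(\mathbf h)=b^\top\mathbf h$. Their Hessian has hub block $2\,\mathrm{sym}(a_{i,k}b^\top)$, where $a_{i,k}=(0,A_{i,k}(\bar\theta))$. Its hub--$y$ block is $\mathrm{sym}$ of $b$ times the $y$-gradient of $g_{\bar\theta,i,k}$.
\begin{itemize}
\item The $y$-rows of $M(\lambda)\bar x$ pick up $\sum\lambda\,(b^\top\bar{\mathbf h})\nabla_y g$. These vanish whenever $b\perp\bar{\mathbf h}$, so I restrict to such $b$.
\item The hub rows give $\sum\lambda\,[a(b^\top\bar{\mathbf h})+b(a^\top\bar{\mathbf h})]$. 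This also vanishes, because $a^\top\bar{\mathbf h}=A_{i,k}\bar z\,\cdot$ (up to scale) $=0$ as $A(\bar\theta)\bar z=0$.
\end{itemize}
Thus this family lies in $\Lambda$ and yields $\{\mathrm{sym}(ab^\top): a\in U,\ b\in\bar{\mathbf h}^\perp\}$. Here $U\triangleq\mathrm{row}([0\ A(\bar\theta)])\subseteq\bar{\mathbf h}^\perp$, and $\dim U=11$ by Proposition~\ref{prop:mle:unique}. Writing $\bar{\mathbf h}^\perp=U\oplus\mathrm{span}\{w\}$, this span is all of $\mathbb S(\bar{\mathbf h}^\perp)$ except the single direction $ww^\top$. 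One can take $w=(\|\bar z\|^2,-\bar h\bar z)$ adjusted to be orthogonal to $U$; as $U\subseteq\{0\}\times\mathbb R^{12}$, $w$ must have a nonzero $h$-component.

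\emph{Rotation family.} It remains to produce one $\lambda\in\Lambda$ whose $M_{\mathbf h}(\lambda)$ has nonzero $ww^\top$ component modulo the model family. The rotation and normalization constraints involve only $\mathbf h$, so their stationarity condition is just $M_{\mathbf h}(\lambda)\bar{\mathbf h}=0$, with no $y$-coupling. I would use the known description of the span of rotation-constraint Hessians satisfying stationarity (as in \citet{WKC26} / \citet{CAP22}, which established restricted Slater for $\mathrm{SO}(3)$). As a concrete first try, take the combination $\sum_{k=1}^3\|c^{(k)}\|^2-3h^2$ together with $\|z\|^2-1$, chosen to annihilate $\bar{\mathbf h}$. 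Its Hessian is diagonal with a nonzero $hh$ entry. Since $U$ has zero $h$-component, its quadratic form on $w$ is nonzero modulo the model family. Adding it gives $ww^\top$, hence all of $\mathcal V$.

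The main obstacle is this last step: verifying that the rotation multipliers genuinely satisfy $M_{\mathbf h}(\lambda)\bar{\mathbf h}=0$ while contributing the missing $ww^\top$ direction. Some care is needed because the translation entries of $\bar z$ enter $\|z\|^2$ but not the rotation constraints. If the simple combination fails, I would instead use the determinant constraints \eqref{eq:rot:det1}--\eqref{eq:rot:det3}, whose Hessians couple $h$ with $\mathrm{vec}(R)$.
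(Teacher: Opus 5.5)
Your skeleton matches the paper's: the trivial inclusion, the model-constraint family, one rotation multiplier for the last direction, then the projection $P$. Your model-family argument is fine. Your $U$ is exactly the paper's $W=(0,\bar z^\perp)$, and your $w$ is already the paper's $\eta$, which is orthogonal to $U$ with no adjustment. The gap is the rotation step, the only nontrivial part, and your concrete candidate cannot work. Write $M_{hh}$ for the top-left entry and $M_{hz}$ for the rest of the first row. The hub Hessian of $\lambda_1(\sum_k\|c^{(k)}\|^2-3h^2)+\lambda_2(\|z\|^2-1)$ is $2\,\mathrm{diag}(-3\lambda_1,(\lambda_1+\lambda_2)I_9,\lambda_2 I_3)$. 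The $h$-row of $M_{\mathbf h}\bar{\mathbf h}=0$ reads $-6\lambda_1\bar h=0$, so $\lambda_1=0$, and then the rotation rows force $\lambda_2=0$. More generally, a diagonal $M\in\mathcal V$ always has $M_{hh}=0$ because $\bar h\neq0$. So ``diagonal with nonzero $hh$ entry'' and ``annihilates $\bar{\mathbf h}$'' are incompatible.

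The obstruction is structural. Put $e_h=(1,0)$ and $e_z=(0,\bar z)$, so $\bar{\mathbf h}=\bar h e_h+e_z$ and, using $\|\bar z\|=1$, $\eta=e_h-\bar h e_z$. For any $M\in\mathcal V$, $M\bar{\mathbf h}=0$ forces $e_h^\top Me_z=-\bar hM_{hh}$ and $e_z^\top Me_z=\bar h^2M_{hh}$, hence $\eta^\top M\eta=M_{hh}(1+\bar h^2)^2$. So you need a stationary multiplier with $M_{hh}\neq0$. Among the constraints of Problem~\ref{prob:lifted}, only the column/row norm constraints contain $h^2$. The $h$-row of stationarity, $M_{hh}\bar h=-M_{hz}\bar z$, can then be balanced only by the determinant constraints $c^{(j)}\times c^{(k)}=hc^{(i)}$. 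The norm constraints have no $h$--$z$ cross terms, and the $h$--$z$ cross terms of $h\,g_{\bar\theta,i,k}$ are a row of $A(\bar\theta)$, which annihilates $\bar z$. Hence neither family works alone; in particular the determinant constraints by themselves would again give $M_{hh}=0$. What is missing is a specific mixture of the two.

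The paper's Lemma~\ref{lemma:RS:rotation} puts $-\tfrac12$ on each $\|c^{(i)}\|^2-h^2$ and the entries of $c^{(i)}_{\mathrm{gt}}$ on $c^{(j)}\times c^{(k)}-hc^{(i)}$. Stationarity follows from $\bar c^{(i)}=\bar h\,c^{(i)}_{\mathrm{gt}}$ and $c^{(j)}_{\mathrm{gt}}\times c^{(k)}_{\mathrm{gt}}=c^{(i)}_{\mathrm{gt}}$. Then $M_{hh}=3$ gives $\eta^\top M_{\mathbf h}(\alpha)\eta=3(\bar h^2+1)^2\neq0$.

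Deferring to \citet{WKC26} or \citet{CAP22} does not substitute for this. The paper uses those references only for ACQ. Any borrowed multiplier would also have to be supported on constraints actually present here ($h^2=1$ is not one of them) and satisfy this specific inequality for this $\eta$.
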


Note that $M_\mathbf h(\Lambda)\subseteq\mathcal V$ is trivial by construction, it suffices to show $\mathcal V\subseteq M_\mathbf h(\Lambda)$. Let 
\begin{align}
W&\triangleq(0,\bar z^\perp)\subseteq\mathbb{R}\times\mathbb{R}^{12}, \label{eq:RS:W}\\
\eta&\triangleq(\|\bar z\|^2,-\bar h\bar z)\in\mathbb{R}\times\mathbb{R}^{12}\label{eq:RS:eta}.
\end{align}
Since both $\bar{\mathbf h}=(\bar h, \bar z)$ and $\eta$ lie in $\mathbb R\times \text{span}\{\bar z\}$, and are orthogonal to each other, they form a basis for this 2-dimensional subspace. Therefore, we have 
\begin{equation}\label{eq:RS:decompose}
\text{span}\{\bar{\mathbf h}\}\oplus\text{span}\{\eta\}\oplus W=\mathbb{R}^{13}.
\end{equation}
Note that the three subspaces are orthogonal to each other.
Before proving Theorem~\ref{thm:restricted-Slater}, we first establish the following two Lemmas.

\begin{lemma}\label{lemma:RS:model}
$\mathrm{sym}(\mu\beta^\top)\in M_{\mathbf h}(\Lambda)$ for all $\mu\in\bar{\mathbf h}^\perp$ and $\beta\in W$.
\end{lemma}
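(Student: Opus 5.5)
The plan is to realize $\mathrm{sym}(\mu\beta^\top)$ using only the lifted model constraints $g_{\bar\theta,i}(x)(x^{(i)})^\top=0$. Among these, I use only the entries that multiply a hub coordinate, i.e.\ the products $g_{\bar\theta,i,k}(x)\,\mathbf h_l$ for $k\in\{1,2\}$ and $l\in[13]$. All rotation, normalization and block-symmetry multipliers, and the multipliers of the products with $y_i$, are set to zero.

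\textbf{Step 1 (Hessians).} By \eqref{eq:model:lifted} and \eqref{eq:affine},
$$g_{\bar\theta,i,k}(x)=a_{ik}^\top z-(e_3^\top P_i)y_{i,k},$$
where $a_{ik}^\top$ is the corresponding row of $A(\bar\theta)$. This is linear in $x$, so $g_{\bar\theta,i,k}(x)\,\mathbf h_l$ is a bilinear form. Its Hessian has two blocks:
\begin{itemize}
\item hub block $2\,\mathrm{sym}\big(e_l(0,a_{ik})^\top\big)\in\mathbb S^{13}$;
\item hub--$y_{i,k}$ cross block with entries $-(e_3^\top P_i)$ in row $l$.
\end{itemize}
It has no $yy$ block.

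\textbf{Step 2 (spanning).} Since $\ker(A(\bar\theta))=\mathrm{span}\{\bar z\}$ (proof of Proposition~\ref{prop:mle:unique}), the row space of $A(\bar\theta)$ is $\bar z^\perp$. Hence any $\beta\in W$ can be written as $\beta=\sum_{i,k}\omega_{ik}(0,a_{ik})$. For a given $\mu\in\bar{\mathbf h}^\perp$, take $\lambda_{ik,l}=\tfrac12\omega_{ik}\mu_l$. By bilinearity, the hub block of $M(\lambda)$ is then $M_{\mathbf h}(\lambda)=\mathrm{sym}(\mu\beta^\top)$.

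\textbf{Step 3 (stationarity, $\lambda\in\Lambda$).} We check $M(\lambda)\bar x=0$ with $\bar x=(\bar{\mathbf h},0)$. Since $\bar y=0$, only the hub columns of $M(\lambda)$ act.
\begin{itemize}
\item Hub rows: $\mathrm{sym}(\mu\beta^\top)\bar{\mathbf h}=\tfrac12\big(\mu\,\beta^\top\bar{\mathbf h}+\beta\,\mu^\top\bar{\mathbf h}\big)$. Here $\beta^\top\bar{\mathbf h}=w^\top\bar z=0$ for $\beta=(0,w)$ with $w\perp\bar z$, and $\mu^\top\bar{\mathbf h}=0$ by assumption. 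So the hub rows vanish.
\item $y_{i,k}$ rows: the cross block contributes $-\tfrac12\,\omega_{ik}\sum_l\mu_l\bar{\mathbf h}_l\,P_i^\top e_3=-\tfrac12\,\omega_{ik}(\mu^\top\bar{\mathbf h})P_i^\top e_3=0$.
\end{itemize}
Hence $\lambda\in\Lambda$ and $\mathrm{sym}(\mu\beta^\top)\in M_{\mathbf h}(\Lambda)$.

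The main obstacle is Step 3. Both orthogonality conditions, $\mu\perp\bar{\mathbf h}$ and $\beta\in W$, must be used to kill the cross terms between the hub and the spokes $y_i$ that the lifted constraints introduce. The remaining work is bookkeeping of the cross-block indices, checking that the coefficient pattern $\omega_{ik}\mu_l$ factors so that $\mu^\top\bar{\mathbf h}$ appears.
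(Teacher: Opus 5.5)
Your proof is correct and follows the paper's argument. Like the paper, you use only the products $\mathbf h_a g_{\bar\theta,i,k}$ with multipliers proportional to $\mu_a$, span $W$ by the rows $(0,[A(\bar\theta)]_{i,k}^\top)$ via $\mathrm{rank}(A(\bar\theta))=11$, and read off the hub block $2\,\mathrm{sym}(\mu\beta_{i,k}^\top)$; the paper differs only cosmetically, treating each $(i,k)$ separately before invoking linearity and checking $M(\lambda)\bar x=0$ through the identity $\nabla^2 f\,\bar x=\nabla f(\bar x)$ and the product rule (yielding $(\mu^\top\bar{\mathbf h})\nabla g_{\bar\theta,i,k}(\bar x)=0$) instead of your explicit block computation.
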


\begin{proof}
We first show that $W$ in \eqref{eq:RS:W} is spanned by the rows of the partial Jacobian of the model constraints $g_{\theta,i}(x)=0$ in~\eqref{eq:model:lifted} with respect to $\mathbf h$. Denote by $g_{\bar\theta,i,k}$ the $k$-th entry of $g_{\bar\theta,i}$ for $k\in\{1,2\}$. Its partial Jacobian with respect to $\mathbf h$ at $\bar x$ is $\beta_{i,k}^\top$, where
$$
\beta_{i,k}\triangleq(0,[A(\bar\theta)]_{i,k}^\top)\in\mathbb R\times\mathbb R^{12}
$$
and $[A(\bar\theta)]_{i,k}$ is the $k$-th row of $[A(\bar\theta)]_i$. Since $A(\bar\theta)\bar z=0$, every row of $A(\bar\theta)$ is orthogonal to $\bar z$, and so $\beta_{i,k}\in W$. Moreover, the rows of $A(\bar\theta)$ span a subspace of dimension $\mathrm{rank}(A(\bar\theta))=11=\dim(\bar z^\perp)$ by the proof of Proposition~\ref{prop:mle:unique}, hence $W=\mathrm{span}\{\beta_{i,k}\}$.

We next construct a multiplier in $\Lambda$. Fix $i\in[N]$, $k\in\{1,2\}$, and let $\mu\in\bar{\mathbf h}^\perp$. Denote by $\mathbf h_a$ the $a$-th entry of $\mathbf h$ for $a\in[13]$. Since $\mathbf h$ is part of $x^{(i)}$, the lifted model constraint $g_{\bar\theta,i}(x)(x^{(i)})^\top=0$ contains $\mathbf h_ag_{\bar\theta,i,k}(x)=0$ for all $a\in[13]$. Let $\lambda'\in\mathbb R^{\bar\ell}$ assign $\mu_a$ to $\mathbf h_ag_{\bar\theta,i,k}$ for each $a\in[13]$, and $0$ to all the other constraints. Since each $\mathbf h_ag_{\bar\theta,i,k}$ is a homogeneous quadratic, its Hessian satisfies $\nabla^2(\mathbf h_ag_{\bar\theta,i,k})(\bar x)\,\bar x=\nabla(\mathbf h_ag_{\bar\theta,i,k})(\bar x)$, and so
\begin{align*}
M(\lambda')\bar x
&=\sum_{a=1}^{13}\mu_a\nabla(\mathbf h_ag_{\bar\theta,i,k})(\bar x)\\
&=\sum_{a=1}^{13}\mu_a\Big(\underbrace{g_{\bar\theta,i,k}(\bar x)}_{0}\nabla\mathbf h_a(\bar x)+\bar{\mathbf h}_a\nabla g_{\bar\theta,i,k}(\bar x)\Big)\\
&=\underbrace{(\mu^\top\bar{\mathbf h})}_{0}\nabla g_{\bar\theta,i,k}(\bar x)=0,
\end{align*}
hence $\lambda'\in\Lambda$.

Finally, we compute $M_{\mathbf h}(\lambda')$. Since $g_{\bar\theta,i,k}(x)=\beta_{i,k}^\top\mathbf h-(e_3^\top P_i)y_{i,k}$, where $y_{i,k}$ is the $k$-th block of $y_i$, we have  $\mathbf h_ag_{\bar\theta,i,k}(x)=(e_a^\top\mathbf h)(\beta_{i,k}^\top\mathbf h)-(e_a^\top\mathbf h)(e_3^\top P_i)y_{i,k}$.
Its partial Hessian with respect to $\mathbf h$ is $e_a\beta_{i,k}^\top+\beta_{i,k}e_a^\top=2\,\mathrm{sym}(e_a\beta_{i,k}^\top)$, and since $\sum_{a=1}^{13}\mu_ae_a=\mu$,
$$
M_{\mathbf h}(\lambda')=2\sum_{a=1}^{13}\mu_a\,\mathrm{sym}(e_a\beta_{i,k}^\top)=2\,\mathrm{sym}(\mu\beta_{i,k}^\top).
$$
Since $\Lambda$ and $M_{\mathbf h}(\cdot)$ are linear, $\mathrm{sym}(\mu\beta_{i,k}^\top)\in M_{\mathbf h}(\Lambda)$ for all $i\in[N]$ and $k\in\{1,2\}$. 
Altogether, since $W=\mathrm{span}\{\beta_{i,k}\}$, $\text{sym}(\mu\beta^\top)\in M_{\mathbf h}(\Lambda)$ for every $\beta\in W$.
\end{proof}

\begin{lemma}\label{lemma:RS:rotation}
Let $\eta$ be defined in~\eqref{eq:RS:eta}.
Then there exists $\alpha\in\Lambda$ such that $\eta^\top M_{\mathbf h}(\alpha)\eta\neq0$
\end{lemma}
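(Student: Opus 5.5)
The plan is to build $\alpha$ explicitly, using only the rotation constraints of Appendix~\ref{appx:rot}. All of them are homogeneous quadratics in $\mathbf h=(h,z)$ and do not involve $y$. For any homogeneous quadratic $f$ we have $\nabla^2 f(\bar x)\,\bar x=\nabla f(\bar x)$, as already used in the proof of Lemma~\ref{lemma:RS:model}. So a multiplier $\alpha$ supported on rotation constraints satisfies $M(\alpha)\bar x=\sum_j\alpha_j\nabla f_{\bar\theta,j}(\bar x)$. Therefore $\alpha\in\Lambda$ exactly when $\alpha$ encodes a linear dependency among the gradients of these constraints at $\bar x$. Such dependencies must exist. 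The homogenized pose set $\{(h,hR,t)\}$ has dimension $7$ in $\mathbb R^{13}$, so the $20$ rotation gradients span at most a $6$-dimensional space. I will pick one dependency by hand and then evaluate the quadratic form of $M_{\mathbf h}(\alpha)$ at $\eta$.

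Write $\bar c^{(k)}$ for the columns of the rotation part of $\bar z$. Since $\bar z=\bar h z_{\mathrm{gt}}$, these satisfy $\|\bar c^{(k)}\|^2=\bar h^2$ and $\bar c^{(2)}\times\bar c^{(3)}=\bar h\,\bar c^{(1)}$ (and cyclically). I take the following weights:
\begin{itemize}
\item $\bar c^{(1)}_k$ on the $k$-th scalar equation of \eqref{eq:rot:det1};
\item $+\bar h/2$ on \eqref{eq:rot:col1};
\item $-\bar h/2$ on each of \eqref{eq:rot:col2} and \eqref{eq:rot:col3};
\item zero on everything else.
\end{itemize}
The first group contributes the polynomial $\bar c^{(1)\top}(c^{(2)}\times c^{(3)}-h c^{(1)})$. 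Its gradient at $\bar x$ has blocks
\[
(-\bar h^2,\ -\bar h\bar c^{(1)},\ \bar h\bar c^{(2)},\ \bar h\bar c^{(3)},\ 0)
\]
in the order $(h,c^{(1)},c^{(2)},c^{(3)},t)$. The column-norm terms contribute $\tfrac{\bar h}{2}(-2\bar h,2\bar c^{(1)},0,0,0)-\tfrac{\bar h}{2}(-2\bar h,0,2\bar c^{(2)},0,0)-\tfrac{\bar h}{2}(-2\bar h,0,0,2\bar c^{(3)},0)$. Adding the two, each block cancels: the $h$-block gives $-\bar h^2-\bar h^2+2\bar h^2=0$, and the $c^{(k)}$-blocks cancel pairwise. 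Hence $M(\alpha)\bar x=0$ and $\alpha\in\Lambda$.

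Next I evaluate $\eta^\top M_{\mathbf h}(\alpha)\eta$, which is twice the associated quadratic form evaluated at $\eta$. Since $\|\bar z\|=1$, we have $\eta=(1,-\bar h\bar z)$, i.e. $\gamma_h=1$, $\gamma^{(k)}=-\bar h\bar c^{(k)}$, and $\gamma_t=-\bar h\bar t$; the $t$-part does not enter these constraints. Substituting and using the identities above gives the following terms:
\begin{itemize}
\item $\bar c^{(1)\top}(\gamma^{(2)}\times\gamma^{(3)})=\bar h^2\,\bar c^{(1)\top}(\bar h\bar c^{(1)})=\bar h^5$;
\item $-\gamma_h\,\bar c^{(1)\top}\gamma^{(1)}=\bar h^3$;
\item each column-norm term carries $\|\gamma^{(k)}\|^2-\gamma_h^2=\bar h^4-1$, so with weights $+\tfrac{\bar h}{2},-\tfrac{\bar h}{2},-\tfrac{\bar h}{2}$ they total $-\tfrac{\bar h}{2}(\bar h^4-1)$.
\end{itemize}
Altogether,
\[
\tfrac12\,\eta^\top M_{\mathbf h}(\alpha)\eta=\bar h^5+\bar h^3-\tfrac{\bar h}{2}(\bar h^4-1)=\tfrac{\bar h}{2}(\bar h^2+1)^2 .
\]
This is nonzero because $\bar h=1/\|z_{\mathrm{gt}}\|>0$.

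The main obstacle is the first step: finding a dependency whose quadratic form does not vanish on $\eta$. The column-norm relations alone give $M_{\mathbf h}$ of the form $\mathrm{diag}(\cdot,\cdot I,0)$, and those fail stationarity once $\bar t\neq0$. Mixing in the determinant constraints \eqref{eq:rot:det1} is what lets the $h$- and $c$-blocks cancel. Once $\alpha$ is fixed, the rest is direct bookkeeping with $\bar c^{(2)}\times\bar c^{(3)}=\bar h\bar c^{(1)}$ and $\|\bar c^{(k)}\|=\bar h$, and I would recheck only the signs in the two block sums.
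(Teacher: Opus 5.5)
Your proof is correct and follows the same route as the paper. Both build an explicit multiplier on the column-norm and determinant constraints, get stationarity from $\nabla q_\alpha(\bar{\mathbf h})=0$ via homogeneity, and evaluate $q_\alpha(\eta)$ directly as a positive multiple of $(\bar h^2+1)^2$. The only difference is the multiplier: you use the single block \eqref{eq:rot:det1} with column weights $(+\tfrac{\bar h}{2},-\tfrac{\bar h}{2},-\tfrac{\bar h}{2})$, whereas the paper uses all three cyclic determinant blocks with uniform column weight $-\tfrac12$. Up to the factor $\bar h$, the paper's multiplier is exactly the cyclic sum of yours, which is why its value $\tfrac32(\bar h^2+1)^2$ is three times your $\tfrac12(\bar h^2+1)^2$.
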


\begin{proof}
The goal is to find a multiplier $\alpha$ such that $\eta^\top M_{\mathbf h}(\alpha)\eta\neq0$.
Recall from Appendix~\ref{appx:rot} the rotation constraints $r_j(\mathbf h)=0$ for $j\in[20]$, where $\mathbf h=(h,z)$. Let $\alpha\in\mathbb R^{\bar\ell}$ assign $\alpha_j$ to $r_j$ for $j\in[20]$ and $0$ to all the other constraints, and let
$$
q_\alpha(\mathbf h)\triangleq\sum_{j=1}^{20}\alpha_jr_j(\mathbf h).
$$
Since $q_\alpha$ involves only $\mathbf h$ and $\bar x=(\bar{\mathbf h},0)$, we have $\alpha\in\Lambda$ if and only if $(\nabla^2q_\alpha)\bar{\mathbf h}=0$. Moreover, since each $r_j$ is a homogeneous quadratic, $(\nabla^2q_\alpha)\bar{\mathbf h}=\nabla q_\alpha(\bar{\mathbf h})$ and $\eta^\top M_{\mathbf h}(\alpha)\eta=\eta^\top(\nabla^2q_\alpha)\eta=2q_\alpha(\eta)$. Therefore, it suffices to find $\alpha$ such that (\romannumeral1) $\nabla q_\alpha(\bar{\mathbf h})=0$ and (\romannumeral2) $q_\alpha(\eta)\neq0$.

We choose $\alpha$ using the column constraints~\eqref{eq:rot:col1}--\eqref{eq:rot:col3} and the determinant constraints~\eqref{eq:rot:det1}--\eqref{eq:rot:det3}. Recall from Appendix~\ref{appx:rot} that $c^{(i)}$ is the $i$-th column of $R$, and denote by $\bar c^{(i)}$ and $c^{(i)}_\mathrm{gt}$ the $i$-th columns of $\bar R$ and $R_\mathrm{gt}$, the rotation components of $\bar z$ and $z_\mathrm{gt}$, respectively. Then~\eqref{eq:rot:col1}--\eqref{eq:rot:col3} read $\|c^{(i)}\|^2-h^2=0$, and~\eqref{eq:rot:det1}--\eqref{eq:rot:det3} read $c^{(j)}\times c^{(k)}-hc^{(i)}=0$ for $(i,j,k)\in\text{cyclic}((1,2,3))$. Let $\alpha$ assign $-\tfrac12$ to each $\|c^{(i)}\|^2-h^2$, the entries of $c^{(i)}_\mathrm{gt}\in\mathbb R^3$ to the three scalar constraints in $c^{(j)}\times c^{(k)}-hc^{(i)}$, and $0$ to the remaining rotation constraints. Then
\begin{equation}\label{eq:RS:constructed-rot}
\begin{aligned}
q_\alpha(\mathbf h)&=\sum_{(i,j,k)}(c^{(i)}_\text{gt})^\top(c^{(j)}\times c^{(k)}-hc^{(i)})\\
&\quad -\frac12(\|c^{(i)}\|^2-h^2),
\end{aligned}
\end{equation}
where the sum is over $(i,j,k)\in\text{cyclic}((1,2,3))$.

For (\romannumeral1), since $q_\alpha$ does not involve $t$, it suffices to show that its partial derivatives with respect to $h$ and each $c^{(i)}$ vanish at $\bar{\mathbf h}$. Since $\bar z=z_\mathrm{gt}/\|z_\mathrm{gt}\|$ and $\bar h=1/\|z_\mathrm{gt}\|$, we have $\bar c^{(i)}=\bar h\,c^{(i)}_\mathrm{gt}$. By~\eqref{eq:RS:constructed-rot},
$$
\nabla_hq_\alpha(\mathbf h)=-\sum_{i=1}^3(c^{(i)}_\mathrm{gt})^\top c^{(i)}+3h,
$$
and hence
\begin{equation}\label{eq:RS:rot-multiplier}
\nabla_hq_\alpha(\bar{\mathbf h})=-\bar h\sum_{i=1}^3\|c^{(i)}_\mathrm{gt}\|^2+3\bar h=-3\bar h+3\bar h=0.
\end{equation}
Note that $c^{(i)}$ appears in exactly four terms of~\eqref{eq:RS:constructed-rot}:
\begin{align*}
-h(c^{(i)}_\text{gt})^\top c^{(i)}, \quad
-\frac12\|c^{(i)}\|^2, \quad \\
(c^{(j)}_\text{gt})^\top(c^{(k)}\times c^{(i)}),\quad 
(c^{(k)}_\text{gt})^\top(c^{(i)}\times c^{(j)}).
\end{align*}
Recall the cross product property $a^\top(b\times c)=b^\top(c\times a)=c^\top(a\times b)$, the corresponding derivatives with respect to $c^{(i)}$ are
$$
-hc^{(i)}_\text{gt}, \quad
-c^{(i)}, \quad 
c^{(j)}_\text{gt}\times c^{(k)}, \quad
c^{(j)}\times c^{(k)}_\text{gt}.
$$
Since $\bar c^{(i)}=\bar h c_\text{gt}^{(i)}$ and $c^{(j)}_\text{gt}\times c^{(k)}_\text{gt}=c^{(i)}_\text{gt}$, we have 
$$
\nabla_{c^{(i)}}q_\alpha(\bar{\mathbf h})=-\bar hc^{(i)}_\text{gt}-\bar hc^{(i)}_\text{gt}+\bar hc^{(i)}_\text{gt}+\bar hc^{(i)}_\text{gt}=0.
$$ 
Together with \eqref{eq:RS:rot-multiplier}, $\nabla q_\alpha(\bar{\mathbf h})=0$, i.e., $\alpha\in\Lambda$.

For (\romannumeral2), since $\eta=(\|\bar z\|^2,-\bar h\bar z)$ and with $\bar c^{(i)}=\bar h c_\text{gt}^{(i)}$ and $\|\bar z\|^2=1$, we have
\begin{align*}
q_\alpha(\eta)&=\sum_{(i,j,k)}(c^{(i)}_\text{gt})^\top(\bar h\bar c^{(j)}\times\bar h\bar c^{(k)}+\|\bar z\|^2\bar h\bar c^{(i)})\\
&\quad -\frac12(\bar h^2\|\bar c^{(i)}\|^2-\|\bar z\|^4)\\
&=\sum_{(i,j,k)}(c^{(i)}_\text{gt})^\top(\bar h^2c^{(j)}_\text{gt}\times\bar h^2c^{(k)}_\text{gt}+\bar h^2c^{(i)}_\text{gt})\\
&\quad -\frac12(\bar h^4-1)\\
&=\sum_{(i,j,k)}\bar h^4(c^{(i)}_\text{gt})^\top(c^{(j)}_\text{gt}\times c^{(k)}_\text{gt})+\bar h^2(c^{(i)}_\text{gt})^\top c^{(i)}_\text{gt}\\
&\quad -\frac12(\bar h^4-1)\\
&=3(\bar h^4+\bar h^2-\frac12\bar h^4+\frac12)=\frac32(\bar h^2+1)^2>0.
\end{align*}
Hence $q_\alpha(\eta)\neq0$, i.e., $\eta^\top M_{\mathbf h}(\alpha)\eta\neq0$.
\end{proof}

Lemma~\ref{lemma:RS:model} constructs a family of matrices in $M_{\mathbf h}(\Lambda)$ by the lifted model constraints. Lemma~\ref{lemma:RS:rotation} supplies another matrix by the rotation constraints. We next use these to prove Theorem~\ref{thm:restricted-Slater} that together, they span the linear space $\mathcal V$.

\begin{proof}
Let $\lambda\in\Lambda$. Since $M(\lambda)\bar x=0$, where $\bar x=(\bar{\mathbf h}, 0)$, we have $M_{\mathbf h}(\lambda)\bar{\mathbf h}=0$. Hence $M_{\mathbf h}(\lambda)\in\mathcal V$, and so $M_{\mathbf h}(\Lambda)\subseteq\mathcal V$. We proceed to show $\mathcal V\subseteq M_{\mathbf h}(\Lambda)$. 

Let $V\in\mathcal V$. Next, let $\alpha\in\Lambda$ satisfy Lemma~\ref{lemma:RS:rotation} and
$$
V'\triangleq V-c\,M_{\mathbf h}(\alpha),\qquad c\triangleq\frac{\eta^\top V\eta}{\eta^\top M_{\mathbf h}(\alpha)\eta},
$$
which is well defined since $\eta^\top M_{\mathbf h}(\alpha)\eta\neq0$. Since $V\in\mathcal V$ and 
\begin{equation}\label{eq:RS:Mhalpha}
M_{\mathbf h}(\alpha)\in M_{\mathbf h}(\Lambda)\subseteq\mathcal V,
\end{equation}
$V'$ also lies in $\mathcal V$. By construction,
$$
\eta^\top V'\eta=\eta^\top V\eta-\frac{\eta^\top V\eta}{\eta^\top M_{\mathbf h}(\alpha)\eta}(\eta^\top M_{\mathbf h}(\alpha)\eta)=0.
$$
Let $\hat\eta\triangleq\eta/\|\eta\|$ and $\hat\beta\triangleq V'\hat\eta$. Since $V'$ is symmetric with $V'\bar{\mathbf h}=0$ and $\eta^\top V'\eta=0$, we have
\begin{align*}
\hat\beta^\top\bar{\mathbf h}&=\hat\eta^\top V'\bar{\mathbf h}=0,\\
\hat\beta^\top\hat\eta&=\hat\eta^\top V'\hat\eta=0.
\end{align*}
By~\eqref{eq:RS:decompose}, $\hat\beta\perp\bar{\mathbf h}$ and $\hat\beta\perp\eta$ imply $\hat\beta\in W$. Moreover, $\hat\eta\in\bar{\mathbf h}^\perp$ since $\eta\perp\bar{\mathbf h}$. Hence by Lemma~\ref{lemma:RS:model}, we have
\begin{equation}\label{eq:RS:V1}
V_1\triangleq2\,\mathrm{sym}(\hat\eta\hat\beta^\top)=\hat\eta\hat\beta^\top+\hat\beta\hat\eta^\top\in M_{\mathbf h}(\Lambda).
\end{equation}
Let $V_2\triangleq V'-V_1$. Then by $\|\hat\eta\|=1$, 
\begin{align*}
V_2\bar{\mathbf h}&=V'\bar{\mathbf h}-\hat\eta(\hat\beta^\top\bar{\mathbf h})-\hat\beta(\hat\eta^\top\bar{\mathbf h})=0,\\
V_2\hat\eta&=V'\hat\eta-\hat\eta(\hat\beta^\top\hat\eta)-\hat\beta(\hat\eta^\top\hat\eta)=\hat\beta-0-\hat\beta=0.
\end{align*}
Since $V_2$ is symmetric and $\mathrm{span}\{\bar{\mathbf h},\eta\}\subseteq\ker(V_2)$, we have $\mathrm{range}(V_2)=\ker(V_2)^\perp\subseteq\mathrm{span}\{\bar{\mathbf h},\eta\}^\perp=W$ by~\eqref{eq:RS:decompose}. By the spectral theorem, write $V_2=\sum_k\sigma_kw_kw_k^\top$ with $\sigma_k\neq0$, where the eigenvectors $w_k$ lie in $\mathrm{range}(V_2)\subseteq W\subseteq\bar{\mathbf h}^\perp$. Taking $\mu=\beta=w_k$ in Lemma~\ref{lemma:RS:model} gives $w_kw_k^\top=\mathrm{sym}(w_kw_k^\top)\in M_{\mathbf h}(\Lambda)$, and hence $V_2\in M_{\mathbf h}(\Lambda)$. Together with $M_\mathbf h(\alpha)\in  M_{\mathbf h}(\Lambda)$ by \eqref{eq:RS:Mhalpha} and $V_1\in M_{\mathbf h}(\Lambda)$ by \eqref{eq:RS:V1}, 
$$
V=c\,M_{\mathbf h}(\alpha)+V_1+V_2\in M_{\mathbf h}(\Lambda).
$$ 
That is, $\mathcal V\subseteq M_{\mathbf h}(\Lambda)$ and so $M_{\mathbf h}(\Lambda)=\mathcal V$.

Finally, let $P\in \mathcal V$ be the projection matrix onto $\bar{\mathbf h}^\perp$. Since $\omega^\top P\omega=\|\omega\|^2>0$ for all $\omega\in\bar{\mathbf h}^\perp$ with $\omega\neq0$, we have $P\vert_{\bar{\mathbf h}^\perp}\succ0$. Since $P\in\mathcal V=M_{\mathbf h}(\Lambda)$, there exists some $\lambda\in\Lambda$ such that $M_\mathbf h(\lambda)=P$. By Lemma~\ref{lemma:RS}, the restricted Slater condition holds.
\end{proof}

\subsection{Proof of Corollary~\ref{cor:stable}}\label{appx:cor:stable}
\begin{proof}
Let $p^*_{\mathrm{QCQP}}$ and $p^*_{\mathrm{SDP}}$ be the optimal values of Problem~\ref{prob:lifted} and \eqref{eq:sdp}, respectively.
By Theorem~\ref{thm:stable}, \eqref{eq:sdp} is exact, i.e., it is tight and has a unique rank 1 solution $x^*(x^*)^\top$ with $x^*=(h^*,z^*,y^*)$. Since $x^*$ is feasible for Problem~\ref{prob:lifted} with objective value $p^*_{\mathrm{SDP}}$, and $p^*_{\mathrm{SDP}}\leq p^*_{\mathrm{QCQP}}$ because \eqref{eq:sdp} is a relaxation of Problem~\ref{prob:lifted}, it is a global minimizer. Conversely, any global minimizer $x$ of Problem~\ref{prob:lifted} gives an optimal solution $xx^\top$ of \eqref{eq:sdp} by tightness, hence $xx^\top=x^*(x^*)^\top$, i.e., $x\in[x^*]$. By Lemma~\ref{lemma:feasible-set:lifted:restricted}, $h^*\neq0$, and so by the proofs of Theorem~\ref{thm:naive} and~\ref{thm:lifted}, $z^*/h^*$ is the unique global minimizer of Problem~\ref{prob:mle}.
\end{proof}

\section{SDP reformulations}\label{secC}
\setcounter{equation}{0}

\begin{lemma}[Grone's Theorem~{\citep[Theorem 10.1]{VA15}}]\label{lemma:grone}
Let $X\in\mathbb{S}^n$ whose corresponding variable dependency graph $\mathcal{G}$ is a chordal graph. 
Denote $X^{(i)}$ the principal submatrices of $X$ where their corresponding variable dependency graphs are the maximal cliques of $\mathcal{G}$. Then $X$ is positive semidefinite completable if and only if $X^{(i)}\succeq0$ for all $i$.
\end{lemma}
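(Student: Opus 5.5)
The statement just given is Grone's Theorem, quoted from \citet[Theorem 10.1]{VA15}. The plan is therefore to cite it, but I also sketch how I would prove it for the star-graph structure we actually use. Necessity is immediate. If $X$ has a positive semidefinite completion $\hat X$, then each $X^{(i)}$ is a principal submatrix of $\hat X$, and principal submatrices of a PSD matrix are PSD. Sufficiency is the substantive direction.

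For sufficiency in the general chordal case, I would induct on the number of maximal cliques using a perfect elimination ordering. Every chordal graph admits a clique tree. Removing a leaf clique $C_k$ leaves a chordal graph whose maximal cliques are the remaining ones, so by induction its index set has a PSD completion $\hat X'$. The leaf clique meets the rest only in the separator $S=C_k\cap C_{\mathrm{parent}}$. This reduces everything to the two-block case: given a partially specified matrix
\[
\begin{pmatrix} A & B & ?\\ B^\top & C & D\\ ?^\top & D^\top & E\end{pmatrix},
\]
where both $\left(\begin{smallmatrix}A&B\\B^\top&C\end{smallmatrix}\right)\succeq0$ and $\left(\begin{smallmatrix}C&D\\D^\top&E\end{smallmatrix}\right)\succeq0$, we must fill the unknown block. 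I would fill it with $BC^\dagger D$. A generalized Schur complement argument then shows the completed matrix is PSD. The key facts are $\mathrm{range}(B^\top)\subseteq\mathrm{range}(C)$ and $\mathrm{range}(D)\subseteq\mathrm{range}(C)$, which follow from the two PSD blocks. With these, the completed matrix factors as a Gram matrix of vectors that are conditionally independent given the middle block.

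In our application the graph is the star of Figure~\ref{fig:star}. Every pair of cliques intersects exactly in the hub $\mathbf h$, so the induction is just repeated gluing along the common $13\times13$ block $X_{\mathbf{hh}}$. Concretely, I would take each $X^{(i)}$ as a Gram matrix of vectors $(a,b_i)$, with $a$ realizing $X_{\mathbf{hh}}$, which is possible since all the $X^{(i)}_{\mathbf{hh}}$ coincide. I would then set the missing blocks $X_{y_iy_j}$ to $X_{y_i\mathbf h}X_{\mathbf{hh}}^\dagger X_{\mathbf h y_j}$. This is the Gram matrix of the projections of the $b_i$ onto the span of $a$, plus mutually orthogonal residuals, so the completion is PSD.

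The main obstacle is the singular separator case: $X_{\mathbf{hh}}$ is typically rank one at the optimum, so ordinary inverses are unavailable. This is exactly where the range inclusions and the pseudoinverse Schur complement lemma are needed. I would verify those carefully and otherwise defer to \citet{VA15}.
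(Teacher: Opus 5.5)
Your proposal is correct: the paper gives no proof of this lemma and simply cites \citet[Theorem 10.1]{VA15}, which is exactly your primary plan. Your supplementary sketch is also sound and goes beyond what the paper supplies. It has two parts: the leaf-clique induction on a clique tree, reducing to the two-block completion with $BC^\dagger D$, and, for the star, the explicit completion $X_{y_iy_j}=X_{y_i\mathbf h}X_{\mathbf{hh}}^\dagger X_{\mathbf h y_j}$ obtained from Gram vectors with mutually orthogonal residuals. The singular (rank-one) hub causes no difficulty in the Gram version, because $\mathcal A(\mathcal A^\top\mathcal A)^\dagger\mathcal A^\top$ is the orthogonal projector onto the column span of $\mathcal A$ for any $\mathcal A$. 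So the range-inclusion argument is only needed in the general two-block congruence.
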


\subsection{Proof of Theorem~\ref{thm:chordal}}\label{appx:proof:chordal}

\begin{proof}
Denote by $\mathcal F$ and $\mathcal F_\mathrm{c}$ the feasible sets of~\eqref{eq:sdp} and~\eqref{eq:chordal:sdp}, and consider the mapping
$$
\phi:\mathcal F\to\mathcal F_\mathrm{c},\quad X\mapsto(X^{(1)},\dots,X^{(N)})
$$
that sends $X$ to its clique blocks. We first show that $\phi$ is well-defined and preserves the objective value. Let $X\in\mathcal F$. Since $X\succeq0$, its principal submatrices satisfy $X^{(i)}\succeq0$. Since $\tilde C_{g_{\theta,i},k}$ and $\tilde C_{r_j}$ contain all the nonzero entries of $C_{g_{\theta,i},k}$ and $C_{r_j}$,
\begin{align*}
&\tilde C_{g_{\theta,i},k}\bullet X^{(i)}=C_{g_{\theta,i},k}\bullet X=0,\\
&\tilde C_{r_j}\bullet X^{(1)}_{\mathbf{hh}}=C_{r_j}\bullet X=0.
\end{align*}
The remaining constraints of~\eqref{eq:chordal:sdp} hold since $X^{(1)}_{zz}=X_{zz}$, $X^{(i)}_{u_iu_i}=X_{u_iu_i}$, and $X^{(i)}_{\mathbf{hh}}=X_{\mathbf{hh}}$ for all $i\in[N]$, hence $\phi(X)\in\mathcal F_\mathrm{c}$. Since $\mathrm{trace}(X_{yy})=\sum_{i=1}^N\mathrm{trace}(X^{(i)}_{y_iy_i})$, $\phi$ preserves the objective value.

We next show that $\phi$ is bijective between optimal sets. It is injective because the optimal sets of \eqref{eq:sdp} is a singleton by Theorem~\ref{thm:stable}. For surjectivity, let $(X^{(1)},\dots,X^{(N)})\in\mathcal F_\mathrm{c}$ be an optimal solution of \eqref{eq:chordal:sdp}. Then by Lemma~\ref{lemma:grone}, it has a completion $X\in\mathcal F$ with the same objective value, hence optimal of \eqref{eq:sdp}. 
Therefore, $\phi$ is a bijection between the optimal sets, and \eqref{eq:chordal:sdp} is equivalent to~\eqref{eq:sdp} by Definition~\ref{def:prob-equivalence}.
\end{proof}

\subsection{Proof of Theorem~\ref{thm:qrr}}\label{appx:proof:qrr}

\begin{proof}
Denote by $\mathcal F_\mathrm r$ and $\mathcal F_\mathrm c$ the feasible sets of~\eqref{eq:reduced:sdp} and~\eqref{eq:chordal:sdp}, and consider the mapping
\begin{align*}
&\phi:\mathcal F_\mathrm r\to\mathcal F_\mathrm c,\\
&(\check X^{(1)},\dots,\check X^{(N)})\mapsto(\Xi_1\check X^{(1)}\Xi_1^\top,\dots,\Xi_N\check X^{(N)}\Xi_N^\top).
\end{align*}

We first show that $\phi$ is well-defined and preserves the objective value. Let $(\check X^{(1)},\dots,\check X^{(N)})\in\mathcal F_\mathrm r$. $\Xi_i\check X^{(i)}\Xi_i^\top\succeq0$ since $\check X^{(i)}\succeq0$ for all $i\in[N]$. Since~\eqref{eq:reduced:sdp} is obtained by substituting~\eqref{eq:qrr:param} into~\eqref{eq:chordal:sdp}, the objective value remains the same and each constraint holds if and only if the corresponding constraint of \eqref{eq:chordal:sdp} holds at $\Xi_i\check X^{(i)}\Xi_i^\top$. 

We next show that $\phi$ is bijective between optimal sets. For injectivity, let $\Xi_i\check X^{(i)}\Xi_i^\top=\Xi_i\check X'^{(i)}\Xi_i^\top$, then multiplying by $\Xi_i^\top$ on the left and $\Xi_i$ on the right, and canceling $\Xi_i^\top\Xi_i$, which is invertible since $\Xi_i$ has full column rank, gives $\check X^{(i)}=\check X'^{(i)}$. For surjectivity, let $(X^{(1)},\dots,X^{(N)})$ be an optimal solution of \eqref{eq:chordal:sdp}. 
By Theorem~\ref{thm:stable} and~\ref{thm:chordal}, $X^{(i)}$ are the clique blocks of the unique rank 1 optimal solution of~\eqref{eq:sdp}, which are also rank 1. That is, $X^{(i)}=x^{(i)}(x^{(i)})^\top$, and since $x^{(i)}$ also corresponds to the subvariable of the optimal solution of Problem~\ref{prob:lifted}, we have $Q_ix^{(i)}=0$ by~\eqref{eq:qrr:Q}. Thereby, $x^{(i)}=\Xi_i\check x^{(i)}$ for some $\check x^{(i)}\in\mathbb R^{34}$. 
Then $(\check x^{(1)}(\check x^{(1)})^\top,\dots,\check x^{(N)}(\check x^{(N)})^\top)$ is feasible for~\eqref{eq:reduced:sdp}, maps to $(X^{(1)},\dots,X^{(N)})$ with the same objective value by the first paragraph, hence is optimal. Therefore, $\phi$ is a bijection between the optimal sets, and~\eqref{eq:reduced:sdp} is equivalent to~\eqref{eq:chordal:sdp} by Definition~\ref{def:prob-equivalence}.
\end{proof}

\subsection{Proof of Proposition~\ref{prop:hub:rank}}\label{appx:proof:hub:rank}

\begin{proof}
Suppose that $\check X^{(i)}$ has rank 1 for all $i\in[N]$. $\check X^{(1)}_{\mathbf{hh}}$ is a principal submatrix of $\check X^{(1)}$ and is nonzero since $\mathrm{trace}(\check X^{(1)}_{zz})=1$, hence is also rank 1.

Conversely, suppose that $\check X^{(1)}_{\mathbf{hh}}$ has rank 1, then $\check X^{(i)}_{\mathbf{hh}}$ is also  rank 1 since $\check X^{(i)}_{\mathbf{hh}}=\check X^{(1)}_{\mathbf{hh}}$ for all $i\in[N]$. 
Fix $i\in[N]$ and let $X^{(i)}\triangleq\Xi_i\check X^{(i)}\Xi_i^\top$ where $\Xi_i$ is defined in Section~\ref{sec:reduce:qrr}. Since $\Xi_i$ has full column rank, $X^{(i)}$ has the same rank as $\check X^{(i)}$, it suffices to show that $X^{(i)}$ has is 1.

Since $X^{(i)}_{\mathbf{hh}}=\check X^{(i)}_{\mathbf{hh}}$ by \eqref{eq:qrr:xi:hub}, $X^{(i)}_{\mathbf{hh}}$ is also rank 1. Write $X^{(i)}_{\mathbf{hh}}=\alpha_{\mathbf h}\alpha_{\mathbf h}^\top$ with $\alpha_{\mathbf h}=(\alpha_h,\alpha_z)\in\mathbb R^{13}$, where $\|\alpha_z\|^2=\mathrm{trace}(X^{(i)}_{zz})=1$. 
We claim that 
\begin{equation}\label{eq:hub:form}
X^{(i)}=\begin{pmatrix}\alpha_{\mathbf h}\alpha_{\mathbf h}^\top&\alpha_{\mathbf h}\alpha_{y_i}^\top\\\alpha_{y_i}\alpha_{\mathbf h}^\top&*\end{pmatrix}
\end{equation}
for some $\alpha_{y_i}\in\mathbb{R}^{24}$.
Let $\beta_{\mathbf h}\in\mathbb R^{13}$ with $\alpha_{\mathbf h}^\top\beta_{\mathbf h}=0$, and let $\beta\triangleq(\beta_{\mathbf h},0)\in\mathbb R^{37}$. Then
$$
\beta^\top X^{(i)}\beta=\beta_{\mathbf h}^\top X^{(i)}_{\mathbf{hh}}\beta_{\mathbf h}=(\alpha_{\mathbf h}^\top\beta_{\mathbf h})^2=0,
$$
which implies $X^{(i)}\beta=0$ since $X^{(i)}\succeq0$, and in particular $X^{(i)}_{y_i\mathbf h}\beta_{\mathbf h}=0$. That is, every row of $X^{(i)}_{y_i\mathbf h}$ is orthogonal to every vector orthogonal to $\alpha_{\mathbf h}$, and hence is a multiple of $\alpha_{\mathbf h}^\top$. Collecting these multiples into $\alpha_{y_i}\in\mathbb R^{24}$ gives $X^{(i)}_{y_i\mathbf h}=\alpha_{y_i}\alpha_{\mathbf h}^\top$, as claimed. 

We first construct a candidate\footnote{We call $M$ a candidate if it is in the form of \eqref{eq:hub:form} and equals $\Xi_i\check M\Xi_i^\top$ for some $\check M$ that is feasible for~\eqref{eq:reduced:sdp}.} for $X^{(i)}$.
Let $\alpha\triangleq(\alpha_{\mathbf h},\alpha_{y_i})\in\mathbb R^{37}$ and $Y\triangleq\alpha\alpha^\top$, which is rank 1 and is of the form~\eqref{eq:hub:form}. We first show that $Y=\Xi_i\check Y\Xi_i^\top$ for some $\check Y$ feasible for~\eqref{eq:reduced:sdp}, i.e., $Y$ is a candidate of $X^{(i)}$. Since the columns of $\Xi_i$ span $\ker(Q_i)$, we have $Q_i\Xi_i=0$ and hence 
\begin{equation}\label{eq:QandXrelation}
Q_iX^{(i)}=Q_i\Xi_i\check X^{(i)}\Xi_i^\top=0. 
\end{equation}
Note that for $j\in[13]$, the $j$-th column of~\eqref{eq:hub:form} is
$$
X^{(i)}e_j=\begin{pmatrix}\alpha_{\mathbf h}(\alpha_{\mathbf h})_j\\\alpha_{y_i}(\alpha_{\mathbf h})_j\end{pmatrix}=(\alpha_{\mathbf h})_j\,\alpha,
$$
where $e_j\in\mathbb R^{37}$ is the $j$-th standard basis vector.
Then by \eqref{eq:QandXrelation}, $Q_iX^{(i)}e_j=(\alpha_{\mathbf h})_jQ_i\alpha=0$. Since $\alpha_{\mathbf h}\neq0$, we have $Q_i\alpha=0$, that is, $\alpha$ lies in $\ker(Q_i)$ and is spanned by the columns of $\Xi_i$. Write $\alpha=\Xi_i\check\alpha$ for some $\check\alpha\in\mathbb R^{34}$, and $Y=\Xi_i\check Y\Xi_i^\top$ with $\check Y\triangleq\check\alpha\check\alpha^\top\in\mathbb S^{34}_+$.

We next check the constraints of \eqref{eq:reduced:sdp} at $\check Y$. By the proof of Theorem~\ref{thm:qrr}, it suffices to check the constraints of~\eqref{eq:chordal:sdp} at $Y$. 
The constraints on the hub block hold since $Y_{\mathbf{hh}}=X^{(i)}_{\mathbf{hh}}$. The lifted model constraints read $\tilde Q_iX^{(i)}=0$ by \eqref{eq:qrr:model}, and they hold at $Y$ since $\tilde Q_i\alpha=0$, as $\tilde Q_i$ consists of the first two rows of $Q_i$. 
For the block symmetry constraint, which requires the $12\times12$ blocks of $Y$ indexed by $z$, $y_{i,1}$, and $y_{i,2}$ to be symmetric, write $\alpha_{y_i}=(\alpha_{y_i,1},\alpha_{y_i,2})$ with $\alpha_{y_i,k}\in\mathbb R^{12}$. By~\eqref{eq:hub:form}, $X^{(i)}_{zy_{i,k}}=\alpha_z\alpha_{y_i,k}^\top$, which is symmetric by the block symmetry constraint on $X^{(i)}$. Multiplying $\alpha_z\alpha_{y_i,k}^\top=\alpha_{y_i,k}\alpha_z^\top$ by $\alpha_z$ on the right and using $\|\alpha_z\|=1$ gives $\alpha_{y_i,k}=c_k\alpha_z$ where $c_k\triangleq\alpha_{y_i,k}^\top\alpha_z\in\mathbb R$. Hence all the blocks
\begin{align*}
Y_{zz}&=\alpha_z\alpha_z^\top,\\
Y_{zy_{i,k}}&=c_k\,\alpha_z\alpha_z^\top,\\
Y_{y_{i,k}y_{i,l}}&=c_kc_l\,\alpha_z\alpha_z^\top,\quad k,l\in\{1,2\},
\end{align*}
are symmetric. Therefore, we have that $\check Y$ is feasible such that $Y$ is a candidate for $X^{(i)}$, but not necessarily optimal yet.

We next show that $Y$ has the smallest objective value among all candidates. Let $Y'\neq Y$ be a candidate for $X^{(i)}$. Since both $Y$ and $Y'$ are of the form~\eqref{eq:hub:form}, they differ only in the lower right block. We claim that $Y'_{y_iy_i}-Y_{y_iy_i}\succeq0$, i.e., $w^\top Y'_{y_iy_i}w\geq(\alpha_{y_i}^\top w)^2$ for all $w\in\mathbb R^{24}$. Let $s\triangleq-\alpha_{y_i}^\top w$ and $\tilde w\triangleq(s\alpha_{\mathbf h}/\|\alpha_{\mathbf h}\|^2,w)\in\mathbb R^{37}$. Since $Y'\succeq0$ and is of the form~\eqref{eq:hub:form}, we have
\begin{align*}
0\leq\tilde w^\top Y'\tilde w
&=s^2+2s(\alpha_{y_i}^\top w)+w^\top Y'_{y_iy_i}w\\
&=(\alpha_{y_i}^\top w)^2-2(\alpha_{y_i}^\top w)^2+w^\top Y'_{y_iy_i}w\\
&=w^\top(Y'_{y_iy_i}-Y_{y_iy_i})w,
\end{align*}
as claimed. Since $Y'\neq Y$, the matrix $Y'_{y_iy_i}-Y_{y_iy_i}\succeq0$ is nonzero, and so $\mathrm{trace}(Y'_{y_iy_i})>\mathrm{trace}(Y_{y_iy_i})$. That is, every candidate other than $Y$ has a strictly larger objective value, hence $Y$ is the only optimal candidate. Altogether, $X^{(i)}=Y=\alpha\alpha^\top$ is rank 1.
\end{proof}

\subsection{Proof of Proposition~\ref{prop:recover}}\label{appx:proof:recover}
\begin{proof}
Let $p_3^*$ and $p_\text{r}^*$ be the optimal values of Problem~\ref{prob:lifted} and \eqref{eq:reduced:sdp}, respectively. 
By Proposition~\ref{prop:hub:rank}, $\check X^{(i)}$ has rank 1 for all $i\in[N]$. Since $\Xi_i$ has full column rank and preserves the hub block, $X^{(i)}\triangleq\Xi_i\check X^{(i)}\Xi_i^\top=x^{(i)}(x^{(i)})^\top$ for some $x^{(i)}=(h,z,y_i)$, where the sign of each $x^{(i)}$ is chosen such that its hub part is $(h,z)$. Let $x\triangleq(h,z,y)$ with $y=(y_1,\dots,y_N)$. By the well-definedness part of the proof of Theorem~\ref{thm:qrr}, $(X^{(1)},\dots,X^{(N)})$ is feasible for \eqref{eq:chordal:sdp}, and since $X^{(i)}=x^{(i)}(x^{(i)})^\top$, its constraints read the constraints of Problem~\ref{prob:lifted} at $x$. Hence $x$ is feasible for Problem~\ref{prob:lifted} with objective value $p^*_{\mathrm r}=\|y\|^2\geq p^*_3$. Since~\eqref{eq:reduced:sdp} is a relaxation of Problem~\ref{prob:lifted}, $p_\text{r}^*\leq p_3^*$. Altogether, $p_3^*=p_\text{r}^*$, and hence $x$ is a global minimizer of Problem~\ref{prob:lifted}.

For the second part, write $y_i=v_i\otimes z$ as in the proof of Lemma~\ref{lemma:feasible-set:lifted:restricted}, such that $(h,z,v)\in\mathcal W'_\theta$ in \eqref{eq:W-normalized}. Since $e_3^\top P_iz\neq0$ for all $i\in[N]$ and $h\neq0$, the argument in the proof of Theorem~\ref{thm:ACQ} with $\bar\theta$ replaced by $\theta$ gives $z/h\in\mathcal Z$ with objective value $\|v\|^2=\|y\|^2=p^*_3$ for Problem~\ref{prob:mle}. 
Since the mappings in the proofs of Theorem~\ref{thm:naive} and~\ref{thm:lifted} are well-defined and preserve the objective value for any $\theta$, Problem~\ref{prob:lifted} is a relaxation of Problem~\ref{prob:mle}, and hence $z/h$ is a global minimizer of Problem~\ref{prob:mle}.
\end{proof}

\end{appendices}

\bibliography{sn-bibliography}

\end{document}